\newtheorem{prob}{Problem}
\newtheorem{thm}{Theorem}[section]{\bfseries}{\itshape}
\newtheorem{lem}{Lemma}[section]{\bfseries}{\itshape}
{\bfseries}{\itshape}
\newtheorem{rema}{Remark}[section]{\bfseries}{\itshape}
\newtheorem{coro}{Corollary}[section]{\bfseries}{\itshape}
\newtheorem{defi}{Definition}[section]{\bfseries}{\itshape}
\DeclareMathOperator{\trace}{Trace}
\DeclareMathOperator{\var}{Var}
\newcommand{\edit}[1]{\textcolor{blue}{#1}}
\newcolumntype{b}{X}
\newcolumntype{s}{>{\hsize=.5\hsize}X}
\newcolumntype{t}{>{\hsize=.3\hsize}X}
\renewcommand*{\@opargbegintheorem}[3]{\trivlist
  \item[\hskip \labelsep{\bfseries #1\ #2}] \textbf{(#3)}\ \itshape}
\begin{document}

\title{Fair Data Representation for Machine Learning \\ at the Pareto Frontier}

\author{\name Shizhou Xu \email shzxu@ucdavis.edu \\
       \addr Department of Mathematics\\
       University of California Davis\\
       Davis, CA 95616-5270, USA
       \AND
       \name Thomas Strohmer \email strohmer@math.ucdavis.edu \\
       \addr Department of Mathematics\\
       Center of Data Science and Artificial Intelligence Research\\
       University of California Davis\\
       Davis, CA 95616-5270, USA}
       
\editor{Manuel Gomez-Rodriguez}

\maketitle

\begin{abstract}
As machine learning powered decision-making becomes increasingly important in our daily lives, it is imperative to strive for fairness in the underlying data processing. We propose a pre-processing algorithm for fair data representation via which supervised learning results in estimations of the Pareto frontier between prediction error and statistical disparity. In particular, the present work applies the optimal affine transport to approach the post-processing Wasserstein barycenter characterization of the optimal fair $L^2$-objective supervised learning via a pre-processing data deformation. Furthermore, we show that the Wasserstein geodesics from the conditional (on sensitive information) distributions of the learning outcome to their barycenter characterize the Pareto frontier between $L^2$-loss and the average pairwise Wasserstein distance among sensitive groups on the learning outcome. Numerical simulations underscore the advantages: (1) the pre-processing step is compositive with arbitrary conditional expectation estimation supervised learning methods and unseen data; (2) the fair representation protects the sensitive information by limiting the inference capability of the remaining data with respect to the sensitive data; (3) the optimal affine maps are computationally efficient even for high-dimensional data.
\end{abstract}

\begin{keywords}
statistical parity, equalized odds, Wasserstein barycenter, Wasserstein geodesics, 
 conditional expectation estimation
\end{keywords}

\section{Introduction}

Our society is increasingly influenced by artificial intelligence as (direct or indirect) decision-making processes become more reliant on statistical inference and machine learning. The potentially significant long-term impact from sequences of automated (facilitate of) decision-making has brought large concerns about bias and discrimination in machine learning \cite{angwin2022machine, sweeney2013discrimination}.  Machine learning based on unbiased algorithms can naturally inherit the historical biases that exist in data and hence reinforce the bias via automated decision-making process~\cite{calders2013unbiased}.

One straightforward partial remedy is to exclude the sensitive variables from the data set used in the learning and decision process. But such exclusion merely eliminates disparate treatment, which refers to direct discrimination, and leaves disparate impact, which refers to unintended or indirect discrimination, remaining in both data and learning outcome \cite{feldman2015certifying}. Examples of the legal doctrine of disparate impact include Griggs v.\ Duke Powers Co.\ \cite{blumrosen1972strangers} and Ricci v.\ DeStefano \cite{adamson2011ricci}, where the decision is based on factors that are strongly correlated with race, such as intelligence qualification in the former and the racially disproportionate test result in the latter, are ruled illegal by the US supreme court. As a result, along with the trending development of automated decision making, the need for more sophisticated but practical techniques has made fairness in machine learning an important research area \cite{executive2014big}.

Two important but potentially conflicting goals of fair machine learning are {\em group fairness}, which aims to achieve similarity in predictions conditioned on sensitive information, and {\em individual fairness}, which aims for similar treatment of similar individuals regardless of the sensitive information. The present work targets an important definition in group fairness: {\em statistical parity} \cite{dwork2012fairness}, because it is closely related to disparate impact and hence long-term structural influence \cite{zhou2021bias}, while individual fairness focuses more on the short-term individual consequence. 
In the remainder of this paper, fairness and statistical parity are used interchangeably\footnote{There are many other notions of fairness, such as equalized odds or equal opportunity, which all have their benefits and shortcomings~\cite{chouldechova2018frontiers}. A discussion of the advantages or disadvantages of the different concepts of fairness is beyond the scope of this paper.}.

Before further discussing statistical parity, we note that fairness in machine learning should not be defined by a single condition without considering the application context. The goal of the present work is to provide theoretically reliable and explainable tools to help practitioners obtain the optimal (w.r.t.\ utility) solutions at any chosen statistical disparity level, provided one chooses to adopt statistical parity (or limited statistical dependence between the learning outcome and the sensitive information) as a meaningful fairness definition in one's particular application context.

Remark~\ref{remark1} below provides a more detailed discussion on statistical parity, namely how the utility optimization solves some major insufficiency of the original statistical parity definition and improves statistical parity to proportional equality, a fairness concept similar to equity in modern ethics which can be traced back to Aristotle and Plato \cite{aristotle1984complete, cooper1997plato}.

\begin{rema}[Statistical parity enhanced by utility optimization]\label{remark1}
Statistical parity is one of the most important definitions of group fairness. It has advantages such as (1)~legal support on mitigating adverse impact and (2) the long-term effect resulting from the enforced involvement of minority groups or diversity in learning outcome via affirmative action \cite{hu2018short}. On the other hand, there are three major criticisms about statistical parity that are often mentioned, e.g.\ see~\cite{dwork2012fairness,hardt2016equality}: (1)~reduced utility, (2)~self-fulfilling prophecy, (3)~subset targeting. However, we notice that the first two are insufficiencies with respect to utility. Therefore, the proposed method mitigates these two insufficiencies.
\begin{itemize}
\item[1] (Utility) The development of the Pareto frontier allows us to achieve a desirable statistical disparity level with theoretically provable minimum (hence necessary) utility sacrifice. Equivalently, practitioners can choose a tolerable utility sacrifice level so that the Pareto frontier will provide a learning outcome with the minimum statistical disparity while not violating the utility sacrifice tolerance.
\item[2] (Self-fulfilling prophecy) As mentioned in \cite{dwork2012fairness,hardt2016equality}, self-fulfilling prophecy results from random, careless, or malicious selection in minority groups. But the barycenter characterization method guarantees the optimal fair model to make good selections in all sensitive groups to maximize utility. Section \ref{s:Challenge and Contribution} contribution point 4 and Section \ref{s:prelim general distribution case} provides, respectively, the intuitive and technical explanation of how the utility maximization enforces the model to give similar learning outcomes to data points sharing relatively (within their sensitive groups) similar qualifications. For example, if race is the sensitive information and an admission test score is the only qualification variable, a barycenter-characterized optimal fair admission model would give admission to the same percentage of top-score students in each of their racial groups.
\end{itemize}
Interestingly, the interpretation is consistent with the philosophical definition of fairness involving proportional equality: a model is fair (with respect to the sensitive information) if it distributes proportional chance or prediction to proportionally qualified independent variables within each of the sensitive groups. 
\end{rema}

Beginning with \cite{dwork2012fairness}, there is now a sizable body of research studying fair machine learning solutions. The resulting approaches can be categorized into the following: (1) pre-processing: deform data before training to mitigate sensitive information in the learning outcome \cite{calmon2017optimized, kamiran2012data}; (2) in-processing: implement the definition of fairness in the training process by penalizing unfair outcome \cite{berk2017convex, zafar2017fairness}; (3) post-processing: enforce the definition of fairness directly on the learning outcome \cite{hardt2016equality, jiang2020wasserstein}.

In recent years, the post-processing approach has received significant attention due to the following remarkable result: the optimal fair distribution of supervised learning, such as classification \cite{jiang2020wasserstein} and regression \cite{chzhen2020fair, gouic2020projection}, can be characterized as the Fr\'echet mean of the learning outcome marginals on the Wasserstein space, which is also known as the Wasserstein barycenter in the optimal transport literature. (See Remark~\ref{r:name of marginals} for more details on learning outcome marginals.) The following remark provides an intuition of the Wasserstein ($\mathcal{W}_2$) barycenter characterization, on which we develop our theoretical results and algorithms.

\begin{rema}[Intuition of Wasserstein barycenter characterization]\label{r:Intuition of Wasserstein Barycenter Characterization}
The Fr\'echet mean is the closest point to a set of points in a metric space and, therefore, a generalization of the mean on the Euclidean space to general metric spaces such as the Wasserstein space. Intuitively, one can consider the barycenter (Fr\'echet mean in Wasserstein space) characterization of optimal fair learning outcome as an analog of representing a set of points by their average, which thereby optimally (with respect to total moving distance) removes the disparity among those points, except that each point is now in Wasserstein space, and hence a distribution. 
See Section \ref{s:Challenge and Contribution} contribution point 4 below for more details.
\end{rema}

Despite the theoretical elegance of the post-processing barycenter characterization, challenges remain in theory and practice (see Section \ref{s:Challenge and Contribution} for a detailed explanation of the challenges), especially compared to pre-processing or data representation methods.

Fair machine learning using a pre-processing approach has been considered in~\cite{calmon2017optimized,feldman2015certifying, hajian2012methodology, silvia2020general, kamiran2012data}. While the Wasserstein barycenter provides a mathematically rigorous characterization of the post-processing optimal learning outcome, optimal fair data representation for general supervised learning models still lacks a theoretical characterization. See, for example, \cite[Section 3.4, 3.5]{chouldechova2018frontiers} for more details on the current challenges in fair data representation design for general machine learning models beyond classification, not to mention data representations that provide the optimal trade-off between accuracy and fairness.

The goal of the present work is to develop an optimal fair data representation characterization so that any supervised learning model, which aims to estimate the conditional expectation, trained via the fair data representation results in a fair estimation of the post-processing Wasserstein barycenter characterized optimal fair learning outcome. The ultimate goal is to develop a method that enjoys both the mathematically rigorous characterization of post-processing and the flexibility of pre-processing.

\subsection{Optimization Problems with Sensitive Variable Independence Constraint} \label{s:Theoretical Contribution}

The statistical parity constraint for supervised learning or data representation in a nutshell is a constraint on the dependence between the learning outcome and a chosen sensitive variable. Equivalently, the constraint limits the ability to access or reverse engineer the sensitive variable from the learning outcome or data representation. Therefore, although the theory and methods in the present work aim to solve current challenges in machine learning fairness, they can also be useful in other areas where sensitive or undesirable information needs to be eliminated within the existing learning outcome or data. One example of such an area other than fair machine learning is machine (feature) unlearning. It starts from \cite{cao2015towards} and now has a sizable body of research works.

Here, we summarize the constrained optimization problems solved in the present work. We prove existence (and uniqueness, if possible) results via a constructive characterization approach so that an explicit formula of the solutions becomes available. Practitioners and researchers interested in limiting the statistical dependence between the learning outcome or data representation and certain feature variables can directly refer to the corresponding section for results. We leave the underlying motivations resulting from machine learning fairness to the following two subsections. 

In Section \ref{s:Post-Processing}, we target the following problem: \begin{prob}[Optimal fair $L^2$-objective learning outcome]\label{prob:Optimal Fair $L^2$-objective Learning Outcome}
\begin{equation} \label{eq:post-processing constrained optimization for optimal learning outcome}
\inf_{f \in L^2(\mathcal{X} \times \mathcal{Z},\mathcal{Y})} \{||Y - f(X,Z)||_2^2 : f(X,Z) \perp Z\}
\end{equation}
\end{prob}
Here, $Y$ is the dependent variable, and $f(X,Z)$ is an estimator that uses the independent variable $X$ and sensitive variable $Z$ to estimate $Y$. The loss function aims to maximize utility by minimizing the $L^2$-norm between $Y$ and $f(X,Z)$: $$||Y - f(X,Z)||_2^2 = \int_{\Omega} ||Y - f(X,Z)||^2 d\mathbb{P}.$$ $(\Omega,\Sigma,\mathbb{P})$ is a probability space. For $S \in \{X,Y,Z\}$, $S: \Omega \rightarrow \mathcal{S}$ is a random variable (equivalently a measurable function) from $\Omega$ to the state space $\mathcal{S}$. $||\cdot||$ denotes the Euclidean norm.  The constraint $f(X,Z) \perp Z$ guarantees that the final result is independent of the sensitive information $Z$ and hence satisfies statistical parity. Finally, the admissible function space $L^2(\mathcal{X} \times \mathcal{Z},\mathcal{Y})$ is the space of all square-integrable measurable functions from $\mathcal{X} \times \mathcal{Z}$ to $\mathcal{Y}$. (Our proof shows Problem \ref{prob:Optimal Fair $L^2$-objective Learning Outcome} does not change if one allows all measurable functions $\mathcal{X} \times \mathcal{Z}$ to $\mathcal{Y}$.) The reason of allowing all measurable functions in our problem setting is due to the recent development of deep neural networks that are capable of estimating arbitrary measurable functions.

In Section \ref{s:Pareto Frontier}, we relax the above strict independence constraint by applying a quantification of statistical disparity: the {\em Wasserstein disparity}, which is the average pairwise Wasserstein distance among conditional (on $Z$) distributions of $f(X,Z)$, denoted by $D(f(X,Z),Z)$. It has the following desirable properties: (1) $D(f(X,Z)) = 0$ if and only if $f(X,Z) \perp Z$. (2) The larger $D$ is, the more disparities there are among the marginals (w.r.t. $Z$) of $f(X,Z)$. (3) $D$ has a meaningful interpretation in physics as the minimum expected amount of work required to remove the distributional discrepancy between two randomly chosen sensitive groups on the learning outcome. Therefore, fixing a disparity tolerance level $d \in [0,\infty)$, \begin{prob}[Optimal $L^2$-objective learning Pareto frontier]\label{prob:Optimal $L^2$-objective Learning Pareto Frontier}
\begin{equation}
\inf_{f \in L^2(\mathcal{X} \times \mathcal{Z},\mathcal{Y})} \{||Y - f(X,Z)||_2^2 : D(f(X,Z),Z) < d\}
\end{equation}
\end{prob}
gives us the corresponding Pareto optimal solution. That is, if one wants a lower $L^2$-loss than provided by the infimum in Problem \ref{prob:Optimal $L^2$-objective Learning Pareto Frontier}, then it is necessary to increase the tolerance level $d$. Equivalently, if one wants to lower the tolerance level $d$, then it is necessary to sacrifice more $L^2$-loss than the infimum.

In Section \ref{s:Optimal Fair Data Representation}, we provide a theoretical characterization of the solution to \begin{prob}[Optimal fair data representation for conditional expectation estimation]\label{prob:Optimal Fair Data Representation for Conditional Expectation Estimation}
\begin{equation}\label{eq:optimal fair data representation problem}
\inf_{(\tilde{X}, \tilde{Y}) \in \mathcal{D}} \{ ||Y - \mathbb{E}(\tilde{Y}|\tilde{X})||^2_2 : \tilde{X}, \mathbb{E}(\tilde{Y}|\tilde{X},Z) \perp Z\},
\end{equation}
\end{prob}
where $\mathcal{D}$ is the admissible data representation set we define later. Here, the objective function aims to maximize the potential utility remaining within the deformed data $(\tilde{X},\tilde{Y})$ by minimizing the $L^2$ distance between the perfect estimator $\mathbb{E}(\tilde{Y}|\tilde{X})$ on $(\tilde{X},\tilde{Y})$ and the original $Y$, so that better estimation of $\mathbb{E}(\tilde{Y}|\tilde{X})$ leads to better prediction of $Y$. The constraint $\tilde{X}, \mathbb{E}(\tilde{Y}|\tilde{X},Z) \perp Z$ guarantees: (1)~$f(\tilde{X}) \perp Z$ for $\forall f: \mathcal{X} \rightarrow \mathcal{Y}$, such that any estimator of $E(\tilde{Y}|\tilde{X})$ is independent of $Z$; (2)~The perfect adversarial estimator $\mathbb{E}(\tilde{Y}|\tilde{X},Z)$ is independent of $Z$, so that a better estimation of $E(\tilde{Y}|\tilde{X},Z)$ leads to more independence of $Z$ (alignment between the training objective and independence constraint).  In addition, one may choose the following alternative constraints according to the application context: (1)~$\tilde{X} \perp Z$, which guarantees $f(\tilde{X}) \perp Z$ for all measurable $f$ as mentioned above; (2)~$(\tilde{X}, \tilde{Y}) \perp Z$, which guarantees any (adversarial) supervised or unsupervised learning on $(\tilde{X},\tilde{Y})$ to be independent of $Z$. The first alternative is useful if only measurable functions of $X$ are allowed, whereas the second should be applied when one does not know which features are dependent or independent. See Section \ref{s:Post-processing and Pre-processing Approach} for a more detailed derivation and explanation of the data representation objective function and constraints.

\subsection{Challenges and Contributions in Machine Learning Fairness} \label{s:Challenge and Contribution}

Now, we go back to the motivation behind the optimization problems listed above: fair machine learning. We first summarize the limitations of the current post-processing characterization and the current methods based on it to estimate the optimal fair learning outcome. 

\begin{enumerate}
\item The post-processing barycenter characterization lacks theoretical and computational generalization to high-dimensional data spaces, such as text or image spaces. From a theoretical perspective, the current works \cite{chzhen2020fair, gouic2020projection, silvia2020general} focus on classification and 1-dimensional regression. From a computational perspective, the current works apply the coupling of cumulative distribution functions (cdf) of the learning outcome sensitive conditionals to find the barycenter and the inverse of the cdf to compute the optimal transport map. Both the coupling and the inverse of the cdf are computationally expensive. Furthermore, since the inverse of the cdf cannot be generalized to high-dimensional spaces, the current methods lack the generalization to supervised learning with high-dimensional dependent variables.

Due to the recent development of generative AI models, it is now important to have fair machine learning methods for arbitrarily high-dimensional data. We hope the present work on the $L^2$ space can be a starting point for fair machine learning or data representation on more general spaces for high-dimensional data.

\item The current post-processing barycenter characterization lacks both theoretical and computational generalization to (an estimation of) the optimal trade-off, also known as the Pareto frontier, between prediction accuracy and fairness. In theory, there is a lack of characterization of the Pareto frontier (optimal trade-off) between utility and fairness. Current works on the Pareto frontier, such as \cite{silvia2020general}, apply tight inequalities based on the convexity of distance metrics to suggest the optimal trade-off coincide with the Wasserstein geodesic path. While such inequalities are tight for a broad type of metrics on the space of probability measures, they are {\em not tight} for the Wasserstein metric. Hence, the inequalities are not able to extend the mathematically rigorous Wasserstein barycenter characterization of the optimal fair learning outcome to a Pareto frontier. From a computational perspective, current methods, such as \cite{silvia2020general}, apply interpolation between the inverses of the sensitive conditional cdf's (more specifically, interpolating the data points that share the same image under the sensitive conditional cdf's) to estimate the geodesics. In addition to the drawbacks mentioned  above, the inverse of the cdf also does not come with an explicit form, which makes the computation of an interpolation between two cdf inverses even more cumbersome.

\item The post-processing nature of the characterization requires explicit or implicit sensitive information in the training and decision-making process. More specifically, in order to apply the barycenter characterization to find the optimal fair learning outcome or to make predictions to newly incoming data, one needs the following steps: (1) Estimate the conditional expectation and obtain its conditional distributions with respect to the sensitive information; (2) Find the Wasserstein barycenter of the sensitive conditionals of the conditional expectation estimation or the learning outcome; (3) Compute the optimal transport maps from each sensitive conditional to the barycenter; (4) Apply each transport map to the conditional with the matched sensitive information. Here, not only does the trained model still inherit unfairness, but it is also clear that sensitive information needs to be attached to both the dependent variable or incoming data and its learning outcome or prediction, until the very last post-processing step of finding the barycenter comes to the rescue. Hence, we say that the characterization has a post-processing nature. As a result, the user needs access to the sensitive information of each individual incoming data at every step during the learning process. Such a strong access to sensitive information makes the supervised learning process vulnerable to attack and sensitive information leakage.

The post-processing nature of the characterization also suffers from the lack of flexibility in model selection, modification, and composition. For model selection and modification, a practitioner would have to perform the post-processing step for every model and every modification in order to compare the corresponding optimal fair learning outcomes. See Table \ref{time table} for more details on the additive computational cost of the post-processing approach compared to the one-time cost of the proposed pre-processing approach. For model composition, we consider the simple example $task_2 \circ task_1$ where $task_i, i \in \{1,2\}$ are trained supervised learning models. In practice, there is a good chance that $task_1$ and $task_2$ belong to different practitioners or organizations, denoted by practitioners $1$ and $2$, respectively. Therefore, to protect sensitive information from practitioner $2$, practitioner $1$ will perform the post-processing step to obtain a fair learning outcome and provide it as an input variable for the training task of practitioner $2$. But unless $task_2$ needs no more input variables other than the dependent variables of $task_1$ (in that case, $task_1$ would be fair data representation design), still practitioner $2$ needs full access to the sensitive variable attached to its input data, which includes the desensitized $task_1$ output and other input variables. Such attachment makes the post-processing step performed by practitioner $1$ meaningless. Considering the recent development of decentralized learning in practice, such drawback in model composition makes a model-independent fair data representation more applicable than a post-processing solution.

\item Many of the current fair machine learning methods are proposed without utility guarantee or explainability. Such a lack of utility guarantee or explainability prevents the study of fair machine learning from practical use. For instance, Wells Fargo \cite{zhou2021bias} concluded recently that current fair machine learning methods are black-box methods, and hence they hesitate to adopt fair machine learning techniques.
\end{enumerate}

We provide a road map of the tools that we have developed in response to each of the listed challenges and how the present work combines all the tools to provide (exact solution and estimation of) the fair data representation at the Pareto frontier.

\begin{enumerate}
\item In response to the theoretical part of the first challenge, Lemma \ref{l:Optimal Fair $L^2$-Objective Supervised Learning Characterization} in Section \ref{s:Post-Processing} provides a characterization (with explicit construction) of the exact solution to Problem \ref{prob:Optimal Fair $L^2$-objective Learning Outcome} (the optimal fair $L^2$-objective learning). The result shows that the infimum loss value of Problem \ref{prob:Optimal Fair $L^2$-objective Learning Outcome} can be nicely decomposed into two parts: (1) $L^2$ orthogonal projection loss and (2) independence projection loss. Also, the result now allows the data spaces $\mathcal{X}, \mathcal{Y}, \mathcal{Z}$ to be $[k]^d, \mathbb{N}^d, [0,l]^d$, or $\mathbb{R}^d$ for arbitrary dimension $d < \infty$. 

To address the challenge of computing the Wasserstein barycenter in high-dimensional data spaces~\cite{altschuler2022wasserstein}, we propose a method that applies affine transport maps to find the {\em optimal affine estimation} of the post-processing optimal fair $L^2$-objective supervised learning outcome with an arbitrarily finite-dimensional dependent variable, which responds to the first challenge listed above. In particular, by restricting admissible transport maps to be affine and making a corresponding relaxation to the fairness constraint, we derive a relaxed version of Problem \ref{prob:Optimal Fair $L^2$-objective Learning Outcome}, stated as Problem \ref{prob:optimal affine estimation of barycenter problem}. Applying the optimal affine transport maps \cite{agueh2011barycenters}, Definition \ref{d:Post-processing Pseudo-barycenter} introduces the post-processing {\em pseudo-barycenter}, Lemma \ref{l:Post-processing Pseudo-barycenter Characterization in Gaussian Case} shows the proposed pseudo-barycenter coincides with the true barycenter when the sensitive conditionals are Gaussian, and finally, Theorem \ref{th:Optimal Affine Estimation of Barycenter: Pseudo-barycenter} proves that the pseudo-barycenter is the optimal affine estimation of the true barycenter in the general conditional distribution case and provides the estimation error. Optimal affine transport and pseudo-barycenter have the advantage of computational efficiency, compared to the current methods, due to the explicit matrix form of the transport map and the nearly closed-form solution to the pseudo-barycenter. 

The importance of optimal affine maps encompasses much more than a solution to the first challenge. The optimal affine maps together with McCann interpolation \cite{mccann1997convexity} help us in obtaining an explicit form of the geodesic path characterization of the Pareto frontier in Section 4. More importantly, Section 5 shows that optimal affine maps and the pseudo-barycenter are necessary tools to overcome the post-processing nature of the Wasserstein barycenter characterization by exploiting the linearity of conditional expectation and thereby generating optimal fair data representations. 

\item In Section 4, we prove an exact characterization of the solution to Problem \ref{prob:Optimal $L^2$-objective Learning Pareto Frontier} (the optimal utility-parity trade-off or Pareto frontier) in response to the theoretical part of the second challenge. In particular, Theorem \ref{th:Geodesics Characterization of the Pareto Frontier} shows that, when utility loss and disparity are quantified respectively by the $L^2$ distance (between the true outcome $Y$ and the prediction $\hat{Y} = f(X,Z)$) and the average pairwise $\mathcal{W}_2$ distance among the sensitive conditionals of $\hat{Y}$, the optimal trade-off happens if and only if the conditionals of $\hat{Y}$ travel along the Wasserstein geodesic path from the conditionals of $\mathbb{E}(Y|X,Z)$ to their barycenter. Therefore, we say that the Pareto frontier is on the Wasserstein space. Corollary \ref{corr:Pareto Optimal Fair L2-objective Learning} then derives an explicit form of the Pareto optimal solution to Problem \ref{prob:Optimal $L^2$-objective Learning Pareto Frontier}. The result is a natural extension to the post-processing Wasserstein barycenter characterization of the optimal fair learning outcome: the barycenter characterization coincides with the point at zero disparity on the Pareto frontier. Interestingly, our result shows that the Pareto frontier is linear.  

To solve the computational challenge of the geodesic path, Remark \ref{r:Linear Interpolation Formula for Geodesic Path} applies McCann interpolation together with the optimal affine maps and the pseudo-barycenter to derive a computationally efficient (nearly) closed-form formula to estimate the Pareto frontier, which results in Algorithm \ref{a:independent}.

\item In response to the third challenge, the present work proposes in Section \ref{s:Post-processing and Pre-processing Approach} Problem \ref{prob:Optimal Fair Data Representation for Conditional Expectation Estimation} (optimal fair data representation problem), which makes the objective function and the fairness (statistical parity) constraint {\em model-independent} and therefore suitable for fair data representation design. More specifically, by applying the Minkowski inequality, we use an objective function to maximize the potential utility remaining in the data. On the other hand, a fair data representation should provide a fairness guarantee to arbitrary $L^2$-objective supervised learning models. Therefore, the present work proposes a pre-processing fairness constraint to guarantee fairness in the learning outcome of arbitrary $L^2$-objective models trained via the fair data representation. 

In Section \ref{s:Optimal Fair Data Representation}, Lemma \ref{l:Characterization of Optimal Fair Data Representation} first provides a characterization of the exact solution to Problem \ref{prob:Optimal Fair Data Representation for Conditional Expectation Estimation} under a mild assumption. Next, Definition~\ref{d:Dependent Pseudo-barycenter} and Definition~\ref{d:Independent Pseudo-barycenter} define the dependent and independent pseudo-barycenter, respectively. Then, similar to solving a relaxation of the post-processing characterization to obtain the optimal affine estimation, Theorem \ref{th:Justification of Dependent Pseudo-barycenter in Gaussian Case} proves that the dependent and independent pseudo-barycenter pair coincides with the true solution to the optimal fair data representation when the conditional data distributions are Gaussian, and Theorem \ref{th:Justification of Pseudo-Barycenter in General Distribution Case} proves that the pseudo-barycenter pair forms the optimal affine estimation of the optimal fair data representation. 

To derive (an estimation of) fair data representation at the Pareto frontier, Corollary~\ref{corr:pre pareto characterization} in Section \ref{s:Optimal Fair Data Representation at the Pareto Frontier} first provides a characterization of the Pareto frontier for conditional expectation on a fixed sigma-algebra. Finally, combining optimal affine map, pseudo-barycenter, together with a diagonal argument in Remark \ref{r:Diagonal Estimate of the Post-processing Pareto Frontier}, we derive an estimation of the fair representation at the Pareto frontier, which results in Algorithm~\ref{a:independent} and Algorithm~\ref{a:dependent}.

Furthermore, in Section \ref{s:Numerics}, experiments show that the proposed fair data representations preserve as large an amount of information (w.r.t.\ the $L^2$ objective) as the fairness constraint allows. Therefore, it provides a better and more flexible solution to fair learning compared to encoding-based data representations ~\cite{calmon2017optimized, zemel2013learning}, which encode the information of the original data into some binary feature variables designed to guarantee statistical parity for classification. Surprisingly, experiments also show that applying the pseudo-barycenter results in nearly zero utility loss compared to the post-processing barycenter characterization solution.

\item In addition to the provable utility guarantee resulting from the Pareto frontier, the proposed method also has a meaningful interpretation from a {\em datapoint-wise perspective} in how it achieves the statistical parity requirement: A data point of the optimal fair learning outcome is the Euclidean average of the optimally matched data points from each of the sensitive groups. Here, matching means partitioning the original data set into subsets consisting of one point from each sensitive group. Each subset is called a match. The points within a match are called matched points. Optimality in matching is equivalent to minimization of the expected variance within a randomly chosen match. Such expected (hence total) variance minimization enforces points with similar relative positions in their sensitive marginal to form a match. For example, assume that there are two sensitive conditionals $A = \{1 \text{ } (\text{low in A}),4 \text{ } (\text{high in A})\}$ and $B = \{2 \text{ } (\text{low in B}),3 \text{ } (\text{high in B})\}$, then the optimal matching is $$\{ \{1 \text{ } (\text{low in A}),2 \text{ } (\text{low in B})\}, \{3 \text{ } (\text{high in B}),4 \text{ } (\text{high in A})\} \}$$ to minimize the expected or total variance within the matches. The optimal matching in high-dimensional $L^2$ spaces shares the same geometric intuition with the simple example. That is, from a point-wise perspective, the optimal fair learning achieves statistical parity by first matching the points with similar relative positions in their sensitive groups and then representing the matched ones with their Euclidean average.

\end{enumerate}

\begin{figure}[H]
\centering
\includegraphics[width=0.8\textwidth]{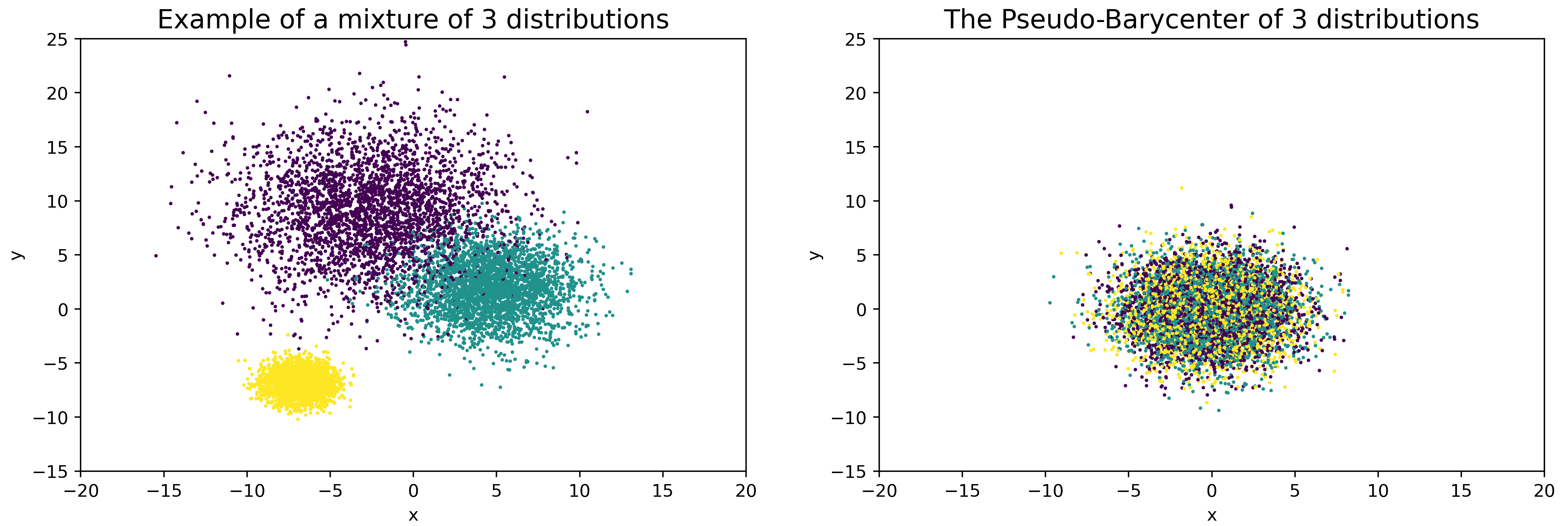}
\caption{The left panel depicts three distributions, sampled from an isotropic Gaussian distribution with different first two moments. The right panel shows the pseudo-barycenter of the three sample distributions.}
\label{figure:k-means on barycenter}
\end{figure}

\subsection{Fair Data Representations: From Theory to Practice}  \label{s:Post-processing and Pre-processing Approach}


In this subsection, we derive a fairness objective function that is both theoretically tractable and practically appealing.  This task is more involved than one initially might expect, and it sheds light on some subtleties of both the post-processing and the pre-processing approaches. 

Before proceeding, we need some preparation.
Let $X$, $Y$, and $Z$ represent respectively the independent, dependent, and sensitive random variable, with the same underlying probability space $(\Omega,\Sigma,\mathbb{P})$.  We use the term `random variables' to denote random vectors with an arbitrary but finite dimension. That is, $S: \Omega \rightarrow \mathcal{S}$ where $\mathcal{S} \in \{[k_\mathcal{S} ]^{d_{\mathcal{S}}}, \mathbb{N}^{d_{\mathcal{S}}}, [0,l_\mathcal{S} ]^{d_{\mathcal{S}}}, \mathbb{R}^{d_{\mathcal{S}}}\}$ with $k_\mathcal{S} \in \mathbb{N}, l_\mathcal{S}  \in \mathbb{R}$ and $d_{\mathcal{S}} < \infty$ for $S \in \{X,Y,Z\}$.

It follows from \cite{chzhen2020fair, gouic2020projection} that the optimal fair regression outcome can be characterized by the Wasserstein barycenter. In Lemma \ref{l:Optimal Fair $L^2$-Objective Supervised Learning Characterization} we will generalize their result from regression to all functions in $L^2(\mathcal{X} \times \mathcal{Z},\mathcal{Y})$, which shows that the optimal fair $L^2$-objective supervised learning outcome can be characterized by solutions to Problem \ref{prob:Optimal Fair $L^2$-objective Learning Outcome}:
\begin{equation}
\inf_{f \in L^2(\mathcal{X} \times \mathcal{Z},\mathcal{Y})} \{||Y - f(X,Z)||_2^2 : f(X,Z) \perp Z\}
\end{equation}
The utility loss is quantified by $L^2$-norm: $||Y - f(X,Z)||_2^2 = \int_{\Omega} ||Y - f(X,Z)||^2 d\mathbb{P}$, where $||\cdot||$ is the Euclidean norm. The constraint $f(X,Z) \perp Z$ guarantees that the final result satisfies statistical parity and, therefore, is fair. 

Since it follows from $L^2$ orthogonal decomposition that 
\begin{equation}
||Y - f(X,Z)||_2^2 = ||Y - \mathbb{E}(Y|X,Z)||_2^2 + ||\mathbb{E}(Y|X,Z) - f(X,Z)||_2^2
\end{equation}
and only the second term on the right hand side depends on the choice of $f \in L^2(\mathcal{X} \times \mathcal{Z},\mathcal{Y})$, we conclude that \eqref{eq:post-processing constrained optimization for optimal learning outcome} is equivalent to
\begin{equation} \label{eq:post-processing optimal learning characterization objective}
\inf_{f \in L^2(\mathcal{X} \times \mathcal{Z},\mathcal{Y})} \{||\mathbb{E}(Y|X,Z) - f(X,Z)||_2^2 : f(X,Z) \perp Z\}.
\end{equation}
It turns out---see Lemma \ref{l:Optimal Fair $L^2$-Objective Supervised Learning Characterization}---that the solution to \eqref{eq:post-processing optimal learning characterization objective} is exactly the Wasserstein barycenter. Therefore, we say that the optimal fair $L^2$-objective supervised learning outcome is characterized by the Wasserstein barycenter.  But notice that the Wasserstein barycenter characterization \eqref{eq:post-processing optimal learning characterization objective} assumes knowledge of the learning outcome $\mathbb{E}(Y|X,Z)$. That is, if practitioners apply the characterization to estimate the optimal learning outcome, it is necessary to obtain an estimator of $\mathbb{E}(Y|X,Z)$ via supervised learning before solving the post-processing rescue step \eqref{eq:post-processing optimal learning characterization objective}.  Therefore, we say that the characterization has a post-processing nature and hence call it a post-processing characterization. 
 
Now, notice that the estimator of $\mathbb{E}(Y|X,Z)$ is obtained via the training process
\begin{equation}
\inf_{f \in \mathcal{F}} \{||Y - f(X,Z)||_2^2\},
\end{equation}
where the admissible function set $\mathcal{F}$ depends on the choice of supervised learning models. Denote the estimator by $f'(X,Z)$. Then in practice~\eqref{eq:post-processing optimal learning characterization objective} becomes
\begin{equation} \label{eq:post-processing objective in practice}
\inf_{f \in L^2(\mathcal{X} \times \mathcal{Z},\mathcal{Y})} \{||f'(X,Z) - f(X,Z)||_2^2 : f(X,Z) \perp Z\}.
\end{equation}
That is, the application of the post-processing characterization is model-dependent. The fundamental reason for model dependence is that \eqref{eq:post-processing constrained optimization for optimal learning outcome} is optimizing over all $L^2$ functions while in practice it is necessary to reduce the admissible set from $L^2$ to some $\mathcal{F}$ which depends on the choice of the model. As a result, the optimizer is necessarily dependent on the choice of the model. Therefore, the constrained optimization \eqref{eq:post-processing constrained optimization for optimal learning outcome} and its characterization are not suitable for our ultimate goal of deriving a model-independent pre-processing approach to the optimal fair learning outcome. The present work proposes a different constrained optimization problem that characterizes the optimal fair data representation for all $L^2$-objective supervised learning models. 

To make a constraint optimization problem suitable for fair data representation design, we require both the objective function and the fairness constraint to be model-independent. Furthermore, the data representation design objective and the training objective given the data representation have to be consistent in the following sense: the better training and testing result on the fair data representation leads to less $L^2$-fitting error with respect to the true data.

We now derive an objective function that is suitable for fair data representation design purpose. To start, notice that our goal is to generate a synthetic data representation $(\tilde{X},\tilde{Y})$, a deformation of $(X,Y)$, via which any $L^2$-objective model that is trained by
\begin{equation} \label{eq:total objective via fair data representation and training}
\inf_{f \in \mathcal{F}} ||\tilde{Y} - f(\tilde{X})||^2_2
\end{equation}
would result in (an estimation of) the optimal fair learning outcome. In the rest of this paper we denote the solution to \eqref{eq:total objective via fair data representation and training} by $f_{\tilde{Y}}$.

Also, because conditional expectation is an orthogonal projection operator on $L^2$-space, we obtain the following orthogonal decomposition of the objective in \eqref{eq:total objective via fair data representation and training}:
\begin{equation}
||\tilde{Y} - f(\tilde{X})||^2_2 = ||\tilde{Y} - \mathbb{E}(\tilde{Y}|\tilde{X})||^2_2 + ||\mathbb{E}(\tilde{Y}|\tilde{X}) - f(\tilde{X})||^2_2.
\end{equation}
Only the second term on the right hand side depends on the choice of $f \in \mathcal{F}$, hence the training step objective \eqref{eq:total objective via fair data representation and training} is equivalent to the following:
\begin{equation}\label{eq:training objective via fair data representation}
\inf_{f \in \mathcal{F}}  ||\mathbb{E}(\tilde{Y}|\tilde{X}) - f(\tilde{X})||^2_2.
\end{equation}
Thus, the solution to \eqref{eq:training objective via fair data representation} is also $f_{\tilde{Y}}$, which depends on the choice of $\mathcal{F}$. 

The key observation is that, given a data representation $(\tilde{X}, \tilde{Y})$, \eqref{eq:training objective via fair data representation} is the objective that practitioners try to achieve via model selection, modification, and parameter turning. Furthermore, it follows from the triangle or Minkowski inequality that
\begin{equation}\label{eq:decomposition of total objective}
\underbrace{||Y - f_{\tilde{Y}}(\tilde{X})||_2}_{\text{total utility loss}} \leq \underbrace{||Y - \mathbb{E}(\tilde{Y}|\tilde{X})||_2}_{\text{data representation utility loss}} + \underbrace{||\mathbb{E}(\tilde{Y}|\tilde{X}) - f_{\tilde{Y}}(\tilde{X})||_2}_{\text{learning utility loss}}.
\end{equation}
The second term on the right-hand side is the target of a supervised learning task which should be left to practitioners. Thus, the natural choice of the model-independent objective of the optimal fair synthetic data design is to minimize the first term:
\begin{equation}\label{eq:optimal fair data representation objective}
\inf_{(\tilde{X},\tilde{Y}) \in \mathcal{D}}||Y - \mathbb{E}(\tilde{Y}|\tilde{X})||_2,
\end{equation}
where $\mathcal{D}$ is some admissible set of deformed versions of the original data $(X,Y)$ that we define later. Intuitively, the loss function can be interpreted as the potential utility sacrifice resulting from deforming $(X,Y)$ to $(\tilde{X},\tilde{Y})$ for $L^2$-objective supervised learning, while leaving the task of minimizing the second term on the right-hand side to practitioners via model selection, modification, or parameter tuning.

Next, we derive a fairness constraint for synthetic data design purposes. That is, the goal is to design $(\tilde{X},\tilde{Y})$ such that $f_{\tilde{Y}}(\tilde{X}) \perp Z$ for any admissible function set $\mathcal{F} \subset L^2(\mathcal{X},\mathcal{Y})$.  The flexibility of model choice becomes important due to the increasing complexity of models in practice nowadays, such as neural networks. The key observation here is that, due to the potential dependence of $f_{\tilde{Y}}$ on $Z$, one needs to look at both models that use merely measurable functions from $\mathcal{X}$ to $\mathcal{Y}$ and more complicated models consisting of $Z$-dependent measurable functions:
\begin{itemize}
\item[1] For measurable functions from $\mathcal{X}$ to $\mathcal{Y}$, if we require $\tilde{X} \perp Z$, then it follows that for any $f: \mathcal{X} \rightarrow \mathcal{Y}$, it is guaranteed that $f(\tilde{X}) \perp Z$. Hence, we require $\tilde{X} \perp Z$ to prevent models from exploiting sensitive information from the independent variables.
\item[2] For advanced or adversarial models that use $Z$-dependent functions from $\mathcal{X} \times \mathcal{Z}$ to $\mathcal{Y}$, the trained model $f_{Y}$ could still depend on $Z$ because $Y$ and $Z$ are not independent. For example, consider the extreme case where $Y = kZ, k \in \mathbb{R}$ and a perfect model results in $\mathbb{E}(kZ|\tilde{X},Z) = kZ$ which fully depends on $Z$ even if we require $\tilde{X} \perp Z$. Therefore, we also require $f_{\tilde{Y}}(\tilde{X},Z) \perp Z$ to prevent such a model from exploiting sensitive information from the dependent variables.
\end{itemize}
But notice that the second requirement leads us back to the post-processing nature of fairness constraints as in \eqref{eq:post-processing objective in practice}. For fair data representation design purposes,  it is necessary to keep the constraint model-independent. Therefore, instead of enforcing $f_{\tilde{Y}}(\tilde{X},Z) \perp Z$, the present work requires $\mathbb{E}(\tilde{Y}|\tilde{X},Z) \perp Z$ for the following two reasons: (1) Under the modified constraint $\mathbb{E}(\tilde{Y}|\tilde{X},Z) \perp Z$, the better $f_{\tilde{Y}}(\tilde{X},Z)$ estimates $\mathbb{E}(\tilde{Y}|\tilde{X},Z)$, the more independent of $Z$ becomes $f_{\tilde{Y}}(\tilde{X},Z)$. Such alignment between training objective and fairness makes the modification a natural choice under the assumption that the goal of $L^2$-objective (adversarial) supervised learning tasks is to minimize $||\mathbb{E}(\tilde{Y}|\tilde{X},Z) - f_{\tilde{Y}}(\tilde{X},Z)||^2_2$, which is equivalent to minimizing $||\tilde{Y} - f_{\tilde{Y}}(\tilde{X},Z)||^2_2$. (2) Since a supervised learning model with poor prediction accuracy already results in severe unfairness, the dependence on sensitive information is of less concern when designing a fair data representation. 

Based on the fairness requirement for both measurable functions on merely $\mathcal{X}$ and $Z$-dependent functions, a natural choice of (pre-processing) statistical parity constraint for data representation has the following form:
\begin{equation}\label{eq:optimal fair data representation constraint}
\tilde{X}, \mathbb{E}(\tilde{Y}|\tilde{X},Z) \perp Z.
\end{equation}
It guarantees: (1) statistical parity for any model that uses only a deterministic function and any model that results in a perfect estimation of $\mathbb{E}(\tilde{Y}|\tilde{X})$; (2) the better $f_{\tilde{Y}}(\tilde{X},Z)$ estimates $\mathbb{E}(\tilde{Y}|\tilde{X},Z)$, the more independent  $f_{\tilde{Y}}(\tilde{X},Z)$ becomes of $Z$. 

While the fairness constraint \eqref{eq:optimal fair data representation constraint} is not the only choice, it does balance utility and fairness. The following remark discusses two alternative fairness constraint choices, which are more polarized in optimizing utility or fairness.

\begin{rema}[Alternative fair data representation constraints]
There are two alternative choices of fairness constraints that are valuable in practice:
\begin{itemize}
\item[1] $\tilde{X} \perp Z$: the weaker constraint guarantees any model using merely a deterministic function, even if sub-optimal, to result in statistical parity. But it does not protect $Z$ from advanced models, which exploit the dependence of $Y$ on $Z$ and apply $Z$-dependent functions. Therefore, $\tilde{X} \perp Z$ provides more utility but less sensitive information protection, compared to our choice.
\item[2] $(\tilde{X},\tilde{Y}) \perp Z$: the stronger constraint guarantees statistical parity in the learning outcome of any supervised learning model, even for those that adopt $Z$-dependent functions and are suboptimal. But it sacrifices more utility. This stronger constraint is particularly useful in practice when one does not know which variables are dependent and which ones are independent.
\end{itemize}
Our choice is a compromise of the two alternatives in terms of balancing utility sacrifice and protecting sensitive information. Furthermore, simple modifications of our analysis and algorithm would solve the two alternatives because they are essentially simplified versions of our choice. Hence, the present work targets  \eqref{eq:optimal fair data representation constraint}.
\end{rema}

Finally, combining the objective and constraint for synthetic data design, we aim to solve Problem \ref{prob:Optimal Fair Data Representation for Conditional Expectation Estimation}:
\begin{equation}
\inf_{(\tilde{X}, \tilde{Y}) \in \mathcal{D}} \{ ||Y - \mathbb{E}(\tilde{Y}|\tilde{X})||^2_2 : \tilde{X}, \mathbb{E}(\tilde{Y}|\tilde{X},Z) \perp Z\}.
\end{equation}
The solution provides a fair data representation via which the trained $L^2$-objective supervised learning models become estimations of the optimal fair conditional expectation.

Compared to the original constrained optimization problem \eqref{eq:post-processing constrained optimization for optimal learning outcome} which results in the post-processing nature of its barycenter characterization \eqref{eq:post-processing optimal learning characterization objective}, the proposed constrained optimization problem \eqref{eq:optimal fair data representation problem} has the following advantages by design:
\begin{itemize}
\item[1] It provides a fairness guarantee for arbitrary $L^2$-objective models.
\item[2] The model-independence together with the alignment between training objective and fairness enables practitioners to enjoy flexibility in model selection, modification, and parameter tuning on the fair data representation.
\item[3] The fair data representation approach has more applicable models than the post-processing approach. See Remark \ref{r:Interpretation of L2-objective Models} below for a detailed explanation of two different interpretations of $L^2$-objective models.
\end{itemize}

In the following remark, we explain the different interpretations of $L^2$-objective models in the post-processing and pre-processing approaches.

\begin{rema}[Interpretation of $L^2$-objective models]\label{r:Interpretation of L2-objective Models}
For the post-processing approach, it follows from \eqref{eq:post-processing optimal learning characterization objective} and \eqref{eq:post-processing objective in practice} that the barycenter characterization works only if the supervised learning model comes with an objective function in explicit $L^2$-form.
For the proposed pre-processing approach, the applicable $L^2$-objective models include all the models that aim to estimate the conditional expectation. In particular, it follows from \eqref{eq:decomposition of total objective} and \eqref{eq:optimal fair data representation objective} that the proposed fair data representation works for any supervised learning model that aims to estimate conditional expectation or conditional probability, even though some of them do not come with an explicit objective function in $L^2$-form. For example, all classification models share the goal of estimating the conditional probability of $\{Y = 1\}$ given an observation of $\{X = x\}$, which is $\mathbb{E}(\mathbbm{1}_{Y = 1} | X = x)$. Therefore, the resulting synthetic data can be used for any classification model, even models such as logistic regression and random forest that do not have $L^2$-based objective functions.
\end{rema} 

\subsection{Setting and Notation}

In the rest of the work, $\mathcal{L}(X) = \mathbb{P} \circ X^{-1}: \mathcal{B}_{\mathcal{X}} \rightarrow [0,1]$ denotes the distribution or law of $X$, which is a function that assigns each event in the Borel sigma-algebra, $\mathcal{B}_{\mathcal{X}}$, a probability.  Let $\lambda := \mathcal{L}(Z)$ denote the law of the sensitive random variable to simplify notation. To remove sensitive information $Z$, the method we propose is to find a set of maps $T_x := \{T_x(\cdot,z)\}_z$ such that $T_x(\cdot,z): \mathcal{X} \rightarrow \mathcal{X}$ pushes the conditional (on $\{Z = z\}$) distribution (see the definition of conditional distribution $\mathcal{L}(X_z)$ below) forward to a common probability measure $\mathcal{L}(\tilde{X})$ for $\lambda$-a.e.\ $z \in \mathcal{Z}$. Also, when restricting $T$ to be a linear map or a matrix, we use $T \succ 0$ to denote $T$ is positive definite, and $||T||_F$ to denote its Frobenius norm.

Given a measurable map $T: \mathcal{X} \rightarrow \mathcal{X}$ and a probability measure $\mu \in \mathcal{P}(\mathcal{X})$, $T_{\sharp} \mu$ denotes the push-forward probability measure that is defined as the following: for any event, $A$, in the Borel sigma-algebra, $\mathcal{B}_{\mathcal{X}}$,  $T_{\sharp} \mu(A) := \mu(T^{-1}(A))$. In the rest of the paper, we often say $T$ pushes $\mu$ forward to $T_{\sharp}\mu$.

The conditional distributions $\{\mathcal{L}(X_z)\}_z$ are defined uniquely $\lambda$-a.e.\ by the disintegration theorem \cite[Box 2.2]{santambrogio2015optimal}. Hence, $z \rightarrow \mathcal{L}(X_z)$ is Borel measurable and, for all Borel measurable sets $E \in \mathcal{B}_{\mathcal{X}}$, $\mathbb{P}(E) = \int_{\mathcal{X}}\mathbb{P}(X_z^{-1}(E)) d \lambda(z)$. The application of the disintegration theorem aims to allow $\mathcal{Z}$ to be uncountably infinite, such as the real line or the real vector space. In the practical case of a finite data set, when the data set $(X,Z)$ is $\{(x_i,z_i)\}_{i \in [N]}$, for each $z \in \mathcal{Z}$, the empirical conditional random variable (with uniform distribution) is defined as follows: $$X_z := \{x_i: (x_i,z_i) \in (X,Z), z_i = z\}.$$ Therefore, on the product data space $\mathcal{X} \times \mathcal{Z}$ with a joint distribution, the law of the random variable or vector $X_z$ is the conditional distribution on $\{Z = z\}$.

The present work often assumes the conditionals $\{\mathcal{L}(X_z)\}_{z \in \mathcal{Z}} \subset \mathcal{P}_{2,ac}(\mathcal{X})$. Here, $\mathcal{P}_{2,ac}(\mathcal{X})$ denotes the set of probability measures on $\mathcal{X}$ that have finite second moments and are absolutely continuous with respect to the Lebesgue measure.  The finite second moment assumption guarantees the Wasserstein distance to be well-defined without being infinite. The absolute continuity assumption guarantees the existence of their Wasserstein barycenter (See Definition \ref{d:barycenter definition}) and the respective (almost surely invertible) optimal transport maps that map them to the barycenter. The present work denotes the barycenter by $\overline{\mathcal{L}(X_z)}$ or $\overline{\mathcal{L}(X)}$ interchangeably, and denotes the optimal transport map that pushes $\mathcal{L}(X_z)$ to $\overline{\mathcal{L}(X)}$ by $T_z$ or $T(\cdot, z)$.

To simplify notation and proof, we define $\bar{X}$ to be the random variable that satisfies the following: for $\lambda$-a.e. $z \in \mathcal{Z}$,
\begin{equation}
\bar{X}_z = T_z(X_z).
\end{equation}
In other words, the couple $(X_z,\bar{X}_z)$ is a coupling of $(\mathcal{L}(X_z), \overline{\mathcal{L}(X)})$ and satisfies:
\begin{equation}
||X_z - \bar{X}_z||_2^2 = \mathcal{W}^2_2(\mathcal{L}(X_z), \overline{\mathcal{L}(X)})
\end{equation}
for $\lambda$-a.e. $z \in \mathcal{Z}$. We refer interested readers to ~\cite{villani2021topics, villani2009optimal} for more details on the assumption of $\mathcal{P}_{2,ac}(\mathcal{X})$ and the coupling of measures. In the rest of the paper, we call $\bar{X}$ the Wasserstein barycenter of $\{X_z\}_z$.

In solving the post-processing characterization, with the assumption of $\mathbb{E}(Y|X,Z)$, one first finds the Wasserstein barycenter of $\{\mathcal{L}(\mathbb{E}(Y|X,Z)_z)\}_z$, denoted by $\overline{\mathcal{L}(\mathbb{E}(Y|X,Z)_z))}$. Here, $\mathbb{E}(Y|X,Z)_z$ denotes the conditional of $\mathbb{E}(Y|X,Z)$ on $\{Z = z\}$ for $\lambda$-a.e. $z \in \mathcal{Z}$. Then one applies the optimal transport map $T(\cdot,z): \mathcal{Y} \rightarrow \mathcal{Y}$ which pushes $\mathbb{E}(Y|X,Z)_z$ forward to $\overline{\mathbb{E}(Y|X,Z)}_z$ for $\lambda$-a.e.\ $z \in \mathcal{Z}$.

In solving the pre-processing characterization, one has two different optimal transport maps to deform $X$ and $Y$. For the dependent variable, we define $T_y = \{T_y(\cdot,z)\}_z$, $\mathcal{L}(Y_z)$, and $\mathcal{L}(\tilde{Y})$ analogously, but require merely the agreement of $\mathcal{L}(\mathbb{E}(\tilde{Y}|\tilde{X},Z)_z)$ for $\lambda$-a.e.\  $z \in \mathcal{Z}$. The $\lambda$-a.e. agreement of $\mathcal{L}(\mathbb{E}(\tilde{Y}|\tilde{X},Z)_z)$ means that the laws of the random variables or vectors $\mathbb{E}(\tilde{Y}|\tilde{X},Z)_z$ are equal, except for some $z$ on a $\lambda$-null set on $\mathcal{Z}$. In other words, on the Borel measurable space $(\mathcal{Y},\mathcal{B}_{\mathcal{Y}})$, for any set $B$ in the Borel sigma-algebra $\mathcal{B}_{\mathcal{Y}}$, we have $\mathbb{P} \circ [\mathbb{E}(\tilde{Y}|\tilde{X},Z)_{z_1}]^{-1}(B) = \mathbb{P} \circ [\mathbb{E}(\tilde{Y}|\tilde{X},Z)_{z_2}]^{-1}(B)$ for all $z_1, z_2 \in \mathcal {Z}$, except on a set $N \subset \mathcal{Z}$ such that $\lambda(N) = 0$.

Therefore, by generating and applying $(T_x, T_y)$ to the data, we achieve $\mathbb{E}(\tilde{Y}|\tilde{X},Z) \perp Z$, i.e.\ {\em statistical parity}, due to the enforced $\lambda$-a.e.  agreement of $\mathcal{L}(\mathbb{E}(\tilde{Y}|\tilde{X},Z)_z))$. Combining the application of deformation maps and \eqref{eq:optimal fair data representation problem}, we obtain the fair data representation optimization problem




\begin{equation}\label{bary_objective}
\inf_{(\tilde{X}, \tilde{Y}) \in \mathcal{D}} \{ ||Y - \mathbb{E}(\tilde{Y}|\tilde{X})||^2_2 : \tilde{X}, \mathbb{E}(\tilde{Y}|\tilde{X},Z) \perp Z\}
\end{equation}
with the admissible set $\mathcal{D}$ is defined as
\begin{equation}
\mathcal{D} := \{ (\tilde{X},\tilde{Y}):  \tilde{X} = T_x(X,Z) , \tilde{Y} = T_y(Y,Z) \},
\end{equation}
Here, $T_x(\cdot,z): \mathcal{X} \rightarrow \mathcal{X}$ and $T_y(\cdot,z): \mathcal{Y} \rightarrow \mathcal{Y}$ are Borel measurable maps. We denote the set of admissible $\tilde{X}$ and $\tilde{Y}$ by $\mathcal{D}|_{\mathcal{X}}$ and $\mathcal{D}|_{\mathcal{Y}}$, respectively. The reason underlying the definition of $\mathcal{D}$ is that the fair data should still has its foundation from the real data, albeit suitably ``deformed''.

\subsection{Paper Organization}

The rest of the paper is organized as follows: Section~\ref{S:Prelim} reviews the tools in optimal transport that are needed to derive results in the present work: Wasserstein space, Wasserstein barycenter, and optimal affine transport within a location-scale family. Section~\ref{s:Post-Processing} first generalizes the current barycenter characterization of optimal regression to optimal $L^2$-objective supervised learning, then defines pseudo-barycenter, and proves pseudo-barycenter is the optimal affine estimation of the true barycenter.  Section~\ref{s:Pareto Frontier} is concerned with both the theoretical characterization and an explicit formula of the Pareto frontier on the Wasserstein space. Section~\ref{s:Optimal Fair Data Representation} studies the exact solution to the optimal data representation and the optimal affine estimation of the exact solution. 
Section~\ref{s:Algorithm} proposes an algorithm based on the theoretical results in the previous sections. Section~\ref{s:Numerics} provides an extensive numerical study regarding the application of the pseudo-barycenter and the optimal affine maps to (1) the estimation of optimal fair learning outcome compared to the known fair machine learning techniques on different learning models; and (2) Pareto frontier estimation for different disparity definitions.  

\section{Preliminaries on Optimal Transport} \label{S:Prelim}

In this section, we review the theoretical results on optimal transport and the Wasserstein barycenter that are important for the development of the main theoretical results on efficient algorithm design, Wasserstein geodesic characterization of the Pareto frontier, and the pre-processing approach resulting in the optimal fair data representation. For our purposes, we focus on $\mathbb{R}^d$. We refer readers who are interested in more generalized versions, e.g.\ on compact Riemannian manifolds, to for example~\cite{kim2017wasserstein}.

\subsection{General Distribution Case} \label{s:prelim general distribution case}

Given $\mu, \nu \in \mathcal{P}(\mathbb{R}^d)$,  which is the set of all probability measures on $\mathbb{R}^d$, Monge asked for an optimal transportation map $T_{\mu \nu}: \mathbb{R}^d \rightarrow \mathbb{R}^d$ that solves

\begin{equation}\label{Monge}
\inf_{ T_{\sharp} \mu = \nu } \Big\{\int_{\mathbb{R}^d} ||x - T(x)||^2 d \mu\Big\}
\end{equation}
Here, $||\cdot||$ denotes the Euclidean norm on $\mathbb{R}^d$. The problem remained open until Brenier showed that Monge's problem coincides with Kantorovich's relaxed version:

\begin{equation}\label{Kantorovich}
\inf_{ \gamma \in \prod (\mu,\nu) } \Big\{\int_{\mathbb{R}^d \times \mathbb{R}^d} ||x_1 - x_2||^2 d \gamma(x_1,x_2)\Big\}
\end{equation}
and admits a unique solution provided $\mu \in \mathcal{P}_{2,ac}(\mathbb{R}^d)$. Here, $\mathcal{P}_{2,ac}(\mathbb{R}^d)$ denotes the space of probability measures on $\mathbb{R}^d$ that have finite first two moments and are absolutely continuous w.r.t. (with respect to) the Lebesgue measure. That is, the optimal solution to $\eqref{Kantorovich}$ has the form: $\gamma = (Id, T_{\mu \nu})_{\sharp} \mu$, where $T_{\mu \nu}$ solves $\eqref{Monge}$. Here, $\prod (\mu,\nu)$ denotes all the probability measures on $(\mathbb{R}^{2d}, \mathcal{B}(\mathbb{R}^d) \otimes \mathcal{B}(\mathbb{R}^d))$ such that the marginals are $\mu$ and $\nu$. The relaxed problem is easy to solve due to the weak* compactness of $\prod (\mu,\nu)$. We refer interested readers to~\cite{villani2021topics, villani2009optimal} for more detailed existence and uniqueness results. 

\begin{rema}
The uniqueness is in the weak sense for $\gamma$ and $\mu$-a.e.\ for $T_{\mu \nu}$.
\end{rema}

Kantorovich's problem provides a certain kind of ``distance" on $\mathcal{P}(\mathbb{R}^d)$ except for the possibility of being infinite.

\begin{defi}[Wasserstein distance\footnote{Throughout this paper we work with the Wasserstein-2 distance, and thus simply call it the Wasserstein distance.}]
Given $\mu, \nu \in \mathcal{P}(\mathbb{R}^d)$,
\begin{equation}
\mathcal{W}_2(\mu,\nu) := \left(\inf_{ \gamma \in \prod (\mu,\nu) } \Big\{\int_{\mathbb{R}^d \times \mathbb{R}^d} ||x_1 - x_2||^2 d \gamma(x_1,x_2)\Big\}.\right)^{\frac{1}{2}}
\end{equation}

\end{defi}

It is not hard to verify that the Wasserstein distance defined above satisfies the axioms of a metric except for finiteness of $\mathcal{W}_2(\mu,\nu)$ for arbitrary $\mu,\nu \in \mathcal{P}(\mathbb{R}^d)$. In order to guarantee finiteness, one needs to put more restrictions on the set of all probability measures:

\begin{defi}[Wasserstein space]
Define $\mathcal{W}_2$ as above and 
\begin{equation}
\mathcal{P}_2(\mathbb{R}^d):= \Big\{\mu \in \mathcal{P}(\mathbb{R}^d): \int_{\mathbb{R}^d} ||x||^2 d\mu < \infty\Big\}.
\end{equation}
The couple $(\mathcal{P}_2(\mathbb{R}^d),\mathcal{W}_2)$ is called Wasserstein space.

\end{defi}

The Wasserstein space has gained increasing popularity in image processing, economics \cite{ekeland2010existence, carlier2010matching}, and machine learning in recent years due to its useful properties such as polishness (of the space) and robustness (w.r.t.\ perturbation on the marginal probability measures and hence on sampling).

Since the Wasserstein space is a metric space, the Fr\'echet mean on the space is well-defined and it is called the Wasserstein barycenter in the optimal transport literature.

\begin{defi}[Wassserstein barycenter \cite{agueh2011barycenters}]\label{d:barycenter definition}
Given $\{\mu_z\}_{z \in \mathcal{Z}} \subset (\mathcal{P}_2(\mathbb{R}^d),\mathcal{W}_2)$ for some index set $\mathcal{Z}$, the barycenter of $\{\mu_z\}_z$ is the Fr\'echet mean of the set on $(\mathcal{P}_2(\mathbb{R}^d),\mathcal{W}_2)$. That is, $\bar{\mu}$ is the solution to
\begin{equation}\label{barycenter}
\inf_{\mu \in \mathcal{P}_2(\mathbb{R}^d)} \Big\{\int_{\mathcal{Z}} \mathcal{W}_2^2(\mu_z,\mu) d\lambda(z)\Big\},
\end{equation}
where $\bar{\mu}$ denotes the Fr\'echet mean or barycenter.
\end{defi}
Here, for our purpose, we focus on the case where the index set $\mathcal{Z} \in \{[k], \mathbb{N}, [0,1], \mathbb{R}^n\}$. 

Next, we look at optimal transport and the barycenter problem from the perspective of optimal coupling. The goal is to show that the multi-marginal coupling problem is equivalent to the Wasserstein barycenter problem. The equivalence is an essential tool in proving our result in optimal affine transport, the optimality of the pseudo-barycenter, and the geodesic characterization of the Pareto frontier.

First, notice that Kantorovich's problem is in fact a 2-marginal coupling problem: Let $X_1,X_2$ be the random variable satisfy $\mathcal{L}(X_1) = \mu, \mathcal{L}(X_2) = \nu$, the problem looks for a $\gamma$ with marginals being $\mu, \nu$ that minimizes $\mathbb{E}_{\gamma} ||X_1 - X_2||^2$. It follows naturally by the existence and uniqueness result of the optimal transport map (also known as Brenier's map) \cite{brenier1991polar}, that the Wasserstein distance admits the form in the classic probability language:

\begin{equation}
\mathcal{W}_2(\mu,\nu) = (\mathbb{E}_{\mu} ||X_1 - T(X_1)||^2)^{\frac{1}{2}},
\end{equation}
where $T$ is the optimal transport map that pushes $\mu = \mathcal{L}(X_1)$ forward to $\nu = \mathcal{L}(X_2)$.

More recent work in mathematics~\cite{kim2017wasserstein, pass2013optimal} and economics~\cite{carlier2010matching, ekeland2010existence} has generalized the Kantorovich problem to the multi-marginal coupling problem: 
\begin{equation}\label{multi-marginal coupling}
\inf_{\gamma \in \prod(\{\mu_z\}_{z \in \mathcal{Z}})} \big\{\mathbb{E}_{\gamma}(\int_{\mathcal{Z}^2} ||X_{z_1} - X_{z_2}||^2 d\lambda(z_1)d\lambda(z_2) )\big\},
\end{equation}
where $\prod(\{\mu_z\}_{z \in \mathcal{Z}})$  denotes all the Borel probability measures on $(\mathbb{R}^d)^{|\mathcal{Z}|}$ with marginals being $\mu_z = \mathcal{L}(X_z) \in \mathcal{P}(\mathbb{R}^d)$ $\lambda$-a.e..  Hence, one can consider $\lambda \in \mathcal{P}(\mathcal{P}(\mathbb{R}^d))$.  It can be shown that the above is equivalent to the following:

\begin{equation}\label{maxcov}
\sup_{\gamma \in \prod(\{\mu_z\}_{z \in \mathcal{Z}})} \big\{\mathbb{E}_{\gamma}(||\int_{\mathcal{Z}} X_{z} d\lambda(z)||^2) \big\}
\end{equation}

\begin{rema}[Justification for the name of marginals]\label{r:name of marginals}
Since $\{X_z\}_z$ are the marginals for the admissible couplings in \eqref{multi-marginal coupling}, with the equivalence between the multi-marginal coupling and Wasserstein barycenter (see Remark \ref{equivalence between couple and bary} below) in mind, we often call $\{X_z\}_z$ and $\{\mathcal{L}(X_z)\}_z$ the sensitive marginals, even though they are also the conditional random variables and distributions constructed by disintegration.
\end{rema}

Intuitively, $\eqref{maxcov}$ tends to find a family of random variables parametrized by $z$ with fixed marginals $\mu_z$ such that the variance of the matched (by $\gamma$) group average is maximized. For readers who are more familiar with stochastic processes, consider $z = t$ as a time variable, then $X_t$ is a stochastic process with fixed time marginals, and $\eqref{maxcov}$ tends to find a way ($\gamma$) to group the fixed marginals into trajectories so that the variance of the trajectory-wise (sample path) average is maximized. (Hence, the expected variance within a randomly chosen sample path is minimized.)

As shown in \cite{agueh2011barycenters, pass2013optimal}, the above multi-marginal problem is equivalent to the barycenter problem:

\begin{rema}[Equivalence between multi-marginal coupling and barycenter] \label{equivalence between couple and bary}
Assume $\{\mu_z\}_z$ are absolutely continuous w.r.t.\ the Lebesgue measure and let $\gamma^*$ and $\bar{\mu}$ be the solution to \eqref{maxcov} and \eqref{barycenter}, respectively. It follows that $\bar{\mu} = \gamma^* \circ T^{-1}$ where $T(\{x_z\}_z) := \int_{\mathcal{Z}} x_z d\lambda(z)$. 
\end{rema}

The importance of this equivalence is twofold:

\begin{itemize}
\item[1] It is the key to proving the non-degenerate Gaussianity of the Wasserstein barycenter of non-degenerate Gaussian marginal distributions;
\item[2] It provides technical support for the interpretation (Section \ref{s:Post-processing and Pre-processing Approach} point 4) of how the Wasserstein barycenter solves data-related fairness issues on a point-wise scale.
\end{itemize}
Therefore, we generalize the equivalence to the case where $\mathcal{Z}$ is a Polish space, which is a metric space that is separable and complete. In particular, $[k]^d, [0,l]^d, \mathbb{N}^d, \mathbb{R}^d$ mentioned above are all examples of Polish spaces. This generalization is important for our purpose as it provides a theoretical foundation for removing $Z$ in the form of random vectors.

Now, the following result provides the existence and uniqueness result of the barycenter problem that is suitable for our purpose.

\begin{thm} [Existence and uniqueness of barycenter \cite{le2017existence}(Theorem 2 and Proposition 6) ] \label{Existence and Uniqueness of Barycenter}

Assume that $\mathcal{Z}$ is a Polish space and that $\lambda := \mathbb{P} \circ Z^{-1}$ satisfies $\int_{\mathcal{Z}} \mathcal{W}_2^2(\mu_z,\nu)d\lambda(z) < \infty$ for some $\nu \in \mathcal{P}_2(\mathcal{X})$ (hence, for all $\nu  \in \mathcal{P}_2(\mathcal{X})$). Then the following properties hold:

\begin{itemize}
\item[1] There exists a barycenter of $\{\mu_z\}_{z \in \mathcal{Z}}$ w.r.t. $\lambda$.
\item[2] If, in addition, $\lambda(\{z: \mu_z \in \mathcal{P}_{ac}(\mathcal{X}) \}) > 0$, then the barycenter is unique.
\end{itemize}

\end{thm}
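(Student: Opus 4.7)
The plan is to prove existence by the direct method of the calculus of variations on the Wasserstein space, and uniqueness by a strict displacement-convexity argument that uses the absolute continuity hypothesis to produce unique optimal maps via Brenier's theorem. Throughout, set $F(\nu) := \int_{\mathcal{Z}} \mathcal{W}_2^2(\mu_z,\nu)\, d\lambda(z)$; the triangle inequality $\mathcal{W}_2(\mu_z,\nu) \le \mathcal{W}_2(\mu_z,\nu_0) + \mathcal{W}_2(\nu_0,\nu)$ together with $(a+b)^2 \le 2a^2 + 2b^2$ shows that the hypothesis $F(\nu_0) < \infty$ implies $F(\nu) < \infty$ for every $\nu \in \mathcal{P}_2(\mathcal{X})$, so the finiteness hypothesis is automatically independent of the reference $\nu_0$, and $F: \mathcal{P}_2(\mathcal{X}) \to [0,\infty)$ is a well-defined functional.

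For existence I would take a minimizing sequence $\{\nu_n\} \subset \mathcal{P}_2(\mathcal{X})$. Specializing the triangle bound to $\nu_0 = \delta_0$ gives a uniform second-moment estimate $\int_{\mathcal{X}} \|x\|^2 d\nu_n(x) = \mathcal{W}_2^2(\delta_0,\nu_n) \le 2\int_{\mathcal{Z}} \mathcal{W}_2^2(\delta_0,\mu_z)\, d\lambda(z) + 2F(\nu_n)$, bounded in $n$. Markov's inequality then yields tightness of $\{\nu_n\}$, and Prokhorov's theorem extracts a weakly convergent subsequence $\nu_{n_k} \rightharpoonup \nu^\star$; the uniform second-moment bound ensures $\nu^\star \in \mathcal{P}_2(\mathcal{X})$. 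Lower semicontinuity of $\nu \mapsto \mathcal{W}_2^2(\mu,\nu)$ under weak convergence, paired with Fatou's lemma in the $z$-integral, gives $F(\nu^\star) \le \liminf_k F(\nu_{n_k})$, so $\nu^\star$ is a minimizer. A measurability check for $z \mapsto \mathcal{W}_2^2(\mu_z,\nu)$ (needed for the defining integral of $F$) exploits that $\mathcal{Z}$ is Polish and that $(\mathcal{P}_2(\mathcal{X}),\mathcal{W}_2)$ is itself Polish, so the Borel measurability of $z \mapsto \mu_z$ from the disintegration transfers through the continuous map $\mu \mapsto \mathcal{W}_2^2(\mu,\nu)$.

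For uniqueness, suppose $\bar{\mu}_0 \ne \bar{\mu}_1$ both minimize $F$. Pick $z_0$ with $\mu_{z_0} \in \mathcal{P}_{2,ac}(\mathcal{X})$, which exists by the positive $\lambda$-measure hypothesis. Brenier's theorem furnishes unique optimal maps $T_0, T_1 : \mathcal{X} \to \mathcal{X}$ with $T_{i\sharp}\mu_{z_0} = \bar{\mu}_i$; since $\bar{\mu}_0 \ne \bar{\mu}_1$, the quantity $D := \int \|T_0-T_1\|^2\, d\mu_{z_0}$ is strictly positive. Define the midpoint candidate $\bar{\mu}_{1/2} := \bigl(\tfrac{T_0+T_1}{2}\bigr)_\sharp \mu_{z_0}$. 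The Euclidean identity $\|x - \tfrac{u+v}{2}\|^2 = \tfrac12\|x-u\|^2 + \tfrac12\|x-v\|^2 - \tfrac14\|u-v\|^2$ with $u = T_0(y)$, $v = T_1(y)$ would, after gluing appropriate couplings of $(\mu_z,\mu_{z_0})$ with the graph coupling $(T_0,T_1)_\sharp \mu_{z_0}$ and integrating over $z$, yield $F(\bar{\mu}_{1/2}) \le \tfrac12 F(\bar{\mu}_0) + \tfrac12 F(\bar{\mu}_1) - \tfrac14 D < F(\bar{\mu}_0)$, contradicting minimality.

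The main obstacle is this last gluing step: a single coupling of $(\mu_z,\mu_{z_0})$ whose pushforwards through $T_0$ and through $T_1$ simultaneously realize the two optimal costs $\mathcal{W}_2^2(\mu_z,\bar{\mu}_i)$ does not generally exist, so a more delicate argument is needed. A natural fix is to first promote the barycenter itself to $\mathcal{P}_{2,ac}(\mathcal{X})$ via the multi-marginal reformulation of the remark on equivalence between multi-marginal coupling and barycenter (the averaging map $\{x_z\} \mapsto \int x_z\, d\lambda(z)$ pushes the optimal multi-marginal plan onto $\bar{\mu}$, and concentration on graphs of Brenier maps issued from the ac marginals forces $\bar{\mu}$ to be absolutely continuous), and then invoke the first-order optimality condition $\int_{\mathcal{Z}} S_z\, d\lambda(z) = \mathrm{Id}$ holding $\bar{\mu}$-a.e.\ (where $S_z$ is the Brenier map from $\bar{\mu}$ to $\mu_z$). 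Uniqueness then follows from a one-shot strict-convexity comparison along a genuine McCann geodesic between $\bar{\mu}_0$ and $\bar{\mu}_1$, bypassing the gluing issue entirely.
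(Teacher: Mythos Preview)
The paper does not prove this theorem; it is quoted from \cite{le2017existence} and used as a black box, so there is no in-paper argument to compare against. Your existence argument via the direct method (uniform second moments $\Rightarrow$ tightness $\Rightarrow$ Prokhorov, then Fatou plus weak lower semicontinuity of $\mathcal{W}_2^2$) is correct and is the standard route.

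Your uniqueness argument, however, has a genuine gap that your own ``fix'' does not close. You correctly diagnose that the naive generalized-geodesic estimate fails: for a base point $\mu_{z_0}\in\mathcal{P}_{2,ac}$ and Brenier maps $T_0,T_1$ to the two candidate barycenters, the coupling $(\,\cdot\,,(1-t)T_0+tT_1)_\sharp\pi$ only yields $\int\|x-T_i(y)\|^2\,d\pi \ge \mathcal{W}_2^2(\mu_z,\bar\mu_i)$, the wrong direction. But the same obstruction persists for your proposed remedy of comparing along the McCann geodesic between $\bar\mu_0$ and $\bar\mu_1$: the functional $\nu\mapsto\mathcal{W}_2^2(\mu_z,\nu)$ is \emph{not} convex along displacement geodesics in general. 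A concrete two-point counterexample in $\mathbb{R}^2$ is $\mu=\tfrac12(\delta_{(1,0)}+\delta_{(0,1)})$, $\nu_0=\tfrac12(\delta_{(1,0)}+\delta_{(-1,0)})$, $\nu_1=\tfrac12(\delta_{(0,1)}+\delta_{(0,-1)})$, for which $\mathcal{W}_2^2(\mu,\nu_0)=\mathcal{W}_2^2(\mu,\nu_1)=1$ but $\mathcal{W}_2^2(\mu,\nu_{1/2})=3/2$. So neither the first-order identity $\int_{\mathcal Z}S_z\,d\lambda=\mathrm{Id}$ nor absolute continuity of the barycenter rescues a displacement-based strict-convexity argument without substantial extra work.

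The clean route is to abandon displacement interpolation altogether and use \emph{linear} (mixture) convexity. For every $\mu_z$ and every $\nu_0,\nu_1$, if $\gamma_i\in\Pi(\mu_z,\nu_i)$ is optimal then $(1-t)\gamma_0+t\gamma_1\in\Pi(\mu_z,(1-t)\nu_0+t\nu_1)$, whence $\nu\mapsto\mathcal{W}_2^2(\mu_z,\nu)$ is convex along the affine segment $t\mapsto(1-t)\nu_0+t\nu_1$. When $\mu_{z_0}\in\mathcal{P}_{2,ac}$ the optimal plan from $\mu_{z_0}$ is concentrated on the graph of a Brenier map; if $\nu_0\ne\nu_1$ then $(1-t)\gamma_0+t\gamma_1$ is supported on the union of two distinct graphs and hence cannot be optimal, giving \emph{strict} convexity of $\mathcal{W}_2^2(\mu_{z_0},\cdot)$ on that segment. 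Integrating in $z$ and using $\lambda(\{z:\mu_z\in\mathcal{P}_{ac}\})>0$ makes $F$ strictly convex along $t\mapsto(1-t)\bar\mu_0+t\bar\mu_1$, contradicting $F(\bar\mu_0)=F(\bar\mu_1)=\min F$. This is the argument underlying the cited reference and bypasses every gluing issue.
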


\begin{rema}[Applicability of assumptions in Theorem \ref{Existence and Uniqueness of Barycenter}]
The assumption that $\int_{\mathcal{Z}} \mathcal{W}_2^2(\mu_z,\nu)d\lambda(z) < \infty$ in the above result is satisfied in our application to the optimal fair learning outcome or data representation: When generating the optimal transport maps $\{T_z\}_z$, the training set has a finite number of data and hence finite different values of $z$ in the discrete case or after discretization in the continuous case. Therefore, since $\{\mu_z\}_z \subset \mathcal{P}_2(\mathcal{X})$, pick a value $z_0$ that is in the training set, we have that $\mathcal{W}_2^2(\mu_{z},\mu_{z_0})$ are essentially (w.r.t. $\lambda$) uniformly bounded. That implies $\int_{\mathcal{Z}} \mathcal{W}_2^2(\mu_z,\mu_{z_0})d\lambda(z) < \infty$.
\end{rema}

Now, we have the theoretical results that are needed to prove the main results, except for the McCann interpolation, which will be introduced in Section 4. The next step is to develop a computationally efficient method to compute (an estimation of) the Wasserstein barycenter, (the McCann interpolation of) optimal transport maps, and thereby the optimal fair model and Pareto frontier. More specifically, we focus on positive definite affine optimal transport maps.

\subsection{Rigid Translation}

Before deriving our main result on optimal positive definite affine maps, we first study the case where admissible maps are restricted to the set of rigid translations. The following property of rigid translations makes our results on the optimal affine maps simpler: we can assume, without loss of generality, that the first moments of the marginal measures are zero: $m_{X_z} := \mathbb{E}(X_z) = 0$ and $m_{Y_z} := \mathbb{E}(Y_z) = 0$.

\begin{lem} \label{l:Rigid translation property}

Let $\mu, \nu \in \mathcal{P}_2$, $m_{\mu} := \int x d\mu(x)$, and $m_{\nu} := \int x d\nu(x)$. Also, let $\mu', \nu'$ be the centered versions of $\mu,\nu$, respectively. It follows that 

\begin{equation}
\mathcal{W}_2^2(\mu,\nu) = \mathcal{W}_2^2(\mu',\nu') + ||m_{\mu} - m_{\nu}||^2.
\end{equation}

\end{lem}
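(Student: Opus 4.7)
The plan is to prove the identity directly from Kantorovich's coupling formulation of $\mathcal{W}_2^2$, exploiting the fact that a rigid translation is a bijection on the space of couplings and that the cross term vanishes under the first-moment normalization.

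First I would introduce the translation map $\Phi:(x_1,x_2)\mapsto(x_1-m_\mu,\, x_2-m_\nu)$ on $\mathbb{R}^d\times\mathbb{R}^d$. I would observe that if $\gamma\in\Pi(\mu,\nu)$, then $\Phi_\sharp\gamma\in\Pi(\mu',\nu')$ because the pushforward of the marginals are exactly the centered measures; conversely, translating back by $(m_\mu,m_\nu)$ maps $\Pi(\mu',\nu')$ back into $\Pi(\mu,\nu)$. Hence $\Phi_\sharp$ is a bijection between the admissible coupling sets.

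Next I would perform the pointwise algebraic expansion
\begin{equation}
\|x_1-x_2\|^2 \;=\; \|(x_1-m_\mu)-(x_2-m_\nu)\|^2 + 2\,\langle (x_1-m_\mu)-(x_2-m_\nu),\, m_\mu-m_\nu\rangle + \|m_\mu-m_\nu\|^2,
\end{equation}
and integrate against $\gamma$. The key point is that the cross term integrates to zero: since the marginals of $\gamma$ are $\mu$ and $\nu$, we have $\int(x_1-m_\mu)\,d\gamma = 0$ and $\int(x_2-m_\nu)\,d\gamma = 0$, so only the first and third terms survive. Applying the change-of-variables formula under $\Phi_\sharp$ to the first term yields
\begin{equation}
\int \|x_1-x_2\|^2\,d\gamma \;=\; \int \|x_1'-x_2'\|^2\,d(\Phi_\sharp\gamma) + \|m_\mu-m_\nu\|^2.
\end{equation}

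Finally I would take the infimum over $\gamma\in\Pi(\mu,\nu)$ on both sides. Because $\Phi_\sharp$ is a bijection onto $\Pi(\mu',\nu')$ and the additive term $\|m_\mu-m_\nu\|^2$ is a constant independent of $\gamma$, the infimum on the left equals $\mathcal{W}_2^2(\mu',\nu') + \|m_\mu-m_\nu\|^2$, giving the desired identity. I do not anticipate a substantial obstacle; the only item to be careful about is checking the bijection of coupling sets and confirming that the cross term truly vanishes under integration against any admissible coupling (not merely the optimal one), which is exactly what makes the argument work uniformly before taking infima.
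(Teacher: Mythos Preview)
Your proposal is correct and uses essentially the same ingredients as the paper's proof: the translation bijection between $\Pi(\mu,\nu)$ and $\Pi(\mu',\nu')$ together with the vanishing of the cross term under any admissible coupling. The paper organizes this as a sandwich of two inequalities (translating the optimal $\gamma^*$ forward, then the optimal $(\gamma')^*$ back), whereas you obtain the cost identity for every coupling first and then pass to the infimum via the bijection, which is a slightly cleaner packaging of the same argument.
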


\begin{proof}
See Appendix \ref{A:Section 2 Appendix}.
\end{proof}

Notice that the above result allows us to assume measures to have vanishing first moments when deriving the optimal transport maps. Indeed, if $T_{\mu' \nu'}$ is the Brenier's map between $\mu'$ and $\nu'$, then $T_{\mu \nu} : =  T_{+m_{\nu}} \circ T_{\mu' \nu'} \circ T_{-m_{\mu}}$ is the optimal transport map between $\mu$ and $\nu$. Here, $T_{+m_{\nu}}(x) := x + m_{\nu}$ and $T_{-m_{\mu}}$ are defined analogously.

In the rest of Section~\ref{S:Prelim}, we assume without loss of generality that the first moments of the measures are all equal to zero.

\subsection{Location-Scale Case and Optimal Affine Transport}

A sufficient condition for Brenier's maps to be positive definite affine is to require a certain ``similarity" between the marginal data distributions. One natural choice is to assume $\{Y_z\}_z$ and $\{X_z\}_z$ to be non-degenerate Gaussian vector $\lambda$-a.e.. As shown in \cite{alvarez2016fixed}, the assumptions of Gaussian vector can easily be generalized to a {\em location-scale family}. In the definition below, $\mathcal{S}^d_{++}$ denotes the set of all $d \times d$ positive definite matrices.

The generalization from Gaussian to location-scale families is important for the main result in the next section, where we consider computationally efficient solutions to a relaxation of the Wasserstein barycenter problem in the case of general marginal distributions.

\begin{defi}[Location-Scale Family]
For any $\mathcal{L}(X_0) \in \mathcal{P}(\mathbb{R}^d)$, define

\begin{equation} \label{Location-scale family}
\mathcal{F}(\mathcal{L}(X_0)) := \big\{\mathcal{L}(AX_0 + m): A \in \mathcal{S}^d_{++}, m \in \mathbb{R}^d \big\}.
\end{equation}
The set $\mathcal{F}(\mathcal{L}(X_0))$ is called a location-scale family characterized by $\mathcal{L}(X_0)$.
\end{defi}

In other words, under the assumption of vanishing first moments, the random variables that share laws in the same location-scale family can be transformed into each other by a positive definite linear transformation.

In~\cite{alvarez2016fixed} it is shown that Brenier's map between two probability measures, each having a vanishing first moment, within the same location-scale family is linear and has a closed form. 

\begin{lem}[Optimal affine map] \label{l:Optimal Affine Map}
If $\mu, \nu \in \mathcal{F}(\mathcal{L}(X_0))$ for some $X_0$ such that $m_{\mu} = m_{\nu} = 0$, then the Brenier's map that pushes $\mu$ forward to $\nu$ is given by:

\begin{equation} \label{closed-form map}
T_{\mu \nu} = \Sigma_{\mu}^{-\frac{1}{2}} (\Sigma_{\mu}^{\frac{1}{2}} \Sigma_{\nu} \Sigma_{\mu}^{\frac{1}{2}} )^{\frac{1}{2}} \Sigma_{\mu}^{-\frac{1}{2}} 
\end{equation}
where $\Sigma_{\mu} := \int xx^T d\mu$ and $\Sigma_{\nu} := \int xx^T d\nu$.
\end{lem}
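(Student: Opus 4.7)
The plan is to invoke Brenier's theorem, as stated earlier in the section: the unique optimal transport map from $\mu \in \mathcal{P}_{2,ac}(\mathbb{R}^d)$ to $\nu$ is ($\mu$-a.e.) the gradient of a convex function whose push-forward of $\mu$ equals $\nu$. Consequently, to identify the Brenier map it suffices to exhibit a convex potential $\phi$ with $(\nabla\phi)_\sharp \mu = \nu$, and conclude $T_{\mu\nu} = \nabla\phi$. I would take $\phi(x) := \tfrac{1}{2} x^\top M x$ with $M$ equal to the right-hand side of \eqref{closed-form map}, so that $\nabla\phi(x) = Mx$ is the candidate affine map.

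First I would check that $M$ is symmetric and positive definite, so that $\phi$ is a strictly convex quadratic and $\nabla\phi$ is a legitimate candidate for the Brenier map. Since $\Sigma_\mu, \Sigma_\nu \succ 0$, the factor $\Sigma_\mu^{1/2}\Sigma_\nu\Sigma_\mu^{1/2}$ is the conjugation of a positive definite matrix by the invertible symmetric matrix $\Sigma_\mu^{1/2}$, hence symmetric positive definite, so it admits a unique symmetric positive definite square root. Sandwiching that root by $\Sigma_\mu^{-1/2} \succ 0$ preserves both symmetry and positive definiteness, giving $M \in \mathcal{S}^d_{++}$.

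Second, I would verify the covariance identity $M \Sigma_\mu M = \Sigma_\nu$ by the telescoping cancellation
\begin{equation*}
M \Sigma_\mu M = \Sigma_\mu^{-1/2}(\Sigma_\mu^{1/2}\Sigma_\nu\Sigma_\mu^{1/2})^{1/2}\Sigma_\mu^{-1/2} \cdot \Sigma_\mu \cdot \Sigma_\mu^{-1/2}(\Sigma_\mu^{1/2}\Sigma_\nu\Sigma_\mu^{1/2})^{1/2}\Sigma_\mu^{-1/2} = \Sigma_\nu.
\end{equation*}
This identity is the crux: it asserts that the second moment of $T_\sharp \mu$ matches that of $\nu$. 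Writing $\mu = \mathcal{L}(AX_0)$ and $\nu = \mathcal{L}(BX_0)$ with $A, B \in \mathcal{S}^d_{++}$, which is possible by membership in $\mathcal{F}(\mathcal{L}(X_0))$ with zero first moments, the push-forward is $T_\sharp \mu = \mathcal{L}(MAX_0)$ with covariance $MA\,\Sigma_{X_0}\,(MA)^\top = M\Sigma_\mu M = \Sigma_\nu$.

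Third, to promote the covariance match to distributional equality $T_\sharp \mu = \nu$, I would use the defining structural property of the location-scale family: a centered element of $\mathcal{F}(\mathcal{L}(X_0))$ is characterized by its covariance. Equivalently, the characteristic function of $X_0$ takes the form $\phi_{X_0}(t) = g(t^\top \Sigma_{X_0} t)$, so
\begin{equation*}
\phi_{MAX_0}(t) = g\bigl(t^\top M A\,\Sigma_{X_0}\,AM\,t\bigr) = g\bigl(t^\top M\Sigma_\mu M\,t\bigr) = g\bigl(t^\top \Sigma_\nu t\bigr) = \phi_{BX_0}(t),
\end{equation*}
so $MAX_0 \stackrel{d}{=} BX_0$ and $T_\sharp \mu = \nu$. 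Brenier's theorem then identifies $T(x) = Mx$ as the unique optimal transport map. The main obstacle is this last step: converting the covariance identity into a distributional identity requires the elliptical / location-scale structure encoded in $\mathcal{F}(\mathcal{L}(X_0))$, and is the place where the assumption that both $\mu$ and $\nu$ share the family is essential—without it, the covariance match would not force distributional equality and the affine formula would break down.
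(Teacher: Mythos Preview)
The paper does not prove this lemma; it simply cites \cite{alvarez2016fixed}. Your first two steps---checking that $M\in\mathcal{S}^d_{++}$ so that $x\mapsto Mx$ is the gradient of a convex quadratic, and verifying the covariance identity $M\Sigma_\mu M=\Sigma_\nu$---are the standard route and are correct.

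The gap is in your third step. You write that ``equivalently, the characteristic function of $X_0$ takes the form $\phi_{X_0}(t)=g(t^\top\Sigma_{X_0}t)$,'' but that is the defining property of an \emph{elliptical} generator, which is strictly stronger than the paper's location-scale definition $\mathcal{F}(\mathcal{L}(X_0))=\{\mathcal{L}(AX_0+m):A\in\mathcal{S}^d_{++},\,m\in\mathbb{R}^d\}$. The two claims you join by ``equivalently'' are not equivalent. It is true that the covariance of a centered member of $\mathcal{F}(\mathcal{L}(X_0))$ determines the SPD factor uniquely (from $A\Sigma_{X_0}A=B\Sigma_{X_0}B$ one gets $(\Sigma_{X_0}^{1/2}A\Sigma_{X_0}^{1/2})^2=(\Sigma_{X_0}^{1/2}B\Sigma_{X_0}^{1/2})^2$ and hence $A=B$), but that does \emph{not} yield $T_\sharp\mu=\nu$: the matrix $MA_\mu$ is a product of two SPD matrices and is in general not symmetric, so $\mathcal{L}(MA_\mu X_0)$ need not lie in $\mathcal{F}(\mathcal{L}(X_0))$ at all, and once outside the family the covariance match no longer forces equality in law. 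A concrete failure: take $d=2$, $X_0=(U,V)$ with $U$ centered uniform and $V$ standard Gaussian, independent with unit variances; let $A_\mu=\mathrm{diag}(1,2)$ and $A_\nu$ the $2\times 2$ SPD matrix with diagonal entries $2$ and off-diagonal entries $1$. Then $MA_\mu$ is not symmetric, and $MA_\mu X_0$ and $A_\nu X_0$ mix the uniform and Gaussian coordinates with different weights, so they share covariance but not law. Your characteristic-function computation is valid precisely under the elliptical hypothesis; once you state that hypothesis explicitly the argument goes through, and the Gaussian case the paper actually uses downstream is of course covered.
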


\begin{proof}
See, for example, Theorem 2.3 in \cite{alvarez2016fixed}.
\end{proof}

\begin{rema}
The optimal affine map is also the midpoint of the geodesic path joining $\Sigma_{\mu}^{-1}$ and $\Sigma_{\nu}$ on the manifold of positive definite matrices. We refer interested readers to, for example, Chapter 6.1 in~\cite{bhatia2009positive} for more details.
\end{rema}

Now, back to the barycenter problem. It follows from Lemma \ref{l:Optimal Affine Map} that, if one assumes that all the marginals belong to the same location-scale family, then the barycenter also belongs to the family and a nearly closed-form solution to the barycenter is available.

\begin{lem}[Barycenter in the location-scale case] \label{l:Location-scale Barycenter}
Assume $\{\mu_z\}_z$ belong to the same location-scale family $\mathcal{F}(P_0)$ and satisfy $m_{\mu_z} = 0, \Sigma_{\mu_z} \succ 0, \lambda-a.e.$, then there exists a unique solution, denoted by $\bar{\mu}$, to $\eqref{barycenter}$. Moreover, $\bar{\mu}$ also belongs to $\mathcal{F}(P_0)$ and is characterized by $m_{\bar{\mu}} = 0$ and $\Sigma_{\bar{\mu}} = \Sigma$ where $\Sigma$ is the unique solution to the following equation:

\begin{equation}\label{barycenter equation}
\int_{\mathcal{Z}} (\Sigma^{\frac{1}{2}} \Sigma_{\mu_z} \Sigma^{\frac{1}{2}})^{\frac{1}{2}} d\lambda(z) = \Sigma,
\end{equation}
where $\Sigma_{\mu_z}$ is the second moment of $\mu_z, \forall z \in \mathcal{Z}$.

\end{lem}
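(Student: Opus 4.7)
The plan is to reduce the variational barycenter problem to an optimization over positive-definite covariance matrices inside $\mathcal{F}(P_0)$, and then invoke a known matrix fixed-point theorem. First, by Lemma \ref{l:Rigid translation property} and the hypothesis $m_{\mu_z}=0$, any minimizer of $V(\mu):=\int_{\mathcal{Z}}\mathcal{W}_2^2(\mu_z,\mu)\,d\lambda(z)$ may be taken with zero first moment. For any $\nu\in\mathcal{F}(P_0)$ with $m_\nu=0$ and covariance $\Sigma\succ 0$, Lemma \ref{l:Optimal Affine Map} together with the change-of-variables identity for the Brenier map yields the Bures-type expression
\begin{equation}
\mathcal{W}_2^2(\mu_z,\nu)=\mathrm{tr}(\Sigma_{\mu_z})+\mathrm{tr}(\Sigma)-2\,\mathrm{tr}\bigl((\Sigma^{1/2}\Sigma_{\mu_z}\Sigma^{1/2})^{1/2}\bigr),
\end{equation}
so the restricted minimization over $\mathcal{F}(P_0)$ becomes an explicit smooth problem in $\Sigma\succ 0$.

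Next, I would argue that the full minimization over $\mathcal{P}_2(\mathbb{R}^d)$ is actually realized inside $\mathcal{F}(P_0)$ by means of the Gelbrich lower bound: for arbitrary $\mu,\nu\in\mathcal{P}_2$,
\begin{equation}
\mathcal{W}_2^2(\mu,\nu)\ \ge\ \|m_\mu-m_\nu\|^2+\mathrm{tr}\bigl(\Sigma_\mu+\Sigma_\nu-2(\Sigma_\mu^{1/2}\Sigma_\nu\Sigma_\mu^{1/2})^{1/2}\bigr),
\end{equation}
with equality when $\mu,\nu$ both lie in a common location-scale family. Applied with $\mu=\mu_z\in\mathcal{F}(P_0)$, comparing an arbitrary $\nu\in\mathcal{P}_2$ to the element $\nu'\in\mathcal{F}(P_0)$ sharing its first two moments gives $V(\nu)\ge V(\nu')$, so every minimizer has a counterpart in $\mathcal{F}(P_0)$ attaining the same value. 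Existence of a barycenter in $\mathcal{P}_2$ follows from Theorem \ref{Existence and Uniqueness of Barycenter} once I check $\int\mathcal{W}_2^2(\mu_z,\mu_{z_0})\,d\lambda(z)<\infty$ for some $z_0\in\mathcal{Z}$, which is immediate from the Bures formula and the assumed finiteness of the covariances.

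Within $\mathcal{F}(P_0)$, rather than differentiate $V$ in $\Sigma$ directly, I would use a Brenier-averaging argument: at an interior minimizer $\bar\mu$ with covariance $\Sigma$, the average of the optimal transport maps $T_{\bar\mu\mu_z}$ from $\bar\mu$ to each $\mu_z$ must equal the identity. Substituting the closed form from Lemma \ref{l:Optimal Affine Map} gives
\begin{equation}
\Sigma^{-1/2}\!\left(\int_{\mathcal{Z}}(\Sigma^{1/2}\Sigma_{\mu_z}\Sigma^{1/2})^{1/2}\,d\lambda(z)\right)\!\Sigma^{-1/2}=I,
\end{equation}
which is exactly the fixed-point equation \eqref{barycenter equation}.

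The hard part is the existence and uniqueness of a positive-definite solution to this fixed-point equation; this I would import from \cite{alvarez2016fixed}, where the operator $F(\Sigma):=\int_{\mathcal{Z}}(\Sigma^{1/2}\Sigma_{\mu_z}\Sigma^{1/2})^{1/2}\,d\lambda(z)$ is shown to admit a unique fixed point in $\mathcal{S}^d_{++}$ under $\Sigma_{\mu_z}\succ 0$ $\lambda$-a.e.\ (together with monotonicity/contractivity properties of $F$ on the cone). Combined with the Gelbrich reduction, this pins down $\bar\mu$ uniquely as the element of $\mathcal{F}(P_0)$ with zero mean and covariance equal to this fixed point, completing the proof. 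The two subtleties I expect to spend the most care on are (i) making the ``replacement'' step rigorous—so that uniqueness across $\mathcal{P}_2(\mathbb{R}^d)$, not merely inside $\mathcal{F}(P_0)$, is obtained—and (ii) citing the matrix fixed-point theorem with hypotheses matching the present setting.
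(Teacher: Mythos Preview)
Your argument is correct, but the route differs from the paper's in one key step: how to show that the barycenter actually lies in $\mathcal{F}(P_0)$. You use the Gelbrich lower bound to replace an arbitrary competitor $\nu\in\mathcal{P}_2$ by its location-scale surrogate $\nu'\in\mathcal{F}(P_0)$ with matching first two moments, which cleanly reduces the variational problem to the cone $\mathcal{S}^d_{++}$. The paper instead invokes the equivalence between the barycenter problem and the multi-marginal coupling problem (Remark~\ref{equivalence between couple and bary}): since the barycenter is the law of the linear average $T(\{X_z\}_z)=\int_{\mathcal{Z}}X_z\,d\lambda(z)$ under the optimal coupling, and affine images of $P_0$ stay in $\mathcal{F}(P_0)$ (with a closure argument for infinite $|\mathcal{Z}|$), membership in $\mathcal{F}(P_0)$ follows structurally. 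Your Gelbrich reduction is more elementary and self-contained; the paper's multi-marginal route is more conceptual and ties directly into the ``matched-points average'' interpretation used elsewhere in the paper. From that point on the two proofs coincide: both obtain \eqref{barycenter equation} from the first-order condition $\int_{\mathcal{Z}}T_{\bar\mu\mu_z}\,d\lambda(z)=Id$ together with Lemma~\ref{l:Optimal Affine Map}, and both outsource existence/uniqueness of the fixed point to the cited literature (\cite{agueh2011barycenters}, \cite{alvarez2016fixed}). Your flagged subtlety~(i) about uniqueness across all of $\mathcal{P}_2$ is handled in the paper simply by citing Theorem~\ref{Existence and Uniqueness of Barycenter} directly, which you may also do rather than trying to extract it from the Gelbrich comparison.
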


\begin{proof}
See Appendix \ref{A:Section 2 Appendix}.
\end{proof}

In the case where $m_{\mu_z} \neq 0$, it follows from Lemma $\ref{l:Rigid translation property}$ that

\begin{align*}
\int_{\mathcal{Z}} \mathcal{W}_2^2(\mu_z,\mu) d\lambda(z) & = \int_{\mathcal{Z}} \mathcal{W}_2^2(\mu'_z,\mu') d\lambda(z) + \int_{\mathcal{Z}} ||m_{\mu_z} - m_{\mu}||^2 d\lambda(z) 
\end{align*}
where $\mu'$ denotes the centered version of $\mu$. By Lemma $\ref{l:Location-scale Barycenter}$, we know the first term on the right is minimized at $\bar{\mu}' \sim \mathcal{N}(0,\Sigma_{\bar{\mu}})$. Also, the second term on the right is minimized at the Fr\'{e}chet mean with Euclidean metric, which is equal to the expectation. That is, $m_{\bar{\mu}} = \int_{\mathcal{Z}} m_{\mu_z} d\lambda(z)$. As a result, the optimal transport map is

\begin{equation} \label{eq:rigid translation composition}
T_{\mu_z \bar{\mu}} = T_{+m_{\bar{\mu}}} \circ T_{\mu_z' \bar{\mu}'} \circ T_{-m_{\mu_z}}
\end{equation}

\begin{rema}[Solution to \eqref{barycenter equation}] \label{r:solution to baryceneter equation}
The non-linear matrix equation \eqref{barycenter equation} has a unique solution that can be approached via the following iterative process:

\begin{equation} \label{eq:iterative method}
\int_{\mathcal{Z}} (\Sigma_i^{\frac{1}{2}} \Sigma_{\mu_z} \Sigma_i^{\frac{1}{2}})^{\frac{1}{2}} d\lambda(z) \rightarrow \Sigma_{i+1}.
\end{equation}
We refer interested readers to \cite{alvarez2016fixed} for more details on the fixed point approach to the Wasserstein barycenter. The present work only applies this fact in the algorithm design in Section~\ref{s:Algorithm}.
\end{rema}

\section{Wasserstein Barycenter Characterization of the Optimal Fair Learning Outcome} \label{s:Post-Processing}

Optimal transport has been considered an adversarial or constrained optimization problem in its application to machine learning. In particular, some of the most popular unsupervised learning methods, such as K-means and PCA, are specific examples of the Wasserstein barycenter problems when putting restrictions on the admissible transport maps and relaxation on the weak equivalence requirement of the push-forwards w.r.t.\ test functions. See, for example, \cite{tabak2018explanation} for more details. But we apply optimal transport in an opposite direction so that the independence or imperceptibility of the sensitive variable $Z$ becomes theoretically provable.

In this section, the primary goal is to develop the optimal affine map and pseudo-barycenter as tools to solve the challenge of the high computational cost of Wasserstein barycenter and optimal transport maps in high-dimensional data space. More specifically, we restrict the admissible transport maps to be merely affine maps while relaxing the fairness constraint to a sufficient and necessary level. The importance of efficiency in computing the barycenter and optimal transport maps will soon be clear in Section \ref{s:Pareto Frontier} when we compute the Pareto frontier along the Wasserstein geodesic path. Furthermore, the importance of affinity of transport maps will also be soon clear in Section \ref{s:Optimal Fair Data Representation} when solving the optimal fair data representation problem \eqref{eq:optimal fair data representation problem}.

The organization of the current section is as follows: we first generalize the Wasserstein barycenter characterization of the optimal regression to all $L^2$-objective supervised learning models, then apply the optimal affine maps to estimate high-dimension optimal learning outcome. Now, we show that the (unique) solution to Problem \ref{prob:Optimal Fair $L^2$-objective Learning Outcome} can be characterized as the Wasserstein barycenter of the conditional expectation sensitive marginals. The barycenter characterization of the optimal fair regression is first proved in \cite{chzhen2020fair, gouic2020projection}.

\subsection{Wasserstein Barycenter Characterization} \label{s:post bary}

We start with a characterization of the optimal learning outcome of the $L^2$-objective supervised learning task.  Let $\mathbb{E}(Y|X,Z)_z$ be the sensitive marginals of $(\mathbb{E}(Y|X,Z),Z)$ (or, equivalently, the sensitive conditionals of $\mathbb{E}(Y|X,Z)$ on $\{Z = z\}$ by Remark \ref{r:name of marginals}) for $\lambda$-a.e. $z \in \mathcal{Z}$, $\mathcal{L}(\mathbb{E}(Y|X,Z)_z) := \mu_z$, and $\bar{\mu}$ denote the Wasserstein barycenter of $\{\mu_z\}_{z \in \mathcal{Z}}$. Also, let $T(\cdot,z)$ denote the optimal transport map from $\mu_z$ to $\bar{\mu}$.

\begin{lem}[Optimal fair $L^2$-objective supervised learning characterization]\label{l:Optimal Fair $L^2$-Objective Supervised Learning Characterization}
Assume that the conditional expectation marginals $\{\mu_z\}_{z \in \mathcal{Z}} \subset \mathcal{P}_{2,ac}(\mathcal{Y})$, then
\begin{equation}
\overline{\mathbb{E}(Y|X,Z)} = T(\mathbb{E}(Y|X,Z),Z)  := \{T(\mathbb{E}(Y|X,Z)_z,z)\}_{z \in \mathcal{Z}} 
\end{equation}
is the unique solution to Problem \ref{prob:Optimal Fair $L^2$-objective Learning Outcome}. Furthermore, we have
\begin{align*}
||Y - T(\mathbb{E}(Y|X,Z),Z)||_2^2 & =  \inf_{f \in L^2(\mathcal{X} \times \mathcal{Z},\mathcal{Y})} \{||Y - f(X,Z)||_2^2 : f(X,Z) \perp Z\}\\
& = ||Y - \mathbb{E}(Y|X,Z)||_2^2 + \int_{\mathcal{Z}} \mathcal{W}_2^2(\mu_z, \bar{\mu}) d\lambda
\end{align*}
\end{lem}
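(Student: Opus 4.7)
The plan is to reduce the constrained minimization to a Wasserstein barycenter problem via the standard $L^2$-orthogonal projection, then exhibit an optimizer by pushing $\mathbb{E}(Y\mid X,Z)$ forward along the Brenier maps to the barycenter.

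First I would use the $L^2$-orthogonality of conditional expectation: since $f(X,Z)$ is $\sigma(X,Z)$-measurable and $\mathbb{E}(Y\mid X,Z)$ is the $L^2$-projection of $Y$ onto that sigma-algebra,
\begin{equation*}
\|Y-f(X,Z)\|_2^2 = \|Y-\mathbb{E}(Y\mid X,Z)\|_2^2 + \|\mathbb{E}(Y\mid X,Z)-f(X,Z)\|_2^2,
\end{equation*}
so the first term is a constant independent of $f$ and the problem reduces to minimizing the second term subject to $f(X,Z)\perp Z$.

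Next I would disintegrate the second term along $Z$. Writing the conditional marginals of $\mathbb{E}(Y\mid X,Z)$ as $\mu_z$ and those of $f(X,Z)$ as $\nu_z$, we get
\begin{equation*}
\|\mathbb{E}(Y\mid X,Z)-f(X,Z)\|_2^2 = \int_{\mathcal{Z}}\mathbb{E}\!\left[\|\mathbb{E}(Y\mid X,Z)_z-f(X,Z)_z\|^2\right] d\lambda(z) \;\ge\; \int_{\mathcal{Z}}\mathcal{W}_2^2(\mu_z,\nu_z)\,d\lambda(z),
\end{equation*}
since the conditional joint law of $(\mathbb{E}(Y\mid X,Z)_z,f(X,Z)_z)$ is an admissible coupling for the optimal transport problem between $\mu_z$ and $\nu_z$. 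The independence constraint $f(X,Z)\perp Z$ forces $\nu_z\equiv\nu$ to be the same probability measure for $\lambda$-a.e.\ $z$. Thus the problem is lower-bounded by $\inf_{\nu\in\mathcal{P}_2(\mathcal{Y})} \int_{\mathcal{Z}}\mathcal{W}_2^2(\mu_z,\nu)\,d\lambda(z)$, whose unique minimizer is the Wasserstein barycenter $\bar\mu$ by Theorem~\ref{Existence and Uniqueness of Barycenter} (applicable because $\{\mu_z\}\subset\mathcal{P}_{2,ac}(\mathcal{Y})$).

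To attain the lower bound I would then construct the candidate $f^\star(X,Z):=T(\mathbb{E}(Y\mid X,Z),Z)$, where $T(\cdot,z)$ is the Brenier map pushing $\mu_z$ to $\bar\mu$. Measurability of the map $z\mapsto T(\cdot,z)$ (disintegration theorem and Borel measurability of Brenier maps under the absolute continuity assumption on $\mu_z$) gives $f^\star\in L^2(\mathcal{X}\times\mathcal{Z},\mathcal{Y})$. By construction its conditional laws equal $\bar\mu$ for $\lambda$-a.e.\ $z$, hence $f^\star(X,Z)\perp Z$, and the coupling $(\mathbb{E}(Y\mid X,Z)_z,T(\mathbb{E}(Y\mid X,Z)_z,z))$ is the optimal one, so equality holds in the inequality above, giving the claimed decomposition
\begin{equation*}
\|Y-T(\mathbb{E}(Y\mid X,Z),Z)\|_2^2 = \|Y-\mathbb{E}(Y\mid X,Z)\|_2^2 + \int_{\mathcal{Z}}\mathcal{W}_2^2(\mu_z,\bar\mu)\,d\lambda(z).
\end{equation*}
Uniqueness then follows from uniqueness of $\bar\mu$ (every minimizer must have common conditional law $\bar\mu$) combined with $\mu$-a.e.\ uniqueness of Brenier maps (every minimizer must, conditionally on $z$, couple $\mu_z$ to $\bar\mu$ optimally).

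The main obstacle I expect is a careful treatment of the measurability and disintegration step: one must justify that the coupling $(\mathbb{E}(Y\mid X,Z)_z, f(X,Z)_z)$ really is a well-defined joint law for $\lambda$-a.e.\ $z$ so that the Wasserstein lower bound applies pointwise in $z$, and dually that $z\mapsto T(\cdot,z)$ can be selected Borel measurably so that $f^\star$ is a bona fide element of $L^2(\mathcal{X}\times\mathcal{Z},\mathcal{Y})$. Both follow from the disintegration theorem combined with the uniqueness part of Brenier's theorem under absolute continuity, but they are the only non-routine ingredients; the remainder is the orthogonal decomposition and the definition of the barycenter.
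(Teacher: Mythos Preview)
Your proposal is correct and follows essentially the same route as the paper: orthogonal decomposition via conditional expectation, disintegration in $z$ to lower-bound by $\int \mathcal{W}_2^2(\mu_z,\nu)\,d\lambda$, identification of the barycenter $\bar\mu$ as the minimizer, construction of the optimizer via Brenier maps $T(\cdot,z)$, and uniqueness from uniqueness of $\bar\mu$ and of the Brenier maps. The paper also singles out the measurability of $(y,z)\mapsto T(y,z)$ as the one non-routine step and handles it via a disintegration argument on $\pi=(Id,T_z)_\sharp\mu_z\,d\lambda$, exactly the kind of justification you flag as the main obstacle.
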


\begin{proof}
First, notice that the fairness constraint $ f(X,Z) \perp Z$ is equivalent to $\mathcal{L}(f(X,Z)_z) = \mu$ $\lambda$-a.e. for some $\mu \in \mathcal{P}(\mathcal{Y})$. Now, we prove the lower bound: let $f \in L^2(\mathcal{X} \times \mathcal{Z},\mathcal{Y})$ satisfies $f(X,Z) \perp Z$, we have
\begin{align*}
||Y - f(X,Z)||_2^2 = & ||Y - \mathbb{E}(Y|X,Z)||_2^2 + ||\mathbb{E}(Y|X,Z) - f(X,Z)||_2^2\\
= & ||Y - \mathbb{E}(Y|X,Z)||_2^2 + \int_{\mathcal{Z}} ||\mathbb{E}(Y|X,Z)_z - f(X,Z)_z||_2^2 d\lambda\\
\geq & ||Y - \mathbb{E}(Y|X,Z)||_2^2 + \int_{\mathcal{Z}} \mathcal{W}_2^2(\mu_z, \mathcal{L}(f(X,Z)_z)) d\lambda\\
\geq & ||Y - \mathbb{E}(Y|X,Z)||_2^2 + \int_{\mathcal{Z}} \mathcal{W}_2^2(\mu_z, \bar{\mu}) d\lambda
\end{align*}
Here, the first line follows from the $L^2$ projection characterization of conditional expectation, the second follows from disintegration, the third from the definition of $\mathcal{W}_2$, and the fourth from the definition of the Wasserstein barycenter and the fairness restriction $ f(X,Z) \perp Z$.

Next, we construct a $f_Y \in L^2(\mathcal{X} \times \mathcal{Z},\mathcal{Y})$ such that the lower bound is obtained. Let $T_z$ denote the optimal transport map such that $(T_z)_{\sharp}\mu_z = \bar{\mu}$ for $\lambda$-a.e. $z \in \mathcal{Z}$. Define $T(\cdot,z) := T_z(\cdot)$ and
\begin{equation}
f_Y(X,Z) := T((\mathbb{E}(Y|X,Z),Z)).
\end{equation}
Here, $\pi := (Id,T_z)_{\sharp}\mu_zd\lambda$ defines $\pi \in \mathcal{P}(\mathcal{Z} \times \mathcal{Y} \times \mathcal{Y})$. Hence, we have $\pi = \pi_{(y,z)}d\lambda_{(\mathbb{E}(Y|X,Z),Z)}$ and $\pi_{(y,z)} = \delta_{T(y,z)}$ $\lambda_{(\mathbb{E}(Y|X,Z),Z)}-a.e.$. Since $(y,z) \rightarrow \pi_{(y,z)} = \delta_{T(y,z)}$ is $\mathcal{Y} \times \mathcal{Z}/ \mathcal{P}(\mathcal{Y})$ measurable, we have $(y,z) \rightarrow T(y,z)$ is $\mathcal{Y} \times \mathcal{Z}/ \mathcal{Y}$ measurable. It follows from $\mathbb{E}(Y|\cdot,\cdot) \otimes Id|_{\mathcal{Z}}(\cdot,\cdot): \mathcal{X} \times \mathcal{Z} \rightarrow \mathcal{Y} \times \mathcal{Z}$ being $\mathcal{X} \times \mathcal{Z}/ \mathcal{Y} \times \mathcal{Z}$ measurable that $f_Y = T \circ (\mathbb{E}(Y|\cdot,\cdot) \otimes Id|_{\mathcal{Z}}(\cdot,\cdot))$ is $\mathcal{X} \times \mathcal{Z}/ \mathcal{Y}$ measurable. Also, $\bar{\mu} \in \mathcal{P}_2(\mathcal{Y}) \implies ||f_Y(X,Z)||_2 < \infty$. This proves $f_Y \in L^2(\mathcal{X} \times \mathcal{Z},\mathcal{Y})$. It remains to show that the lower bound is obtained at $f_Y(X,Z)$. Indeed, by construction, we have
\begin{align*}
||\mathbb{E}(Y|X,Z) - f_Y(X,Z)||_2^2
= & \int_{\mathcal{Z}} ||\mathbb{E}(Y|X,Z)_z - f_Y(X,Z)_z||_2^2 d\lambda\\
= & \int_{\mathcal{Z}} ||\mathbb{E}(Y|X,Z)_z - T(\mathbb{E}(Y|X,Z)_z,z)||_2^2 d\lambda \\
= & \int_{\mathcal{Z}} \mathcal{W}_2^2(\mu_z, (T_z)_{\sharp}\mu_z) d\lambda\\
= & \int_{\mathcal{Z}} \mathcal{W}_2^2(\mu_z, \bar{\mu}) d\lambda.
\end{align*}
It follows from the derivation of the lower bound above that
\begin{equation}
||Y - f_Y(X,Z)||_2^2 = \inf_{f \in L^2(\mathcal{X} \times \mathcal{Z},\mathcal{Y})} \{||Y - f(X,Z)||_2^2 : f(X,Z) \perp Z\}
\end{equation}
Uniqueness follows from the uniqueness of $\bar{\mu}$ and the uniqueness of $T(\cdot,z)$. We are done.
\end{proof}

The above result shows that the minimum $L^2$-loss for statistical parity can be nicely decomposed into two parts: (1) an $L^2(\mathcal{X} \times \mathcal{Z},\mathcal{Y})$ orthogonal projection loss due to the inference capability of $(X,Z)$ w.r.t. $Y$ and (2) an independence projection loss due to the statistical parity constraint. That is, $$\underbrace{\inf_{f} \{||Y - f(X,Z)||_2^2 : f(X,Z) \perp Z\}}_{\text{minimum loss for statistical parity}} =  \underbrace{||Y - \mathbb{E}(Y|X,Z)||_2^2}_{\text{orthogonal projection loss}} + \underbrace{\int_{\mathcal{Z}} \mathcal{W}_2^2(\mu_z, \bar{\mu}) d\lambda}_{\text{independence projection loss}}.$$ Furthermore, to construct the optimal fair $L^2$ learning outcome, one first performs $L^2$ orthogonal projection to obtain the conditional expectation $\mathbb{E}(Y|X,Z)$, then outputs the Wasserstein barycenter of the sensitive marginals of $\mathbb{E}(Y|X,Z)$ as the optimal (with respect to $L^2$-objective) fair (for statistical parity) result. 

Unfortunately, in practice, the characterization suffers from a lack of efficient methods to compute the Wasserstein barycenter and obtain an explicit formula of the optimal transport maps~\cite{altschuler2022wasserstein}. Current methods restrict the sensitive variable $Z$ to be binary mainly because the computation of a multi-marginal barycenter is expensive. Furthermore, notice the current methods restrict the dependent variable $Y$ to be one-dimensional, because the only well-known exact solution to transport maps is the inverse of cumulative function that merely works for one-dimensional variables.

Therefore, to provide methods using the characterization in high-dimensional dependent variable cases, we introduce the optimal affine map and the associated pseudo-barycenter.

\subsection{Optimal Affine Estimation: Pseudo-barycenter} \label{s:affine estimate}

To solve the challenge of deriving an explicit formula for the Wasserstein barycenter and optimal transport maps, we restrict the admissible transport maps to be affine and show that the estimation of the Wasserstein barycenter via optimal affine maps coincides with the true Wasserstein barycenter in the Gaussian case, and that the estimation error is bounded in the case of general distributions. In other words, we consider the choice of positive definite affine maps under two circumstances:
\begin{itemize}
\item[1] We assume the marginals are non-degenerate Gaussian. That is, $\{\mathbb{E}(Y|X,Z)_z\}_z$ are assumed to be non-degenerate Gaussian vectors $\lambda$-a.e..
\item[2] Instead of making assumptions on the data distribution, we relax the independence constraint to the independence between $Z$ and merely the first two moments of $f(X,Z)$.
\end{itemize}

From a theoretical perspective, affine maps allow us to derive (nearly) closed-form solutions under either of the assumptions mentioned above.
Also, affine maps allow us to develop a pre-processing approach by directly applying the obtained maps to the original data before training, even though such maps are constructed to push the post-training marginals toward their barycenter.

From a practical perspective, the advantage is obvious: the computation of affine maps only uses (sample estimation of) the first two moments of the marginal distributions and hence is highly efficient compared to the computation of general Brenier's maps, especially in the case of high-dimension data. 

Before developing the pseudo-barycenter, the following remarks  compare in more detail  the exact barycenter with its  affine approximation.

\begin{rema}[Applying pseudo-barycenter vs  exact barycenter] \label{r:Applying Pseudo-barycenter vs the Exact Barycenter}
The comparison between the pseudo-barycenter method and the exact barycenter is an analog of the comparison between the linear regression model and the exact conditional expectation: When there is no worry about over-fitting, a practitioner who cares more about the strict goal of minimizing $L^2$ error (analog: the strict statistical parity guarantee) should always try to find the exact conditional expectation function (analog: the exact barycenter and the corresponding exact transport maps) by using more complicated models. But the simplicity, robustness, and interpretability of linear regression (analog: pseudo-barycenter and optimal affine maps) are often useful in practice.
\end{rema}

We define the pseudo-barycenter, using merely matrix calculations, as follows:

\begin{defi} \label{d:Post-processing Pseudo-barycenter} The post-processing pseudo-barycenter $\hat{Y}^{\dag}$ is given via
\begin{equation}\label{eq:post-processing pseudo-barycenter}
\hat{Y}^{\dag} := T_{\text{affine}}(\hat{Y},Z),
\end{equation}
where
\begin{equation} \label{eq:post-processing maps}
T_{\text{affine}}(\cdot,z) :=  \Sigma_{\hat{Y}_z}^{-\frac{1}{2}} (\Sigma_{\hat{Y}_z}^{\frac{1}{2}} \Sigma \Sigma_{\hat{Y}_z}^{\frac{1}{2}} )^{\frac{1}{2}} \Sigma_{\hat{Y}_z}^{-\frac{1}{2}},
\end{equation}
and $\Sigma$ is the unique solution to
\begin{equation} \label{eq:post-processing bary cov estimation}
\int_{\mathcal{Z}} (\Sigma^{\frac{1}{2}} \Sigma_{\hat{Y}_z} \Sigma^{\frac{1}{2}})^{\frac{1}{2}} d\lambda(z) = \Sigma.
\end{equation}
\end{defi}

To obtain (an approximation of) the unique solution, we apply the iterative method \eqref{eq:iterative method} in Remark \ref{r:solution to baryceneter equation} when designing our algorithm in Section \ref{s:Algorithm}.

Now, Lemma \ref{l:Optimal Affine Map} shows that under the assumption of Gaussianity of the learning outcome marginals, the optimal transport map is affine and the pseudo-barycenter is indeed the Wasserstein barycenter. Moreover, Lemma \ref{l:Location-scale Barycenter} shows that the barycenter of Gaussian marginals is still Gaussian. Therefore, the optimal maps from the marginals to the barycenter are determined entirely by the first two moments.

\begin{lem}[Post-processing pseudo-barycenter  in the Gaussian case] \label{l:Post-processing Pseudo-barycenter Characterization in Gaussian Case}
Assume $\hat{Y}_z \sim \mathcal{N}(0,\Sigma_z)$ for $\lambda$-a.e. $z \in \mathcal{Z}$, then $\hat{Y}^{\dag}$ is the Wasserstein barycenter of $\{\hat{Y}_z\}_z$.
\end{lem}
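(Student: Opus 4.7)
The plan is to reduce the statement to a direct application of Lemma \ref{l:Optimal Affine Map} and Lemma \ref{l:Location-scale Barycenter}, by first observing that centered non-degenerate Gaussians form a location-scale family. Concretely, if $X_0 \sim \mathcal{N}(0,I_d)$, then for every positive definite $\Sigma_z$ we have $\mathcal{L}(\hat{Y}_z) = \mathcal{L}(\Sigma_z^{1/2} X_0)$ with $\Sigma_z^{1/2} \in \mathcal{S}^d_{++}$, so $\{\mathcal{L}(\hat{Y}_z)\}_z \subset \mathcal{F}(\mathcal{L}(X_0))$ in the sense of \eqref{Location-scale family}. The zero-mean and positive-definite covariance hypotheses of Lemma \ref{l:Location-scale Barycenter} are therefore satisfied.

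Invoking Lemma \ref{l:Location-scale Barycenter}, I would conclude that the Wasserstein barycenter $\bar{\mu}$ of $\{\mathcal{L}(\hat{Y}_z)\}_z$ exists, is unique, and itself lies in $\mathcal{F}(\mathcal{L}(X_0))$; hence $\bar{\mu} = \mathcal{N}(0,\Sigma^\star)$ where $\Sigma^\star$ is the unique positive definite solution of
\begin{equation*}
\int_{\mathcal{Z}} (\Sigma^{\star \frac{1}{2}} \Sigma_{z} \Sigma^{\star \frac{1}{2}})^{\frac{1}{2}} d\lambda(z) = \Sigma^\star.
\end{equation*}
Comparing with \eqref{eq:post-processing bary cov estimation} in Definition \ref{d:Post-processing Pseudo-barycenter}, the matrix $\Sigma$ appearing in the construction of $\hat{Y}^\dag$ is exactly $\Sigma^\star$ (both are the unique positive definite fixed point of the same operator).

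Next I would identify the transport. Since both $\mathcal{L}(\hat{Y}_z)$ and $\bar{\mu}$ are centered Gaussians in the common location-scale family, Lemma \ref{l:Optimal Affine Map} applies and says that the Brenier map pushing $\mathcal{L}(\hat{Y}_z)$ forward to $\bar{\mu}$ is
\begin{equation*}
T_z^{\text{Brenier}} = \Sigma_{\hat{Y}_z}^{-\frac{1}{2}} (\Sigma_{\hat{Y}_z}^{\frac{1}{2}} \Sigma^\star \Sigma_{\hat{Y}_z}^{\frac{1}{2}} )^{\frac{1}{2}} \Sigma_{\hat{Y}_z}^{-\frac{1}{2}},
\end{equation*}
which is exactly $T_{\text{affine}}(\cdot,z)$ from \eqref{eq:post-processing maps}. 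Hence the affine map built into the pseudo-barycenter is the genuine optimal transport map from $\mathcal{L}(\hat{Y}_z)$ to $\bar{\mu}$ for $\lambda$-a.e.\ $z$.

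Combining these two identifications finishes the proof: because $\hat{Y}^\dag = T_{\text{affine}}(\hat{Y},Z)$ and $(T_{\text{affine}}(\cdot,z))_\sharp \mathcal{L}(\hat{Y}_z) = \bar{\mu}$ for $\lambda$-a.e.\ $z$, the unconditional law of $\hat{Y}^\dag$ is $\bar{\mu}$, i.e.\ the Wasserstein barycenter of $\{\hat{Y}_z\}_z$. I do not foresee a substantive obstacle here: essentially everything is done once one recognizes that the pseudo-barycenter formulas \eqref{eq:post-processing pseudo-barycenter}--\eqref{eq:post-processing bary cov estimation} are, under the Gaussian assumption, literal restatements of the conclusions of Lemma \ref{l:Optimal Affine Map} and Lemma \ref{l:Location-scale Barycenter}. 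The mildly technical point worth flagging is that one should invoke the uniqueness clauses of both lemmas (uniqueness of the Brenier map $\mu$-a.e., and uniqueness of $\Sigma^\star$ as the fixed point of \eqref{barycenter equation}) to justify writing ``the'' pseudo-barycenter equals ``the'' Wasserstein barycenter rather than merely being one of possibly many choices.
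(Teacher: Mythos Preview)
Your proposal is correct and follows exactly the approach the paper takes: the paper does not give a standalone proof of this lemma but simply notes (in the paragraph immediately preceding the statement) that it follows from Lemma~\ref{l:Optimal Affine Map} and Lemma~\ref{l:Location-scale Barycenter}, which is precisely what you have spelled out in detail. Your argument---identifying the Gaussian family as a location-scale family, matching the fixed-point equation~\eqref{eq:post-processing bary cov estimation} with~\eqref{barycenter equation}, and recognizing the affine maps~\eqref{eq:post-processing maps} as the Brenier maps of Lemma~\ref{l:Optimal Affine Map}---is the intended route.
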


It follows from Theorem \ref{l:Post-processing Pseudo-barycenter Characterization in Gaussian Case} that, if $\hat{Y} = \mathbb{E}(Y|X,Z)$, then $Y^{\dag}$ is the solution to the Wasserstein barycenter characterization of the optimal fair learning outcome.

Finally, we show that the pseudo-barycenter is the optimal affine estimation of the Wasserstein barycenter in the case of general marginal distributions. To do so, we need to first put restrictions on the admissible transport maps. However, such a restriction on admissible maps leads to a necessary relaxation of the fairness constraint. To see the necessity, Lemma \ref{l:Optimal Affine Map} shows positive definite affine maps transform distributions within the same location-scale family. Therefore, given marginals $Y_1$ and $Y_2$ from different location-scale families, affine maps are not able to transform them to each other. That implies the non-existence of the barycenter under the original independence restriction. Indeed, if a barycenter of $\{Y_z\}_{z \in \{1,2\}}$ exists under the restriction of positive definite affine maps, then $Y_1$ and $Y_2$ belong to the same location-scale family as their barycenter, which contradicts the assumption of general distributions.  That is, the Wasserstein barycenter characterization does not have a solution when we admit merely affine transport maps in the general marginal distribution case.

On the other hand, notice that the best affine maps can achieve is to map $Y_1$ to a $Y'_2$, which shares the same first two moments with $Y_2$ within the $Y_1$ location-scale family. We call such $Y'_2$ a $Y_1$ location-scale family analog of $Y_2$. Therefore, we propose the following relaxation of the fairness constraint that suffices to guarantee the existence of a solution to the relaxed version of $\eqref{eq:post-processing constrained optimization for optimal learning outcome}$ with merely positive definite affine transport maps:
\begin{equation}
m_{f(X,Z)}, \Sigma_{f(X,Z)} \perp Z
\end{equation}
where $m_{f(X,Z)},$ and $\Sigma_{f(X,Z)},$ denotes respectively the first and second moment of $f(X,Z)$.

\begin{rema}[Fairness guarantee of the relaxation]\label{r:Fairness Guarantee of the Relaxation}
The adversarial task of testing and exploiting probabilistic independence between $f(X,Z)$ and $Z$ is equivalently difficult to enforcing the independence. One common strategy is to explore its equivalence to the independence between all moments of $f(X,Z)$ and $Z$, provided the boundedness of the two random variables. But the verification or enforcement of independence among higher moments is extremely vulnerable to data noise in practice. Thus, instead of enforcing $f(X,Z) \perp Z$, one could relax the constraint to the independence between $Z$ and some of the moments of $f(X,Z)$. In this section, we focus on the first two moments. That is, $m_{f(X,Z)}, \Sigma_{f(X,Z)}$ where $m_{f(X,Z)} := \mathbb{E}(f(X,Z))$ and $\Sigma_{f(X,Z)} := \mathbb{E}((f(X,Z) - \mathbb{E}(f(X,Z)))(f(X,Z) - \mathbb{E}(f(X,Z)))^T)$. It is not hard to notice that the relaxation is already strong enough to result in imperceptibility to any unsupervised learning algorithm that uses merely the mean and covariance of data to extract information, such as K-means and PCA.
\end{rema}

Therefore, the optimal affine estimation of the Wasserstein barycenter characterization is given by:
\begin{prob}[Optimal affine estimation of barycenter problem]\label{prob:optimal affine estimation of barycenter problem}
\begin{equation} \label{eq:optimal affine estimation of barycenter problem}
\inf_{f \in L^2(\mathcal{X} \times \mathcal{Z},\mathcal{Y})} \{||Y - f(X,Z)||_2^2: m_{f(X,Z)}, \Sigma_{f(X,Z)} \perp Z\}.
\end{equation}
\end{prob}

Now, we show that the pseudo-barycenter defined above is indeed the solution to Problem \ref{prob:optimal affine estimation of barycenter problem} and hence the optimal affine estimate of the optimal fair learning outcome. To prove the main result, we need the following result: given any fixed covariance matrix, the optimal positive definite affine maps result in the lowest Wasserstein distance such that the push-forwards all share the same fixed covariance matrix. To simplify notation, let $\mu_z := \mathcal{L}(\mathbb{E}(Y|X,Z)_z)$. Also, let $m_{Y|X_z}$ and $\Sigma_{Y|X_z}$ denote the mean and covariance matrix of $\mathbb{E}(Y|X,Z)_z$ respectively.

\begin{lem}[Projection Lemma] \label{l:Projection Lemma}
Assume $\{\mu_z\}_z \subset \mathcal{P}_{2,ac}(\mathcal{Y})$. If $m_{Y|X_z} = 0, \Sigma_{Y|X_z} \succ 0$ $\lambda$-a.e., for any $\Sigma \succ 0$,
\begin{equation}
\inf_{\hat{Y}: \Sigma_{\hat{Y}_z} = \Sigma} \int_{\mathcal{Z}} \mathcal{W}^2_2(\mu_z,  \mathcal{L}(\hat{Y}_z)) d\lambda(z)
\end{equation}
admits a unique solution, denoted by $\hat{Y}_{\Sigma}$, that satisfies
\begin{equation}
\hat{Y}_{\Sigma,z} := T_{\Sigma}(\hat{Y}_z,z)
\end{equation}
where $T_{\Sigma}(\cdot,z) := \Sigma_{Y|X_z}^{-\frac{1}{2}} (\Sigma_{Y|X_z}^{\frac{1}{2}} \Sigma  \Sigma_{Y|X_z}^{\frac{1}{2}})^{\frac{1}{2}} \Sigma_{Y|X_z}^{-\frac{1}{2}}$.
\end{lem}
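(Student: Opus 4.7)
The plan is to reduce the integral minimization to a $\lambda$-a.e.\ pointwise problem and then solve each pointwise problem via a Gelbrich-type inequality that bounds a cross-covariance trace under a positive semi-definiteness constraint.

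First, since the integrand $\mathcal{W}_2^2(\mu_z, \mathcal{L}(\hat{Y}_z))$ depends only on $\mathcal{L}(\hat{Y}_z)$ and the constraint $\Sigma_{\hat{Y}_z} = \Sigma$ is pointwise in $z$, the problem reduces to
$$\inf_{\nu \in \mathcal{P}_2(\mathcal{Y}),\, \Sigma_\nu = \Sigma} \mathcal{W}_2^2(\mu_z, \nu)$$
for $\lambda$-a.e.\ $z$. By the rigid translation identity (Lemma~\ref{l:Rigid translation property}), any nonzero mean $m_\nu$ contributes only the additive term $\|m_\nu\|^2 \geq 0$ to the cost, so any minimizer satisfies $m_\nu = m_{Y|X_z} = 0$, and I henceforth restrict to centered $\nu$.

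Second, for the pointwise problem, let $X \sim \mu_z$, $V \sim \nu$, and $\pi$ an arbitrary coupling. Expanding the squared norm gives
$$\mathcal{W}_2^2(\mu_z,\nu) = \operatorname{tr}(\Sigma_{Y|X_z}) + \operatorname{tr}(\Sigma) - 2\sup_{\pi} \operatorname{tr}(\mathbb{E}_\pi[X V^T]).$$
The cross-covariance $C := \mathbb{E}_\pi[XV^T]$ is constrained by positive semi-definiteness of the joint second-moment matrix, equivalently $\|\Sigma_{Y|X_z}^{-1/2} C \Sigma^{-1/2}\|_{\mathrm{op}} \leq 1$. A polar-decomposition / von Neumann trace argument yields the sharp bound $\operatorname{tr}(C) \leq \operatorname{tr}((\Sigma_{Y|X_z}^{1/2}\Sigma\Sigma_{Y|X_z}^{1/2})^{1/2})$, attained uniquely at $C^\star = \Sigma_{Y|X_z}\, T_\Sigma(\cdot,z)$, which is independent of $\nu$. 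This gives a Gelbrich-type lower bound on $\mathcal{W}_2^2(\mu_z,\nu)$ depending only on $\Sigma_{Y|X_z}$ and $\Sigma$.

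Third, for achievability I take $\nu^\star := (T_\Sigma(\cdot,z))_\sharp \mu_z$. Since $T_\Sigma(\cdot,z)$ is symmetric positive definite it equals $\nabla(\tfrac{1}{2} x^T T_\Sigma(\cdot,z)\, x)$, so Brenier's theorem identifies it as the optimal transport map from $\mu_z$ to $\nu^\star$; a direct matrix computation using the cyclic trace property shows $\Sigma_{\nu^\star} = \Sigma$ and that $\nu^\star$ saturates the lower bound. For uniqueness, assume $\nu'$ with $m_{\nu'}=0$ and $\Sigma_{\nu'} = \Sigma$ also attains the infimum, and let $T'$ be its Brenier map from $\mu_z$, unique $\mu_z$-a.e.\ by absolute continuity. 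Saturation forces $\mathbb{E}[X\,T'(X)^T] = C^\star = \mathbb{E}[X\, T_\Sigma(X,z)^T]$, so $W := T'(X) - T_\Sigma(X,z)$ satisfies $\mathbb{E}[XW^T] = 0$. Expanding $\operatorname{Cov}(T'(X)) = \operatorname{Cov}(T_\Sigma(X,z)) = \Sigma$ with $T'(X) = T_\Sigma(X,z) + W$ yields $\operatorname{Cov}(W) + \mathbb{E}[WX^T]T_\Sigma(\cdot,z) + T_\Sigma(\cdot,z)\,\mathbb{E}[XW^T] = 0$, and the last two terms vanish by the orthogonality just obtained, leaving $\operatorname{Cov}(W) = 0$. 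Hence $W = 0$ almost surely and $\nu' = \nu^\star$.

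The main obstacle I anticipate is the uniqueness step: the Gelbrich bound sees only second moments, so a priori many laws sharing the correct covariance could approach its value. What closes the gap is the combination of (i) uniqueness of the trace-maximizing $C^\star$ via the polar decomposition under $\Sigma_{Y|X_z}, \Sigma \succ 0$, and (ii) Brenier uniqueness under absolute continuity of $\mu_z$; together they pin down the minimizer to the specific affine push-forward $(T_\Sigma(\cdot,z))_\sharp \mu_z$.
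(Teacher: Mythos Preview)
Your proposal is correct and follows essentially the same approach as the paper: reduce to a pointwise problem, invoke the Gelbrich lower bound to identify the affine push-forward $(T_\Sigma(\cdot,z))_\sharp \mu_z$ as the pointwise minimizer, and use Brenier uniqueness under absolute continuity for uniqueness. The only difference is that the paper cites the Gelbrich bound characterization as a black box (Proposition~2.4 in \cite{cuesta1996lower}) while you re-derive it from the von Neumann trace inequality and polar decomposition, and you spell out the uniqueness step via the $\operatorname{Cov}(W)=0$ argument more explicitly; both are cosmetic rather than structural differences.
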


\begin{proof}
\begin{align*}
\int_{\mathcal{Z}} \mathcal{W}^2_2(\mu_z,  \mathcal{L}(\hat{Y}_z) d\lambda(z) & =  \int_{\mathcal{Z}} ||\mathbb{E}(Y|X,Z)_z - T_{\Sigma}(\hat{Y}_z,z)||^2_2 d\lambda(z)\\
& = \int_{\mathcal{Z}} \inf_{\nu: \Sigma_{\nu} = \Sigma} \mathcal{W}_2^2(\mu_z,\nu) d\lambda(z)\\
& = \inf_{\nu: \Sigma_{\nu_z} = \Sigma} \int_{\mathcal{Z}} \mathcal{W}_2^2(\mu_z,\nu_z) d\lambda(z),
\end{align*}
where the second equality follows from the characterization of Gelbrich's bound, see for example Proposition 2.4 in \cite{cuesta1996lower}. Now, let $\hat{Y}' \neq \hat{Y}_{\Sigma}$ but also satisfy $\Sigma_{\hat{Y}'} = \Sigma$ $\lambda$-a.e., then we have
\begin{align*}
\int_{\mathcal{Z}} ||\mathbb{E}(Y|X,Z)_z - \hat{Y}_{\Sigma,z}||^2_2 d\lambda(z) & <  \int_{\mathcal{Z}} \mathcal{W}_2^2(\mu_z,\mathcal{L}(\hat{Y}'_z)) d\lambda(z)\\
& \leq \int_{\mathcal{Z}} || \mathbb{E}(Y|X,Z)_z - \hat{Y}'_z||^2_2 d\lambda(z),
\end{align*}
where the first inequality is strict due to the uniqueness of Brenier's maps $T_{\Sigma}(\cdot,z)$ and hence of $T_{\Sigma}(\hat{Y}_z,z)$ $\lambda$-a.e.. The proof is complete.
\end{proof}

\begin{rema}[Intuition of the Projection Lemma] \label{r:Intuition of the Projection Lemma}
Intuitively, for an arbitrary positive definite matrix $\Sigma$, one can consider $T_{\Sigma}(\cdot,z)$ as the projection map (w.r.t. $\mathcal{W}_2$ distance) onto
\begin{equation}
\{\nu \in \mathcal{P}_{2}(\mathcal{Y}): \Sigma_{\nu} = \Sigma\}
\end{equation}
which is the set of centered probability measures with fixed covariance matrix $\Sigma$ in $(\mathcal{P}_2(\mathcal{Y}),\mathcal{W}_2)$. In other words, given a probability measure, the maps $\{T_{\Sigma}(\cdot,z)\}_z$ finds the closest (w.r.t. the Wasserstein distance) point in the set for each of the marginals.
\end{rema}

Finally, we are ready to prove the justification of the pseudo-barycenter in the case of general distributions.

\begin{thm}[Optimal affine estimation of $\mathcal{W}_2$ barycenter: Pseudo-barycenter] \label{th:Optimal Affine Estimation of Barycenter: Pseudo-barycenter}
$\mathbb{E}(Y|X,Z)^{\dagger} := \{T_{\text{affine}}(\mathbb{E}(Y|X,Z)_z,z)\}_z$ is the unique solution to Problem \ref{prob:optimal affine estimation of barycenter problem}:
\begin{equation}
\inf_{f \in L^2(\mathcal{X} \times \mathcal{Z}, \mathcal{Y})}  \{ ||Y - f(X,Z)||^2_2: m_{f(X,Z)}, \Sigma_{f(X,Z)} \perp Z\},
\end{equation}
provided $\{\mu_z\}_z \subset \mathcal{P}_{2,ac}(\mathcal{Y})$.
\end{thm}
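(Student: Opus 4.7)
The approach is to reduce Problem \ref{prob:optimal affine estimation of barycenter problem} to a finite-dimensional optimization over the shared covariance matrix of the admissible $f(X,Z)$, solve the inner transport problem by the Projection Lemma, and then match the outer covariance optimization to the barycenter fixed-point equation \eqref{eq:post-processing bary cov estimation}.

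First, I would copy the first two lines of the proof of Lemma \ref{l:Optimal Fair $L^2$-Objective Supervised Learning Characterization}: apply the $L^2$ orthogonal decomposition
\begin{equation*}
\|Y - f(X,Z)\|_2^2 \;=\; \|Y - \mathbb{E}(Y|X,Z)\|_2^2 \;+\; \|\mathbb{E}(Y|X,Z) - f(X,Z)\|_2^2,
\end{equation*}
so that, after disintegration, Problem \ref{prob:optimal affine estimation of barycenter problem} is equivalent to minimizing $\int_{\mathcal{Z}} \|\mathbb{E}(Y|X,Z)_z - f(X,Z)_z\|_2^2\,d\lambda(z)$ subject to the same constraint. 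The relaxed constraint $m_{f(X,Z)},\Sigma_{f(X,Z)} \perp Z$ forces the conditionals $f(X,Z)_z$ to share a common mean $m$ and a common covariance $\Sigma \succ 0$ $\lambda$-a.e., reducing the feasible set to a two-parameter family indexed by $(m,\Sigma)$.

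Second, I would handle the location and scale parts separately. By Lemma \ref{l:Rigid translation property} the cost splits as a term in $m$ alone plus a term in the centered marginals, so the optimal common mean is $m^{\ast}=\int_{\mathcal{Z}} m_{Y|X_z}\,d\lambda(z)$, and from now on I may assume $m_{Y|X_z}=0$ $\lambda$-a.e. Given any $\Sigma \succ 0$, the Projection Lemma (Lemma \ref{l:Projection Lemma}) identifies the unique minimizer over $\{f : \Sigma_{f(X,Z)_z}\equiv\Sigma\}$ as the map $z\mapsto T_{\Sigma}(\mathbb{E}(Y|X,Z)_z,z)$, and since $T_{\Sigma}(\cdot,z)$ is the positive-definite affine Brenier map sending $\mu_z$ to its $\Sigma$-covariance analog in the location-scale family of $\mu_z$, the cost reduces to the Bures/Gelbrich expression
\begin{equation*}
\int_{\mathcal{Z}} \operatorname{tr}\!\Bigl(\Sigma_{Y|X_z} + \Sigma - 2(\Sigma^{1/2}\Sigma_{Y|X_z}\Sigma^{1/2})^{1/2}\Bigr)\, d\lambda(z).
\end{equation*}

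Third, I would minimize this Bures functional over $\Sigma \succ 0$. This is exactly the location-scale barycenter variational problem treated in Lemma \ref{l:Location-scale Barycenter}, whose unique minimizer $\Sigma^{\ast}$ is characterized by the fixed-point equation \eqref{eq:post-processing bary cov estimation}. Substituting $\Sigma^{\ast}$ back into the map from step two recovers exactly $T_{\text{affine}}(\cdot,z)$ as in Definition \ref{d:Post-processing Pseudo-barycenter}, so $\mathbb{E}(Y|X,Z)^{\dagger}$ attains the infimum. Uniqueness is the concatenation of two already-available uniqueness statements: the Projection Lemma gives a strict minimizer for each fixed $\Sigma$ (via $\{\mu_z\}_z\subset\mathcal{P}_{2,ac}(\mathcal{Y})$ and uniqueness of Brenier's map), and Lemma \ref{l:Location-scale Barycenter} gives uniqueness of $\Sigma^{\ast}$.

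The main obstacle I anticipate is the bookkeeping at the interface between the transport and the covariance optimization: the Projection Lemma is stated for an externally fixed $\Sigma$, so I must justify that the two-level optimization $(f,\Sigma)$ can be factored as an inner infimum over $f$ for each $\Sigma$ followed by an outer infimum over $\Sigma$. This is legitimate because the statistical-parity-of-moments constraint gives every admissible $f$ a well-defined common $\Sigma_{f(X,Z)_z}$, so the feasible set is the disjoint union over $\Sigma \succ 0$ of the Projection-Lemma level sets, and the factored infimum equals the joint infimum. A secondary technical care is needed at $\lambda$-null $z$ where $\Sigma_{Y|X_z}$ could fail to be positive definite; this is handled by the $\mathcal{P}_{2,ac}$ assumption together with the $\lambda$-a.e.\ language already employed throughout the paper.
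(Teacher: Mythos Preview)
Your proposal is correct and follows essentially the same route as the paper: orthogonal decomposition plus disintegration to reduce to conditional expectations, the Projection Lemma (Lemma \ref{l:Projection Lemma}) to solve the inner problem for each fixed common covariance $\Sigma$, and then Lemma \ref{l:Location-scale Barycenter} to identify the optimal $\Sigma$ via the fixed-point equation \eqref{eq:post-processing bary cov estimation}. The only cosmetic difference is that the paper carries out an explicit Frobenius inner-product computation to show the cost equals that of the Gaussian analogs before invoking Lemma \ref{l:Location-scale Barycenter}, whereas you write down the Bures/Gelbrich trace formula directly; these are the same step.
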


\begin{proof}
First, we fix $\Sigma \succ 0$ arbitrary and denote $\hat{Y}_{\Sigma,z} := T_{\Sigma}(\mathbb{E}(Y|X,Z)_z,z)$ for $\lambda$-a.e. $z \in \mathcal{Z}$, we have
\begin{equation}
||Y - T_{\Sigma}(\mathbb{E}(Y|X,Z),Z)||^2_2 - ||Y - \mathbb{E}(Y|X,Z)||^2_2 = \int_{\mathcal{Z}} ||\mathbb{E}(Y|X,Z)_z - \hat{Y}_{\Sigma,z}||^2_2 d\lambda(z)
\end{equation}
and it follows from Lemma $\ref{l:Projection Lemma}$ that
\begin{align*}
\int_{\mathcal{Z}} ||\mathbb{E}(Y|X,Z)_z - \hat{Y}_{\Sigma,z} ||^2_2 d\lambda(z) = & \int_{\mathcal{Z}} \mathcal{W}_2^2(\mu_z,\mathcal{L}(T_{\Sigma}(\mathbb{E}(Y|X,Z)_z,z)) d\lambda(z)\\
= & \min_{\nu: \Sigma_{\nu_z} = \Sigma} \int_{\mathcal{Z}} \mathcal{W}_2^2(\mu_z,\nu_z) d\lambda(z).
\end{align*}
Therefore, \eqref{eq:optimal affine estimation of barycenter problem} boils down to the following:
\begin{equation} \label{eq:optimal sigma}
\inf_{\Sigma \succ 0} \Big\{ \int_{\mathcal{Z}} ||\mathbb{E}(Y|X,Z)_z - T_{\Sigma}(\mathbb{E}(Y|X,Z)_z,z)||^2_2 d\lambda(z) \Big\}.
\end{equation}
Finally, notice that 
\begin{align*}
& \int_{\mathcal{Z}} ||\mathbb{E}(Y|X,Z)_z - T_{\Sigma}(\mathbb{E}(Y|X,Z)_z,z)||^2_2 d\lambda(z)\\
= & \int_{\mathcal{Z}} ||\mathbb{E}(Y|X,Z)_z||^2_2  + ||T_{\Sigma}(\mathbb{E}(Y|X,Z)_z,z)||^2_2 - 2 \langle \mathbb{E}(Y|X,Z)_z,  T_{\Sigma}(\mathbb{E}(Y|X,Z)_z,z)\rangle_2 d\lambda(z)\\
= & \int_{\mathcal{Z}} \trace(\Sigma_{Y|X_z})  + \trace(\Sigma) - 2 \mathbb{E}(\mathbb{E}(Y| X,Z)_z^T T_{\Sigma}(\mathbb{E}(Y|X,Z)_z,z)  d\lambda(z)\\
= & \int_{\mathcal{Z}} \trace(\Sigma_{Y|X_z})  + \trace(\Sigma) - 2 \langle T_{\Sigma},  \Sigma_{Y|X_z} \rangle_F  d\lambda(z)\\
= & \int_{\mathcal{Z}} ||\mathbb{E}(Y|X,Z)_z' -T_{\Sigma}(\mathbb{E}(Y|X,Z)_z',z) ||^2_2 d\lambda(z),
\end{align*}
where $\langle \cdot,\cdot \rangle_F$ denotes the Frobenius inner product and $X' \sim \mathcal{N}(m_X,\Sigma_X)$ denotes the Gaussian analog of $X$. It follows from definition of $T_{\text{affine}}(\mathbb{E}(Y|X,Z)_z,z)$ with $T_{\text{affine}}(\cdot,z) := \Sigma_{Y|X_z}^{-\frac{1}{2}} (\Sigma_{Y|X_z}^{\frac{1}{2}} \Sigma \Sigma_{Y|X_z}^{\frac{1}{2}} )^{\frac{1}{2}} \Sigma_{Y|X_z}^{-\frac{1}{2}}$ and Lemma $\ref{l:Location-scale Barycenter}$ that $\int_{\mathcal{Z}} ||\mathbb{E}(Y|X,Z)_z - \mathbb{E}(Y|X,Z)^{\dagger}_z||^2_2 d\lambda(z)$ is the unique lower bound of the objective function in \eqref{eq:optimal sigma}. It then follows from the uniqueness of Brenier's map that $\mathbb{E}(Y|X,Z)^{\dagger}$ is the unique solution to \eqref{eq:optimal affine estimation of barycenter problem}. 
\end{proof}

In this section, we focus on applying the optimal affine transport map and the pseudo-barycenter to find a computationally efficient estimation of the optimal fair learning outcome in high-dimensional space. As we mentioned above, it will soon become clear in the next two sections and numerical experiments that a combination of McCann interpolation and optimal affine maps in matrix form results in not only a mathematically neat solution to estimate the Pareto frontier, which significantly reduces computational expense in practice, but also a necessary tool to help us circumvent the post-processing nature and solve the optimal fair data representation problem \eqref{eq:optimal fair data representation problem}. 

Now, we are ready to address the lack of a  precise theoretical characterization of the Pareto frontier between utility and fairness, which turns out to be a natural generalization of the Wasserstein barycenter characterization of the optimal fair $L^2$-objective learning outcome.

\section{Wasserstein Geodesics Characterization of Pareto Frontier} \label{s:Pareto Frontier}

In reality, rather than looking for the optimal fair learning outcome, practitioners may have to choose a middle ground: sacrificing some prediction accuracy while tolerating a certain level of disparity. Therefore, it is tempting to generalize the barycenter characterization of the optimal fair learning outcome to the entire Pareto frontier between prediction error and statistical disparity. In this section, we show that the constant-speed geodesics from the conditional expectation sensitive marginals to their Wasserstein barycenter characterize the Pareto frontier on the Wasserstein space, in which utility loss and statistical disparity are quantified respectively by the $L^2$ norm and the average pair-wise Wasserstein distance among the sensitive marginals. As a result, given the optimal transport maps, one can derive a closed-form solution to the geodesics and thereby the Pareto frontier using McCann interpolation.

Here, we first provide a post-processing characterization of the Pareto frontier, Theorem \ref{th:Geodesics Characterization of the Pareto Frontier}, which is of theoretical interest and great generality. Then, we derive a closed-form solution to Problem \ref{prob:Optimal $L^2$-objective Learning Pareto Frontier} based on this characterization. The results form a direct generalization of the barycenter characterization, which is Lemma \ref{l:Optimal Fair $L^2$-Objective Supervised Learning Characterization}, and practitioners can apply the result together with the pseudo-barycenter and McCann interpolation to obtain the optimal affine estimation to the post-processing Pareto frontier. Later in Section \ref{s:Optimal Fair Data Representation}, we further apply the result to provide a characterization of the exact solution and an optimal affine estimation of the solution to the optimal fair data representation problem \eqref{eq:optimal fair data representation problem}.


 Now, we start to characterize the Pareto frontier. 
In the rest of the section, we denote $\mathcal{L}(\mathbb{E}(Y|X,Z)) =: \mu, \mathcal{L}(\mathbb{E}(Y|X,Z)_z) =: \mu_z$.
For utility, given any measurable function $f: \mathcal{X} \times \mathcal{Z} \rightarrow \mathcal{Y}$, we define the increased prediction error by the $L^2$-norm of the difference between $f(X,Z)$ and the orthogonal projection $\mathbb{E}(Y|X,Z)$:
\begin{equation}
L(f(X,Z)) := ||\mathbb{E}(Y|X,Z) - f(X,Z)||_2 = (\int_{\mathcal{Z}} ||\mathbb{E}(Y|X,Z)_z - f(X,Z)_z)||_2^2 d\lambda(z))^{\frac{1}{2}}.
\end{equation}
To simplify notation, we also denote
\begin{equation} \label{d:increased prediction error (L)}
L(T') := L(T'(\mathbb{E}(Y|X,Z),Z)) = (\int_{\mathcal{Z}} ||\mathbb{E}(Y|X,Z)_z - T'_z(\mathbb{E}(Y|X,Z)_z)||_2^2 d\lambda(z))^{\frac{1}{2}}.
\end{equation} 
for any measurable $T': \mathcal{Y} \times \mathcal{Z} \rightarrow \mathcal{Y}$.

To relax the hard independence constraint for the Pareto frontier, we quantify the statistical disparity of a given learning outcome or prediction $\hat{Y}$ by the average pairwise Wasserstein distance among its sensitive marginals:
\begin{defi}[Wasserstein disparity] \label{d:wasserstein disparity}
\begin{equation}
D(\hat{Y},Z) := \Big(\int_{\mathcal{Z}^2} \mathcal{W}_2^2(\mathcal{L}(\hat{Y}_{z_1}), \mathcal{L}(\hat{Y}_{z_2})) d\lambda({z_1})d\lambda({z_2})\Big)^{\frac{1}{2}}.
\end{equation}
\end{defi}
In our setting, $\hat{Y} = f(X,Z)$ for some $f: \mathcal{X} \times \mathcal{Z} \rightarrow \mathcal{Y}$. Also, to simplify notation, we denote the Wasserstein disparity that remains in the already deformed (by applying $T'$) conditional expectation by
\begin{equation} \label{d:remaining disparity (D)}
D(T') := D(T'(\mathbb{E}(Y|X,Z),Z),Z) = (\int_{\mathcal{Z}^2} \mathcal{W}_2^2((T_{z_1}')_{\sharp}\mu_{z_1},(T'_{z_2})_{\sharp}\mu_{z_2}) d\lambda({z_1})d\lambda({z_2}))^{\frac{1}{2}}
\end{equation}
for any measurable $T': \mathcal{Y} \times \mathcal{Z} \rightarrow \mathcal{Y}$. Here, $T_z = T(\cdot,z): \mathcal{Y} \rightarrow \mathcal{Y}$ for $\lambda$-a.e. $z \in \mathcal{Z}$.

We adopt the Wasserstein disparity as a statistical disparity quantification due to the following desirable properties:
\begin{itemize}
\item \textbf{Wasserstein disparity characterizes statistical parity:} $$\mathcal{D}(f(X,Z),Z) =0 \iff f(X,Z) \perp Z$$
\item \textbf{Physics interpretation:} Due to the definition based on the Wasserstein distance, Wasserstein disparity can be understood as the expected minimum amount of work that is required to move one randomly chosen marginal to another random chosen one. Therefore, the larger $\mathcal{D}(f(X,Z),Z)$ is, the more necessary work is expected to remove the distributional discrepancy among the sensitive groups on $f(X,Z)$.
\end{itemize}

Now, let $T: \mathcal{Y} \times \mathcal{Z} \rightarrow \mathcal{Y}$ satisfy $T(\cdot,z)$ being the optimal transport maps from $\{\mu_z\}_z$ to their barycenter $\bar{\mu}$ for $\lambda$-a.e. $z \in \mathcal{Z}$ (See construction of $T$ in the proof of Lemma \ref{l:Optimal Fair $L^2$-Objective Supervised Learning Characterization}), we define

\begin{align} \label{d:the optimal prediction error (V)}
V := L(T) & =( \int_{\mathcal{Z}} ||\mathbb{E}(Y|X,Z)_z - T(\mathbb{E}(Y|X,Z)_z,z)||_2^2 d\lambda(z))^{\frac{1}{2}}\\
& = ( \int_{\mathcal{Z}} ||\mathbb{E}(Y|X,Z)_z - \overline{\mathbb{E}(Y|X,Z)}_z||_2^2 d\lambda(z) )^{\frac{1}{2}}.
\end{align} 

As shown in Lemma \ref{l:Optimal Fair $L^2$-Objective Supervised Learning Characterization}, $V$ is the minimum increase of $L^2$ error (or, in physics, the minimum work/energy required) to deform $\mathbb{E}(Y|X,Z)$ to satisfy statistical parity. Before showing the main result, we need to define the geodesic on metric space to show the explicit form of constant speed geodesic on the Wasserstein space, which plays a key role in the proof.

\begin{defi} [Constant-speed geodesic between two points on metric space]
Given a metric space $(X,d)$ and $x,x' \in X$, the constant-speed geodesic between $x$ and $x'$ is a continuously parametrized path $\{x_t\}_{t \in [0,1]}$ such that $x_0 = x$, $x_1 = x'$, and $d(x_s,x_t) = |t-s|d(x,x'), \forall s,t \in [0,1]$.
\end{defi}

The following lemma, which is well known as the McCann (displacement) interpolation \cite[Chapter 7]{villani2009optimal} in the optimal transport literature, shows that a linear interpolation using the optimal transport plan results in the constant-speed geodesic on the Wasserstein space.

\begin{lem}[Constant-speed geodesic on Wasserstein space, \cite{mccann1997convexity, villani2009optimal}] \label{l:Mccann Interpolation}
Given $\mu_0,\mu_1 \in (\mathcal{P}_2(\mathbb{R}^d), \mathcal{W}_2)$ and $\gamma$ the optimal transport plan in between, let $\pi_t(x,y):= (1-t)x + ty$, then

\begin{equation}
\mu_t:= (\pi_t)_{\sharp} \gamma, t \in [0,1]
\end{equation}
is the constant-speed geodesic between $\mu_0$ and $\mu_1$.
\end{lem}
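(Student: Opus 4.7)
The plan is to verify the three defining properties of a constant-speed geodesic in the order: endpoint matching, the upper bound $\mathcal{W}_2(\mu_s,\mu_t) \leq |t-s|\mathcal{W}_2(\mu_0,\mu_1)$, and then upgrading this to equality via the triangle inequality.

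First I would handle the endpoints. Since $\pi_0(x,y) = x$ and $\pi_1(x,y) = y$, push-forwarding $\gamma$ by $\pi_0$ and $\pi_1$ recovers exactly the first and second marginals of $\gamma$, which by assumption are $\mu_0$ and $\mu_1$. I would also note that $\mu_t \in \mathcal{P}_2(\mathbb{R}^d)$ follows from convexity of $\|\cdot\|^2$: $\int \|(1-t)x+ty\|^2 d\gamma \leq (1-t)\int \|x\|^2 d\mu_0 + t\int \|y\|^2 d\mu_1 < \infty$.

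Next, for the upper bound, the key idea is to build an admissible coupling between $\mu_s$ and $\mu_t$ directly from $\gamma$. Consider the map $(\pi_s,\pi_t): \mathbb{R}^d \times \mathbb{R}^d \to \mathbb{R}^d \times \mathbb{R}^d$ and push $\gamma$ forward by it; by construction this measure has marginals $\mu_s$ and $\mu_t$. Using it as a competitor in Kantorovich's problem yields
\begin{equation}
\mathcal{W}_2^2(\mu_s,\mu_t) \leq \int \|\pi_s(x,y)-\pi_t(x,y)\|^2 d\gamma(x,y) = (t-s)^2 \int \|x-y\|^2 d\gamma = (t-s)^2 \mathcal{W}_2^2(\mu_0,\mu_1),
\end{equation}
where the last equality uses that $\gamma$ is optimal.

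The final step is to promote this inequality to an equality, and this is the step where one needs to be a bit careful. For $0 \leq s \leq t \leq 1$, the triangle inequality for $\mathcal{W}_2$ gives
\begin{equation}
\mathcal{W}_2(\mu_0,\mu_1) \leq \mathcal{W}_2(\mu_0,\mu_s) + \mathcal{W}_2(\mu_s,\mu_t) + \mathcal{W}_2(\mu_t,\mu_1) \leq (s + (t-s) + (1-t))\,\mathcal{W}_2(\mu_0,\mu_1) = \mathcal{W}_2(\mu_0,\mu_1),
\end{equation}
where the second inequality uses the upper bound from the previous step applied to the three pairs. Since the two ends agree, every intermediate inequality must be an equality, and in particular $\mathcal{W}_2(\mu_s,\mu_t) = (t-s)\mathcal{W}_2(\mu_0,\mu_1)$. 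This gives the constant-speed geodesic property for $s \leq t$, and the case $s \geq t$ follows by symmetry, yielding the $|t-s|$ factor.

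The main obstacle, if any, is the triangle-inequality squeeze: one must verify $\mathcal{W}_2$ is a genuine metric on $\mathcal{P}_2(\mathbb{R}^d)$ (so that the triangle inequality applies) and check that the coupling $(\pi_s,\pi_t)_\sharp\gamma$ really has the claimed marginals. Both are standard and the latter reduces to the marginal property of $\gamma$. No further regularity on $\mu_0,\mu_1$ is needed since the argument only uses optimality of $\gamma$, not uniqueness of a Monge map.
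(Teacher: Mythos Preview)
Your proof is correct and follows essentially the same approach as the paper: construct the coupling $(\pi_s,\pi_t)_\sharp\gamma$ to get the upper bound $\mathcal{W}_2(\mu_s,\mu_t)\le |t-s|\mathcal{W}_2(\mu_0,\mu_1)$, then use the triangle inequality to force equality. Your presentation is in fact slightly cleaner—the paper writes the coupling as $d(\pi_s)_\sharp\gamma\otimes d(\pi_t)_\sharp\gamma$ (a notational slip) and concludes by contradiction rather than the direct squeeze, and it omits the endpoint and $\mathcal{P}_2$ checks you include.
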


\begin{proof}
See Appendix \ref{A:Section 4 Appendix}
\end{proof}

\begin{rema}[Linear interpolation formula for $\mathcal{W}_2$ deodesics]\label{r:Linear Interpolation Formula for Geodesic Path}
If there exists an optimal transport map $T$ such that $T_{\sharp}(\mu_0) = \mu_1$, then the McCann interpolation has the simple form

\begin{equation} \label{eq:mccann interpolation}
\mu_t = ((1-t)Id + tT)_{\sharp}\mu_0, t \in [0,1].
\end{equation}

\end{rema}
We apply this simple formula to obtain a closed-form estimation of the Pareto frontier in algorithm design, see Section~\ref{s:Algorithm}.

Now, we are ready to establish the main result, which shows that $V$ is a lower bound of $L(f(X,Z)) + \frac{1}{\sqrt{2}} D(f(X,Z),Z)$ for any measurable function $f: \mathcal{X} \times \mathcal{Z} \rightarrow \mathcal{Y}$ and is achieved along the constant-speed geodesics from the sensitive marginals of the conditional expectation to their barycenter on the Wasserstein space. 

\begin{thm} [$\mathcal{W}_2$ geodesics characterization of a linear Pareto frontier] \label{th:Geodesics Characterization of the Pareto Frontier}
Define $L, D, V$ as above and assume $\mu_z \in \mathcal{P}_{2, ac}(\mathcal{Y}), \lambda-a.e.$. It follows that
\begin{equation} \label{eq:lower bound for pareto frontier}
V \leq L(f(X,Z)) + \frac{1}{\sqrt{2}}D(f(X,Z),Z)
\end{equation}
for any measurable function $f: \mathcal{X} \times \mathcal{Z} \rightarrow \mathcal{Y}$. Furthermore, define $T(t)$ such that $T(t)(\cdot,z) := (1-t)Id + t(T(\cdot,z)), t \in [0,1]$ is the linear interpolation between the identity map and the optimal transport map for $\lambda$-a.e. $z \in \mathcal{Z}$, then equality holds in~\eqref{eq:lower bound for pareto frontier} if and only if $f(X,Z) = T(t)(\mathbb{E}(Y|X,Z),Z), t \in [0,1]$ as

\begin{equation}
L(T(t)) = tL(T(1)) = tV
\end{equation}

\begin{equation}
\frac{1}{\sqrt{2}}D(T(t)) = \frac{1}{\sqrt{2}}(1-t)D(T(0)) = (1-t)V.
\end{equation}
\end{thm}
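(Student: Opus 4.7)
My plan is to split the theorem into (a) a Minkowski--triangle lower bound $V\le L(f)+\tfrac{1}{\sqrt2}D(f)$ valid for every measurable $f$, and (b) an exact computation along the McCann displacement interpolation $T(t)$ that saturates this bound and thereby pins down the equality case. The common technical backbone is the multi-marginal coupling representation of the Wasserstein barycenter from Remark~\ref{equivalence between couple and bary}, which converts Wasserstein-side quantities into Hilbert-space quantities on the joint probability space carrying the lifted random variables.

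For (a), write $\mu_z:=\mathcal{L}(\mathbb{E}(Y|X,Z)_z)$ and $\nu_z:=\mathcal{L}(f(X,Z)_z)$, apply the Wasserstein triangle inequality $\mathcal{W}_2(\mu_z,\bar\mu)\le \mathcal{W}_2(\mu_z,\nu_z)+\mathcal{W}_2(\nu_z,\bar\mu)$ pointwise in $z$, take $L^{2}(\lambda)$-norms, and invoke Minkowski together with the defining bound $\mathcal{W}_2(\mu_z,\nu_z)\le \|\mathbb{E}(Y|X,Z)_z-f(X,Z)_z\|_2$ to obtain
\[
V \;\le\; L(f) + \bigl(\int \mathcal{W}_2^{2}(\nu_z,\bar\mu)\,d\lambda\bigr)^{1/2}.
\]
The remaining factor is then converted to $\tfrac{1}{\sqrt2}D(f)$ by the Hilbertian variance identity $\int\!\int \mathcal{W}_2^{2}(\nu_{z_1},\nu_{z_2})\,d\lambda\,d\lambda = 2\int \mathcal{W}_2^{2}(\nu_z,\bar\mu)\,d\lambda$, which I would obtain by lifting $\{\nu_z\}$ to jointly coupled random variables anchored at $\bar\mu$, expanding $\|U_{z_1}-U_{z_2}\|^{2}$ via the parallelogram identity, and killing the cross term through the centering property of the multi-marginal lifting.

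For (b), I would apply Lemma~\ref{l:Mccann Interpolation} and Remark~\ref{r:Linear Interpolation Formula for Geodesic Path} to $T(t)$. Setting $\nu_z(t):=(T(t)_z)_\sharp\mu_z$, the constant-speed geodesic property yields $\mathcal{W}_2(\mu_z,\nu_z(t))=tV_z$ and $\mathcal{W}_2(\nu_z(t),\bar\mu)=(1-t)V_z$ with $V_z:=\mathcal{W}_2(\mu_z,\bar\mu)$, so $L(T(t))=tV$. For the pairwise disparity, the multi-marginal lifting furnishes the natural joint coupling $(1-t)X_{z_i}+t\bar X$ of $\nu_{z_i}(t)$, giving $\mathcal{W}_2^{2}(\nu_{z_1}(t),\nu_{z_2}(t)) = (1-t)^{2}\mathbb{E}\|X_{z_1}-X_{z_2}\|^{2}$; integrating with the variance identity applied to $\{\mu_z\}$ produces $D(T(t))^{2}=2(1-t)^{2}V^{2}$ and hence $\tfrac{1}{\sqrt2}D(T(t))=(1-t)V$. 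Adding, $L(T(t))+\tfrac{1}{\sqrt2}D(T(t))=V$, which saturates \eqref{eq:lower bound for pareto frontier} applied to $f=T(t)(\mathbb{E}(Y|X,Z),Z)$; the uniqueness of the constant-speed $\mathcal{W}_2$-geodesic between $\mu_z$ and $\bar\mu$ then isolates these as the only $f$ achieving equality.

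The main obstacle is securing the variance identity $\int\!\int \mathcal{W}_2^{2}(\mu_{z_1},\mu_{z_2})\,d\lambda\,d\lambda = 2V^{2}$ (and its analog for the $\nu_z$) as a genuine equality rather than merely the inequality $\le$ automatic from the non-negative Alexandrov curvature of $\mathcal{W}_2$. Tightness requires the multi-marginal optimal coupling to simultaneously realize each two-marginal $\mathcal{W}_2$-optimal plan; under the absolute-continuity assumption $\mu_z\in \mathcal{P}_{2,\mathrm{ac}}$ and the Brenier theory recalled in Section~\ref{s:post bary} this can be extracted from the barycenter lifting, and it is in any case automatic in the location-scale / Gaussian regime underpinning the pseudo-barycenter of Section~\ref{s:affine estimate}. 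Once this identity is in hand, the remaining steps are essentially bookkeeping with the triangle inequality, Minkowski, and the McCann interpolation formula.
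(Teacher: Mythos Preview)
Your overall architecture matches the paper's --- a triangle/Minkowski lower bound plus a McCann-interpolation saturation --- but there is a genuine error in part (a): you route the triangle inequality through the wrong barycenter. You write $\mathcal{W}_2(\mu_z,\bar\mu)\le \mathcal{W}_2(\mu_z,\nu_z)+\mathcal{W}_2(\nu_z,\bar\mu)$ and then invoke a ``variance identity'' $\int\!\int \mathcal{W}_2^{2}(\nu_{z_1},\nu_{z_2})\,d\lambda\,d\lambda = 2\int \mathcal{W}_2^{2}(\nu_z,\bar\mu)\,d\lambda$. But that identity, to the extent it holds, requires the barycenter $\bar\nu$ of the family $\{\nu_z\}$ on the right-hand side, not $\bar\mu$. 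Since $\bar\nu$ is the minimizer of $\nu\mapsto\int\mathcal{W}_2^2(\nu_z,\nu)\,d\lambda$, one only has $\int\mathcal{W}_2^2(\nu_z,\bar\nu)\le\int\mathcal{W}_2^2(\nu_z,\bar\mu)$, so your chain yields $V\le L(f)+\bigl(\int\mathcal{W}_2^2(\nu_z,\bar\mu)\bigr)^{1/2}$ with the last term \emph{at least} $\tfrac{1}{\sqrt2}D(f)$ --- the wrong direction to conclude \eqref{eq:lower bound for pareto frontier}.

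The paper's fix is to pivot on $\bar\nu=\overline{\mathcal{L}(f(X,Z))}$ throughout: first use the defining optimality of $\bar\mu$ to pass to $V\le\bigl(\int\|\mathbb{E}(Y|X,Z)_z-\overline{f(X,Z)}_z\|_2^2\,d\lambda\bigr)^{1/2}$ (any common target bounds the barycenter cost), then Minkowski in $L^2$ with $f(X,Z)_z$ as the intermediate, and only then apply the variance identity with $\bar\nu$. Your approach becomes correct after this one-line reordering. For part (b) your plan is essentially the paper's; note that the paper obtains $\overline{T_z(t)_\sharp\mu_z}=\bar\mu$ by a short contradiction argument rather than via an explicit multi-marginal lifting, which sidesteps the simultaneous-optimality issue you flag as the main obstacle.
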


\begin{proof}
See Appendix \ref{A:Section 4 Appendix}.
\end{proof}

\begin{rema}[Intuition of Theorem \ref{th:Geodesics Characterization of the Pareto Frontier}: a Euclidean analog]
Here, we provide a Euclidean analog of Theorem \ref{th:Geodesics Characterization of the Pareto Frontier}. In fact, our proof is based on the observation of the analog and equivalent to it when one considers $x \rightarrow \delta_{x}$ as an embedding from $\mathcal{X}$ to $\mathcal{P}_2(\mathcal{X})$.

Let $X:= \{x_i\}_{i = 1}^N$ be a fixed data set on the Euclidean space $\mathcal{X}$ ($N = 3$ in Figure \ref{geodesic_path_intuition}),  $\tilde{X} := \{\tilde{x}_i\}_{i = 1}^N$ be a data set consisting of $N$ arbitrarily chosen data points on $\mathcal{X}$, and define the following:
\begin{itemize}
\setlength{\parsep}{-0.2ex}
\setlength{\itemsep}{-0.2ex}
\item \text{[Euclidean analog of $V$]} $\text{std}(X) := (\frac{1}{N} \sum_{i = 1}^N ||x_i - m_x||^2)^{\frac{1}{2}}$ with $m_x := \frac{1}{N} \sum_{i = 1}^N x_i$,
\item \text{[Euclidean analog of $L$]} $\text{solid}(\tilde{X}) := (\frac{1}{N} \sum_{i = 1}^N ||x_i - \tilde{x}_i||^2)^{\frac{1}{2}}$,
\item \text{[Euclidean analog of $D$]} $\text{dotted}(\tilde{X}) := (\frac{1}{N^2} \sum_{i,j = 1}^N ||\tilde{x}_i - \tilde{x}_j||^2)^{\frac{1}{2}}$.
\end{itemize}

\begin{figure}[H]
\centering
\includegraphics[width=0.7\textwidth]{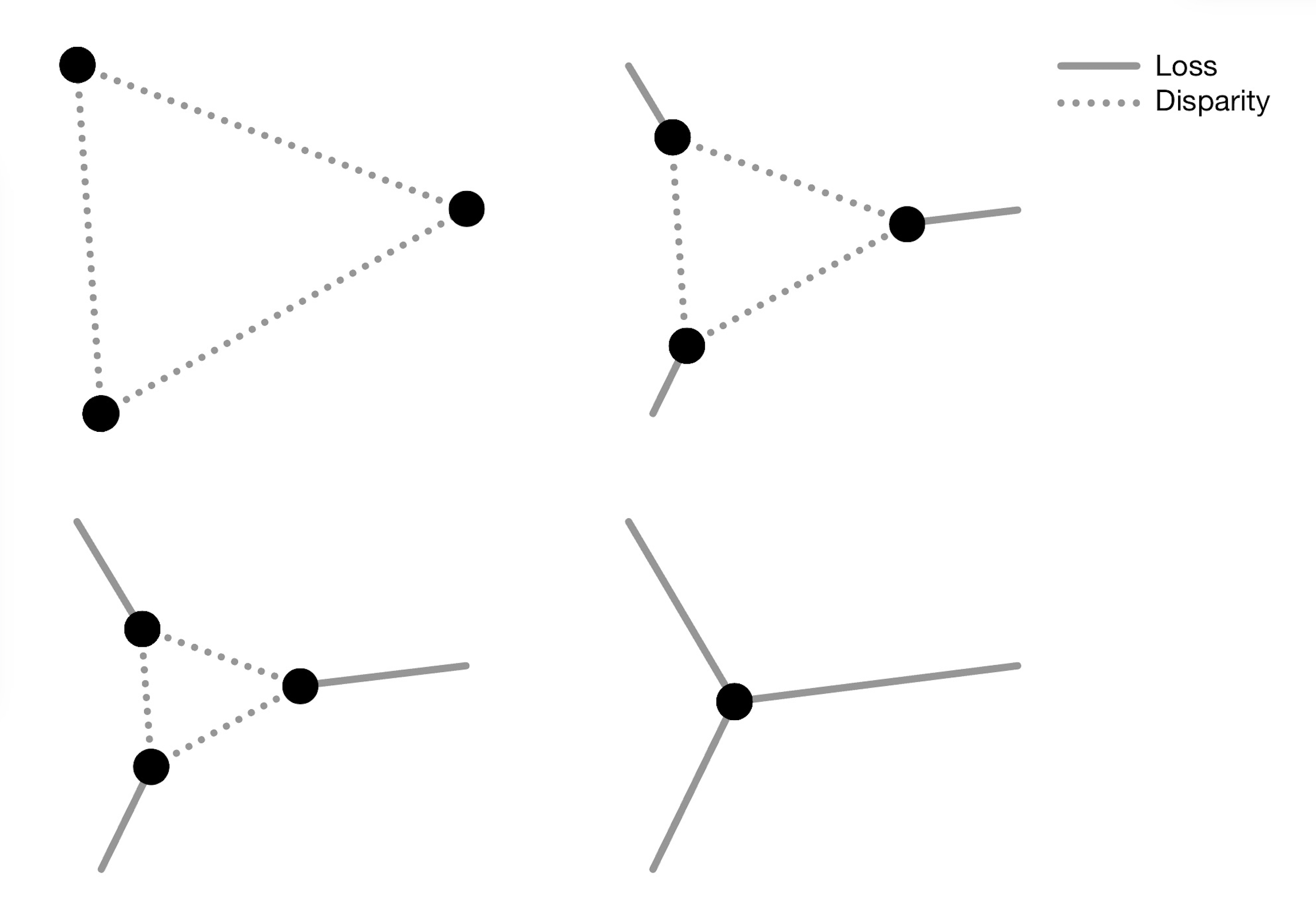}
\caption{In this figure, we have three data points on an Euclidean space traveling along straight lines (Euclidean geodesics) to their average (Euclidean barycenter). Define (1) std := the standard deviation of the three points, (2) solid line (loss) := the average moving (Euclidean) distance away from their original location, and (3) dotted line (disparity) := the average pairwise (Euclidean) distance among them. One can show that $\text{std} \leq \text{solid} + \frac{1}{\sqrt{2}} \text{dotted}$ where equality holds if and only if the three points travel at constant-speed along straight lines to their average.}
\label{geodesic_path_intuition}
\end{figure}
It is straight-forward to verify that (1) $$\text{std}(X) \leq \underbrace{\text{solid}(\tilde{X})}_{\text{utility loss}} + \frac{1}{\sqrt{2}} \underbrace{\text{dotted}(\tilde{X})}_{\text{disparity}},$$ and (2)  equality holds if and only if $\tilde{X} = X(t) := \{(1-t)x_i + tm_x\}_{i = 1}^N$ for $t \in [0,1]$ as $\text{loss}(X(t)) = t\text{std}(X)$ and $\frac{1}{\sqrt{2}}\text{disparity}(X(t)) = (1-t)\text{std}(X)$.
\end{rema}

Since $V$ (the minimum work or energy required for statistical parity) is fixed for the data $(X,Y,Z)$ when one applies $(X,Z)$ to predict $Y$, the above theorem implies that the Pareto frontier between the increased prediction error $L(T)$ and the remaining disparity $D(T)$ is a line that results from the constant-speed geodesics from the marginal conditional expectations to their barycenter on the Wasserstein space. In particular, let $T(t)(\mathbb{E}(Y|X,Z),Z) := \{T(t)(\mathbb{E}(Y|X_z),z)\}_z$, $\lambda$-a.e., $t \in [0,1]$, we arrive at a closed-form solution to Problem \ref{prob:Optimal $L^2$-objective Learning Pareto Frontier}:

\begin{coro}[Pareto optimal fair $L^2$-objective learning]\label{corr:Pareto Optimal Fair L2-objective Learning}
Given $(X,Y,Z)$ satisfying $\mu_z \in \mathcal{P}_{ac}$, $\lambda$-a.e., then \begin{equation}
 f_d(X,Z) := 
\begin{cases}
    T(1- \frac{d}{\sqrt{2}V})(\mathbb{E}(Y|X,Z),Z), & \text{if } d \in [0,\sqrt{2}V]\\
    \mathbb{E}(Y|X,Z), & \text{if } d \in (\sqrt{2}V, \infty)
\end{cases}
\end{equation}
are the unique solutions to Problem \ref{prob:Optimal $L^2$-objective Learning Pareto Frontier} for $d \in [0,\infty)$.
\end{coro}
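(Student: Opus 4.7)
The plan is to reduce Problem \ref{prob:Optimal $L^2$-objective Learning Pareto Frontier} to a minimization of the projection loss $L(f(X,Z))$ and then apply Theorem \ref{th:Geodesics Characterization of the Pareto Frontier} directly. The $L^2$-orthogonal decomposition
\begin{equation}
\|Y - f(X,Z)\|_2^2 = \|Y - \mathbb{E}(Y|X,Z)\|_2^2 + L(f(X,Z))^2
\end{equation}
strips off an $f$-independent term, so minimizing $\|Y - f(X,Z)\|_2^2$ subject to $D(f(X,Z),Z) < d$ is equivalent to minimizing $L(f(X,Z))$ under the same disparity constraint. This is exactly the quantity that the geodesic characterization controls.

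For the main range $d \in [0, \sqrt{2}V]$, I would invoke the tight lower bound from Theorem \ref{th:Geodesics Characterization of the Pareto Frontier}: any admissible $f$ satisfies $L(f(X,Z)) \geq V - D(f(X,Z),Z)/\sqrt{2} \geq V - d/\sqrt{2}$, with equality throughout if and only if $f(X,Z) = T(t)(\mathbb{E}(Y|X,Z),Z)$ for some $t \in [0,1]$ and $D(T(t)) = d$. The closed-form identities $L(T(t)) = tV$ and $D(T(t))/\sqrt{2} = (1-t)V$ in that theorem then pin down $t = 1 - d/(\sqrt{2}V)$ uniquely, which is precisely the $f_d$ claimed by the corollary. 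Uniqueness of the minimizer (not merely of the value) is inherited from the uniqueness clause of Theorem \ref{th:Geodesics Characterization of the Pareto Frontier}, which in turn rests on uniqueness of Brenier's maps $T(\cdot,z)$ under the hypothesis $\{\mu_z\}_z \subset \mathcal{P}_{2,ac}$. For $d > \sqrt{2}V$, evaluating the same identities at $t=0$ yields $D(\mathbb{E}(Y|X,Z),Z) = D(T(0)) = \sqrt{2}V < d$, so the unconstrained $L^2$-projection $\mathbb{E}(Y|X,Z)$ is already admissible and achieves $L(\mathbb{E}(Y|X,Z))=0$, making it trivially the unique minimizer of both $L$ and $\|Y-f\|_2^2$.

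The main subtlety I anticipate is reconciling the strict constraint $D(f(X,Z),Z) < d$ stated in Problem \ref{prob:Optimal $L^2$-objective Learning Pareto Frontier} with attainment: for $d \in (0, \sqrt{2}V]$, the equality case of Theorem \ref{th:Geodesics Characterization of the Pareto Frontier} forces $D(f,Z) = d$ exactly, so $f_d$ sits on the boundary rather than strictly inside the admissible set, and the infimum value $V - d/\sqrt{2}$ is not attained under a strict reading. I would resolve this either by interpreting the constraint as the closed $D(f,Z) \leq d$ (so that $f_d$ is a bona fide minimizer) or by characterizing $f_d$ as the unique boundary point that attains the infimum in the closure of the admissible region, which is the natural Pareto-optimal reading of the corollary.
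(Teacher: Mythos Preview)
Your approach is correct and takes a more direct route than the paper. You reduce the problem to minimizing $L(f(X,Z))$ via the $L^2$-orthogonal decomposition and then read off the optimizer from the tight inequality $L(f) \geq V - D(f)/\sqrt{2}$ of Theorem~\ref{th:Geodesics Characterization of the Pareto Frontier} together with its equality characterization. The paper instead argues by contradiction: assuming some admissible $f$ strictly beats $T(t)(\mathbb{E}(Y|X,Z),Z)$ in $L^2$-loss for $t = 1 - d/(\sqrt{2}V)$, it passes to the Wasserstein barycenter $\overline{f(X,Z)}$ of the sensitive marginals $\{f(X,Z)_z\}_z$, chains a triangle-type estimate with the identities $L(T(t)) = tV$ and $\|f - \overline{f}\|_2 = D(f)/\sqrt{2}$, and arrives at $\|Y - \overline{f(X,Z)}\|_2^2 < \|Y - \mathbb{E}(Y|X,Z)\|_2^2 + V$, which contradicts the optimality of $\overline{\mathbb{E}(Y|X,Z)}$ established in Lemma~\ref{l:Optimal Fair $L^2$-Objective Supervised Learning Characterization}. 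Your argument is the cleaner of the two, since the lower bound and its attainment are already packaged in Theorem~\ref{th:Geodesics Characterization of the Pareto Frontier}; the paper's detour through $\overline{f(X,Z)}$ and Lemma~\ref{l:Optimal Fair $L^2$-Objective Supervised Learning Characterization} effectively re-derives part of that theorem. Your treatment of the case $d > \sqrt{2}V$ matches the paper's exactly. Your remark about the strict constraint $D(f,Z) < d$ versus $D(f,Z) \leq d$ is also well taken: the paper's proof implicitly treats the constraint as closed, and under a literal strict reading the infimum is indeed attained only on the boundary of the feasible set.
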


\begin{proof}
If $d \in (\sqrt{2}V, \infty)$, then it follows from Theorem \ref{th:Geodesics Characterization of the Pareto Frontier} that $D(\mathbb{E}(Y|X,Z)) = D(T(0)) = \sqrt{2}V < d$. Hence, Problem \ref{prob:Optimal $L^2$-objective Learning Pareto Frontier} reduces to the unconstrained $L^2$ projection problem and the optimal solution is $\mathbb{E}(Y|X,Z)$. Now, for a fixed $d \in [0,\sqrt{2}V]$, assume for contradiction that $\exists f \in L^2(\mathcal{X} \times \mathcal{Z}, \mathcal{Y})$ such that $$||Y-f(X,Z)||_2^2 < ||Y - T(t)(\mathbb{E}(Y|X,Z),Z)||_2^2$$ for $t = 1- \frac{d}{\sqrt{2}V}$. Then, let $\overline{f(X,Z)}$ denote the Wasserstein barycenter of $\{f(X,Z)_z\}_z$, we have \begin{align*}
||Y - \overline{f(X,Z)}||_2^2 &  \leq ||Y - f(X,Z)||_2^2 + ||f(X,Z)- \overline{f(X,Z)}||_2^2\\
& < ||Y - T(t)(\mathbb{E}(Y|X,Z), Z)||_2^2 + ||f(X,Z)- \overline{f(X,Z)}||_2^2\\
& = ||Y - \mathbb{E}(Y|X,Z)||_2^2 + L(T(t)) + \frac{1}{\sqrt{2}}D(f(X,Z))\\
& = ||Y - \mathbb{E}(Y|X,Z)||_2^2 + (V - \frac{1}{\sqrt{2}}d) + \frac{1}{\sqrt{2}}d\\
& = ||Y - \mathbb{E}(Y|X,Z)||_2^2 + V
\end{align*}
where the second line follows from the assumption, the third from $L^2$ orthogonal decomposition and Theorem \ref{th:Geodesics Characterization of the Pareto Frontier}, and the forth from the assumption and Theorem \ref{th:Geodesics Characterization of the Pareto Frontier}. The strict inequality above contradicts the optimality of $\overline{\mathbb{E}(Y|X,Z)}$ shown in Lemma \ref{l:Optimal Fair $L^2$-Objective Supervised Learning Characterization}. That proves the optimality of $T(1- \frac{d}{\sqrt{2}V})(\mathbb{E}(Y|X,Z),Z)$ for the fixed $d$. Uniqueness result follows from the uniqueness of $\overline{\mathbb{E}(Y|X,Z)}$ shown in Lemma \ref{l:Optimal Fair $L^2$-Objective Supervised Learning Characterization}. Since the choice of $d \in [0,\sqrt{2}V]$ is arbitrary, we are done.
\end{proof}

We note that Corollary \ref{corr:Pareto Optimal Fair L2-objective Learning} together with Lemma \ref{l:Mccann Interpolation} and Remark \ref{r:Linear Interpolation Formula for Geodesic Path} provide a post-processing approach to (estimate) the Pareto frontier: applying McCann interpolation to the Brenier's maps between the learning outcome sensitive marginals $\{\mathbb{E}(Y|X,Z)_z\}_z$ and their (pseudo-) barycenter. One can apply Algorithm \ref{a:independent} directly with the learning outcome marginals as inputs.

From a theoretical perspective, various metrics of disparity that differ from $D$, the Wasserstein disparity (Definition \ref{d:wasserstein disparity}), can be used and the theoretical results derived in this section provide a lower bound estimation for the Pareto frontier that uses other disparity metrics. The quality of the lower bound can be studied using the relationship between the Wasserstein distance and the defined disparity metric. Also, the present work provides a numerical study on the lower bound estimation in Section 6 to which we refer the interested readers for more details.

In practice, various metrics of disparity are adopted, such as the prediction success ratio (difference from 1) in classification \cite{calmon2017optimized} and the Kolmogorov-Smirnov distance for 1-dimensional regression \cite{chzhen2020fair}.  The proposed estimation of the Pareto frontier leaves the choice of $\alpha$ to practitioners who would face specific fairness requirements and disparity metrics.

\section{Optimal Fair Data Representation for Supervised Learning}\label{s:Optimal Fair Data Representation}

In this section, we study the optimal fair data representation problem, Problem \ref{prob:Optimal Fair Data Representation for Conditional Expectation Estimation}, that is motivated by the current challenges in the pre-processing or synthetic data design approach to fair machine learning.  To solve the problem, we first characterize the exact solution using a dependent and independent Wasserstein barycenter pair, see Lemma \ref{l:Characterization of Optimal Fair Data Representation}. Then, we define a dependent and independent pseudo-barycenter pair via optimal affine maps, and prove that the pair is the exact optimal fair data representation with Gaussian marginals, cf.\ Lemma \ref{l:Solution to the Optimal Fair Data Representation in Gaussian Case} and the optimal affine estimate of the representation with general marginals in Theorem \ref{th:Justification of Pseudo-Barycenter in General Distribution Case}.

\subsection{Wasserstein Barycenter Pair Characterization}

We will prove a characterization of the solutions to Problem \ref{prob:Optimal Fair Data Representation for Conditional Expectation Estimation}. To start, notice that since $(\tilde{X},Z) = T \otimes Id|_{\mathcal{Z}}(X,Z)$ for some measurable map $T \otimes Id|_{\mathcal{Z}}: \mathcal{X} \times \mathcal{Z} \rightarrow \mathcal{X} \times \mathcal{Z}$, we have $\sigma((\tilde{X},Z)) \subset \sigma((X,Z))$. Also, from $\tilde{X} \perp Z$, we have $\sigma(\tilde{X}) \subset \sigma(\tilde{X}) \otimes \sigma(Z) = \sigma((\tilde{X},Z))$. Therefore,  $\sigma(\tilde{X}) \subset \sigma((X,Z))$ and it follows from $L^2$ orthogonal decomposition that

\begin{equation}\label{objective decomposition}
||Y - \mathbb{E}(\tilde{Y}|\tilde{X})||^2_2 = ||Y - \mathbb{E}(Y|X,Z)||^2_2 + ||\mathbb{E}(Y|X,Z) - \mathbb{E}(\tilde{Y}|\tilde{X})||^2_2.
\end{equation} 
The first term on the right hand side can be interpreted as the minimum loss of information by using $(X,Z)$ to predict $Y$. Furthermore, one can decompose the second term on the right hand side of $\eqref{objective decomposition}$:
\begin{align*}
   & ||\mathbb{E}(Y|X,Z) - \mathbb{E}(\tilde{Y}|\tilde{X})||^2_2\\
= & ||\mathbb{E}(Y|X,Z) - \mathbb{E}(Y|\tilde{X},Z)||_2^2 + ||\mathbb{E}(Y|\tilde{X},Z) - \mathbb{E}(\tilde{Y}|\tilde{X})||^2_2\\
= & ||\mathbb{E}(Y|X,Z) - \mathbb{E}(Y|\tilde{X},Z)||_2^2 + \int_{\mathcal{Z}} ||\mathbb{E}(Y_z|\tilde{X}) - \mathbb{E}(\tilde{Y}|\tilde{X})_z||^2_2 d\lambda(z).
\end{align*}
Here, the first equality follows from $L^2$ orthogonal decomposition. The second equality follows from disintegration, the fairness constraint $\tilde{X},\mathbb{E}(\tilde{Y}|\tilde{X}) \perp Z$, and the fact that $\tilde{X} \perp Z$ implies $$\mathbb{E}(Y_z|\tilde{X}) = \mathbb{E}(Y|\tilde{X},Z)_z.$$ See Appendix \ref{A:Section 5 Appendix} for the proof.

Now, the key observation is that, given a fixed $\tilde{X} \perp Z$, the choice of $\tilde{Y}$ depends only on the second term on the right, which forms a Wasserstein barycenter problem with marginals being $\{\mathbb{E}(Y_z|\tilde{X})\}_z$. Hence, the optimal choice of $\tilde{Y}$ is the one which satisfies $\mathbb{E}(\tilde{Y} | \tilde{X}) = \overline{\mathbb{E}(Y| \tilde{X},Z)}$, where $\overline{\mathbb{E}(Y| \tilde{X},Z)})$ is the Wasserstein barycenter of $\{\mathbb{E}(Y_z|\tilde{X})\}_z$. Therefore, we denote the optimal choice of $\tilde{Y}$ to be $\bar{Y}$ which satisfies $\mathbb{E}(\bar{Y}|\tilde{X}) = \overline{\mathbb{E}(Y| \tilde{X},Z)}$.

It remains to find the optimal choice of $\tilde{X}$. The following result shows that the optimal choice is the one admissible $\tilde{X}$ which generates the finest sigma-algebra.

\begin{lem}[Finer sigma-algebra, more accurate optimal fair learning] \label{l:Finer Sigma-algebra Implies More Accurate Optimal Fair Learning}
Let $\tilde{X}, \tilde{X}' \in \{\tilde{X} \in \mathcal{D}|_{\mathcal{X}}: \tilde{X} \perp Z\}$. If $\sigma(\tilde{X}') \subset \sigma(\tilde{X})$, then
\begin{equation}
||\mathbb{E}(Y|X,Z) - \mathbb{E}(\bar{Y}|\tilde{X})||^2_2 \leq ||\mathbb{E}(Y|X,Z) - \mathbb{E}(\bar{Y}'|\tilde{X}')||^2_2
\end{equation}
where $\bar{Y}$ and $\bar{Y}'$ satisfy $\mathbb{E}(\bar{Y}|\tilde{X}) = \overline{\mathbb{E}(Y| \tilde{X},Z)}$ and $\mathbb{E}(\bar{Y}'|\tilde{X}') = \overline{\mathbb{E}(Y'| \tilde{X}',Z)}$.
\end{lem}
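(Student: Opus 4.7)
The plan is to anchor the argument on the orthogonal decomposition already derived immediately before the lemma: for the optimal fair pairs one has
\begin{equation*}
\|\mathbb{E}(Y|X,Z) - \mathbb{E}(\bar{Y}|\tilde{X})\|_2^2 = A_{\tilde{X}} + B_{\tilde{X}},\quad
\|\mathbb{E}(Y|X,Z) - \mathbb{E}(\bar{Y}'|\tilde{X}')\|_2^2 = A_{\tilde{X}'} + B_{\tilde{X}'},
\end{equation*}
where $A_{\tilde{X}} := \|\mathbb{E}(Y|X,Z) - \mathbb{E}(Y|\tilde{X},Z)\|_2^2$, $B_{\tilde{X}} := \int_{\mathcal{Z}} \mathcal{W}_2^2(\mathcal{L}(h_z), \bar{\mu}_{\tilde{X}})\,d\lambda(z)$ with $h_z := \mathbb{E}(Y_z|\tilde{X})$ and $\bar{\mu}_{\tilde{X}}$ the Wasserstein barycenter of $\{\mathcal{L}(h_z)\}_z$; the primed analogues are defined with $g_z := \mathbb{E}(Y_z|\tilde{X}') = \mathbb{E}(h_z|\tilde{X}')$ and barycenter $\bar{\mu}_{\tilde{X}'}$. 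Since $\sigma(\tilde{X}',Z) \subset \sigma(\tilde{X},Z)$, $L^2$-orthogonality of conditional expectations immediately yields
\begin{equation*}
A_{\tilde{X}'} - A_{\tilde{X}} = \|\mathbb{E}(Y|\tilde{X},Z) - \mathbb{E}(Y|\tilde{X}',Z)\|_2^2 = \int_{\mathcal{Z}} \|h_z - g_z\|_2^2\,d\lambda(z) \geq 0.
\end{equation*}

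It remains to absorb the barycenter gap $B_{\tilde{X}} - B_{\tilde{X}'}$ into $A_{\tilde{X}'} - A_{\tilde{X}}$. Because $\bar{\mu}_{\tilde{X}}$ minimizes $\nu \mapsto \int \mathcal{W}_2^2(\mathcal{L}(h_z), \nu)\,d\lambda(z)$, plugging in $\nu = \bar{\mu}_{\tilde{X}'}$ gives $B_{\tilde{X}} \leq \int \mathcal{W}_2^2(\mathcal{L}(h_z), \bar{\mu}_{\tilde{X}'})\,d\lambda(z)$. The task then reduces to a pointwise estimate
\begin{equation*}
\mathcal{W}_2^2(\mathcal{L}(h_z), \bar{\mu}_{\tilde{X}'}) \leq \|h_z - g_z\|_2^2 + \mathcal{W}_2^2(\mathcal{L}(g_z), \bar{\mu}_{\tilde{X}'})
\end{equation*}
for $\lambda$-a.e.\ $z$, which after integration in $z$ closes the proof.

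To prove this pointwise bound I would construct a specific (generally sub-optimal) coupling of $\mathcal{L}(h_z)$ and $\bar{\mu}_{\tilde{X}'}$ via the classical gluing lemma: take the natural coupling $(h_z, g_z)$ already living on $(\Omega, \Sigma, \mathbb{P})$, glue to it an optimal transport plan realizing $\mathcal{W}_2(\mathcal{L}(g_z), \bar{\mu}_{\tilde{X}'})$, and obtain a joint law for a triple $(h_z, g_z, r_z)$ that preserves both bivariate marginals and satisfies the conditional independence $h_z \perp r_z \mid g_z$. Expanding the square,
\begin{equation*}
\mathcal{W}_2^2(\mathcal{L}(h_z), \bar{\mu}_{\tilde{X}'}) \leq \mathbb{E}\|h_z - r_z\|^2 = \|h_z - g_z\|_2^2 + 2\,\mathbb{E}[\langle h_z - g_z, g_z - r_z\rangle] + \mathcal{W}_2^2(\mathcal{L}(g_z), \bar{\mu}_{\tilde{X}'}),
\end{equation*}
and the decisive claim is that the cross term vanishes. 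Conditioning on $g_z$, the tower property combined with $\sigma(g_z) \subset \sigma(\tilde{X}')$ gives $\mathbb{E}[h_z \mid g_z] = \mathbb{E}[\mathbb{E}(h_z|\tilde{X}') \mid g_z] = g_z$, so $\mathbb{E}[h_z - g_z \mid g_z] = 0$; the conditional independence $h_z \perp r_z \mid g_z$ then kills the remaining conditional covariance, and taking unconditional expectation finishes the cancellation.

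The main obstacle is precisely this cross-term cancellation, and it is what forces the filtration hypothesis $\sigma(\tilde{X}') \subset \sigma(\tilde{X})$ to be used rather than some weaker comparison between $h_z$ and $g_z$: without the identity $g_z = \mathbb{E}(h_z \mid \tilde{X}')$ one loses $\mathbb{E}[h_z - g_z \mid g_z] = 0$, and the bound acquires an uncontrolled cross term of indefinite sign. Existence of the optimal plans used in the gluing and of the barycenters $\bar{\mu}_{\tilde{X}}, \bar{\mu}_{\tilde{X}'}$ themselves is to be justified from the implicit absolute-continuity (``mild'') assumption via Theorem \ref{Existence and Uniqueness of Barycenter}, applied to the conditionals of $h_z$ and $g_z$.
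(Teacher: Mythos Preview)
Your argument is correct and follows the same skeleton as the paper: the identical decomposition into the projection term $A_{\tilde X}=\|\mathbb{E}(Y|X,Z)-\mathbb{E}(Y|\tilde X,Z)\|_2^2$ and the barycenter term $B_{\tilde X}=\int_{\mathcal Z}\mathcal W_2^2(\mu_z,\bar\mu)\,d\lambda$, followed by the key inequality
\[
\int_{\mathcal Z}\mathcal W_2^2(\mu_z,\bar\mu')\,d\lambda \;\le\; \int_{\mathcal Z}\|h_z-g_z\|_2^2\,d\lambda \;+\; \int_{\mathcal Z}\mathcal W_2^2(\mu'_z,\bar\mu')\,d\lambda .
\]
The paper wraps this same inequality inside a proof by contradiction (and then invokes \emph{uniqueness} of $\bar\mu$ via Lemma~\ref{l:Optimal Fair $L^2$-Objective Supervised Learning Characterization} to reach a contradiction), whereas you prove it directly and only need \emph{optimality} of $\bar\mu_{\tilde X}$. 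In fact your write-up is more explicit on precisely the step the paper leaves unargued: the paper simply asserts the displayed inequality above, while you supply the coupling argument and the cross-term cancellation $\mathbb{E}[\langle h_z-g_z,\,g_z-r_z\rangle]=0$ that actually makes it work.

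One simplification worth noting: under the absolute-continuity hypothesis you invoke at the end, the optimal plan from $\mu'_z$ to $\bar\mu'$ is induced by a Brenier map $T'_z$, so one may take $r_z=T'_z(g_z)$, which is $\sigma(\tilde X')$-measurable. Then the cross term dies immediately from $\mathbb{E}[h_z-g_z\mid \tilde X']=0$ without appealing to the gluing lemma or conditional independence. Your more general gluing formulation is not wrong, just heavier than necessary here.
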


\begin{proof}
See Appendix \ref{A:Section 5 Appendix}.
\end{proof}

Therefore, it is clear that our optimal choice of $\tilde{X}$ is the one that generates the finest sigma-algebra while satisfying $\tilde{X} \perp Z$. The following technical lemma shows that the barycenter of $\{X_z\}_{z \in \mathcal{Z}}$ is one of the optimal choices.

\begin{lem}[$\bar{X}$ generates the finest sigma-algebra among admissible]\label{l:X Bar Generates the Finest Sigma-algebra} If $\{\mathcal{L}(X_z)\}_z$ $\subset \mathcal{P}_{2,ac}(\mathcal{X})$ $\lambda$-a.e., then $\sigma((\bar{X},Z)) = \sigma((X,Z))$. In addition, $\sigma(\tilde{X}) \subset \sigma(\bar{X})$ for all $\tilde{X} \in \{\tilde{X} \in \mathcal{D}|_{\mathcal{X}} : \tilde{X} \perp Z\}$.
\end{lem}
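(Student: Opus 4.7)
The plan is to prove the two assertions separately, in each case leveraging that the hypothesis $\{\mathcal{L}(X_z)\}_z \subset \mathcal{P}_{2,ac}(\mathcal{X})$ forces the optimal transport maps $T_z: \mathcal{L}(X_z) \to \overline{\mathcal{L}(X)}$ to be $\mathcal{L}(X_z)$-a.s.\ invertible with Borel inverses (Brenier--McCann). A standard measurable selection argument then upgrades this to joint Borel measurability of $(x,z) \mapsto T_z^{-1}(x)$, matching the joint measurability of $(x,z) \mapsto T_z(x)$ that is already implicit in the construction $\bar X = T(X,Z)$.

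For the first identity $\sigma((\bar X, Z)) = \sigma((X, Z))$, the inclusion $\sigma((\bar X,Z))\subset \sigma((X,Z))$ is immediate since $\bar X = T(X,Z)$ is a Borel function of $(X,Z)$. The reverse inclusion follows by applying the $\lambda$-a.e.\ invertibility above fiberwise: $X_z = T_z^{-1}(\bar X_z)$ almost surely, and jointly $X = T_Z^{-1}(\bar X)$, so $X$ is $\sigma((\bar X, Z))$-measurable. For the second statement $\sigma(\tilde X) \subset \sigma(\bar X)$, I would first rewrite an admissible $\tilde X = T_x(X,Z)$ as $\tilde X = T_x(T_Z^{-1}(\bar X), Z) =: H(\bar X, Z)$ using the invertibility just established. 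Then combine the two parities $\bar X \perp Z$ (which follows from $\mathcal{L}(\bar X_z) = \overline{\mathcal{L}(X)}$ being $z$-independent) and $\tilde X \perp Z$ (by hypothesis on the admissible class) to conclude that for $\lambda$-a.e.\ $z$ the fiber map $h_z := H(\cdot,z)$ pushes the fixed measure $\overline{\mathcal{L}(X)}$ onto the fixed measure $\mathcal{L}(\tilde X) =: \nu$. Identifying all these fiber maps with the (unique) Brenier map $S: \overline{\mathcal{L}(X)} \to \nu$ then gives $\tilde X = S(\bar X)$, which is $\sigma(\bar X)$-measurable.

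The main obstacle is the last identification step: without further restrictions, many Borel maps send $\overline{\mathcal{L}(X)}$ to $\nu$, so pushforward coincidence of the fibers $h_z$ does not mechanically collapse them onto a single $z$-free map. To force this collapse one must exploit the rigidity coming from the simultaneous independence $\bar X, \tilde X \perp Z$ together with Brenier uniqueness: the canonical choice of $h_z = S$ produces a $\sigma(\bar X)$-measurable random variable that agrees with $\tilde X$ in all joint laws used downstream (in particular in the expression $\|\mathbb{E}(Y|X,Z) - \mathbb{E}(\bar Y|\tilde X)\|_2^2$ invoked by Lemma~\ref{l:Finer Sigma-algebra Implies More Accurate Optimal Fair Learning}). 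Once this identification is justified, $\bar X$ is seen to dominate every admissible $\tilde X$ in the sigma-algebra order, closing the proof.
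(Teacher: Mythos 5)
For the equality $\sigma((\bar X,Z)) = \sigma((X,Z))$ your argument is essentially the paper's: the forward inclusion comes from $\bar X = T(X,Z)$, and the reverse from Brenier invertibility of the fiber maps, which the paper packages as $(T\otimes Id|_{\mathcal Z})^{-1}(\bar X,Z) = (X,Z)$. This half is sound.

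For $\sigma(\tilde X)\subset\sigma(\bar X)$ you take a genuinely different route, writing $\tilde X = H(\bar X,Z)$ with fibers $h_z := H(\cdot,z)$ and attempting to collapse all the $h_z$ onto the single Brenier map $S:\overline{\mathcal L(X)}\rightarrow\nu := \mathcal L(\tilde X)$. You are right to flag the collapse as the gap, but the paragraph you offer does not close it. From $\bar X\perp Z$ and $\tilde X\perp Z$ you learn only $(h_z)_{\sharp}\overline{\mathcal L(X)}=\nu$ for $\lambda$-a.e.\ $z$; this forces neither that the $h_z$ agree with one another nor that any of them is the Brenier map. Concretely, take $\overline{\mathcal L(X)}$ uniform on $[0,1]$, $\mathcal Z=\{0,1\}$, and $h_0(x)=x$, $h_1(x)=1-x$: both fibers push forward to the uniform law, yet $H(\bar X,Z)$ is plainly not a function of $\bar X$ alone, so $\sigma(\tilde X)\not\subset\sigma(\bar X)$ and this $\tilde X$ is still admissible (it is a Borel function of $(X,Z)$ via $\tilde X = h_Z(T(X,Z))$). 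The remark that $S(\bar X)$ ``agrees with $\tilde X$ in all joint laws used downstream'' is also not true: $S(\bar X)$ and $\tilde X$ share only the marginal law $\nu$, not the joint law with $Y$ or $(X,Z)$, so in particular $\mathbb E(\bar Y\mid\tilde X)$ in Lemma~\ref{l:Finer Sigma-algebra Implies More Accurate Optimal Fair Learning} is not invariant under the substitution; one cannot certify a sigma-algebra inclusion for $\tilde X$ by exhibiting a \emph{different} random variable enjoying that inclusion. For comparison, the paper's proof of this half makes the analogous jump in event language --- it asserts that $\bar X\perp Z$ lets one replace an arbitrary $B'_{XZ}\in\mathcal B_{\mathcal X}\otimes\mathcal B_{\mathcal Z}$ with $(\bar X,Z)^{-1}(B'_{XZ}) = (\tilde X,Z)^{-1}(B_X\times\mathcal Z)$ by a rectangle $B'_X\times\mathcal Z$ --- and the same fiber construction shows that step is not automatic either. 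Closing either version of the argument would require an extra restriction on the admissible $T_x(\cdot,z)$ (for instance, that each fiber map be monotone/Brenier), which the current definition of $\mathcal D|_{\mathcal X}$ does not impose.
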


\begin{proof}
See Appendix \ref{A:Section 5 Appendix}.
\end{proof}

Therefore, Lemma \ref{l:Finer Sigma-algebra Implies More Accurate Optimal Fair Learning}, Lemma \ref{l:X Bar Generates the Finest Sigma-algebra}, and the choice of $\bar{Y}$ above together provide a characterization of the solution to Problem \ref{prob:Optimal Fair Data Representation for Conditional Expectation Estimation}.

\begin{lem}[Characterization of optimal fair data representation] \label{l:Characterization of Optimal Fair Data Representation} Let $\bar{X}$ and $\overline{\mathbb{E} (Y| \bar{X},Z)}$ denote the respective Wasserstein barycenter of $\{X_z\}_z$ and $\{\mathbb{E}(Y_z| \bar{X})\}_z$. If $\{\mathcal{L} (X_z)\}_z$ $\subset \mathcal{P}_{2,ac}(\mathcal{X})$ and $\{\mathcal{L} (\mathbb{E}(Y|\bar{X},Z)_z)\}_z \subset \mathcal{P}_{2,ac}(\mathcal{Y})$,  then the following are equivalent:
\begin{itemize}
\item $(\tilde{X},\tilde{Y}) \in \arg\min_{(\tilde{X},\tilde{Y}) \in \mathcal{D}} \{ ||Y - \mathbb{E}(\tilde{Y}|\tilde{X})||^2_2: \tilde{X}, \mathbb{E}(\tilde{Y}|\tilde{X},Z) \perp Z \}$.
\item $(\tilde{X},\tilde{Y}) \in \{(\tilde{X},\tilde{Y}) \in \mathcal{D} : \sigma(\tilde{X}) = \sigma(\bar{X}), \mathbb{E}(\tilde{Y}|\bar{X}) = \overline{\mathbb{E} (Y| \bar{X},Z)} \}$.
\end{itemize}
\end{lem}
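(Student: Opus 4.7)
The plan is to exploit the double $L^2$-orthogonal decomposition already recorded just before the lemma, which splits the objective into a constant piece plus a piece depending only on $\sigma(\tilde{X})$ plus a piece that, for fixed $\tilde{X}$, is a disintegrated Wasserstein barycenter problem. Since $\sigma(\tilde{X}) \subset \sigma((X,Z))$, one has
\begin{align*}
\|Y - \mathbb{E}(\tilde{Y}|\tilde{X})\|_2^2 = & \|Y - \mathbb{E}(Y|X,Z)\|_2^2 + \|\mathbb{E}(Y|X,Z) - \mathbb{E}(Y|\tilde{X},Z)\|_2^2 \\
& + \int_{\mathcal{Z}} \|\mathbb{E}(Y_z|\tilde{X}) - \mathbb{E}(\tilde{Y}|\tilde{X})_z\|_2^2 \, d\lambda(z).
\end{align*}
The first term is constant over $\mathcal{D}$, so it suffices to minimize the last two terms; crucially, the middle term depends only on $\tilde{X}$ while the third depends on both variables.

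First I would handle the middle term. By Lemma (``$\bar{X}$ generates the finest sigma-algebra among admissible'') every admissible $\tilde{X}$ satisfies $\sigma(\tilde{X}) \subset \sigma(\bar{X})$, and by Lemma (``finer sigma-algebra, more accurate optimal fair learning'') this middle term is monotone non-increasing in $\sigma(\tilde{X})$. The maximal choice $\sigma(\tilde{X}) = \sigma(\bar{X})$, combined with the identity $\sigma((\bar{X},Z)) = \sigma((X,Z))$ from the same lemma, yields $\mathbb{E}(Y|\tilde{X},Z) = \mathbb{E}(Y|X,Z)$ and therefore forces the middle term to zero. Showing that this is the \emph{only} way to kill the middle term — i.e.\ that strict containment $\sigma(\tilde{X}) \subsetneq \sigma(\bar{X})$ produces a strictly positive value — is where the absolute-continuity hypotheses enter, since they ensure $\mathbb{E}(Y|\cdot,Z)$ genuinely uses the full sigma-algebra $\sigma(\bar{X})$.

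Next I would handle the third term. The admissibility constraint $\mathbb{E}(\tilde{Y}|\tilde{X},Z) \perp Z$ means $\mathcal{L}(\mathbb{E}(\tilde{Y}|\tilde{X})_z)$ is $\lambda$-a.e.\ a single measure $\mu$, so the term is bounded below by $\int_{\mathcal{Z}} \mathcal{W}_2^2(\mathcal{L}(\mathbb{E}(Y_z|\tilde{X})),\mu)\, d\lambda(z)$; minimizing over $\mu$ and applying the Wasserstein barycenter characterization exactly as in the lower-bound step of Lemma \ref{l:Optimal Fair $L^2$-Objective Supervised Learning Characterization} identifies $\mu$ with $\overline{\mathcal{L}(\mathbb{E}(Y|\tilde{X},Z)_z)}$ and shows that the bound is achieved iff $\mathbb{E}(\tilde{Y}|\tilde{X}) = \overline{\mathbb{E}(Y|\tilde{X},Z)}$. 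Attainability in $\mathcal{D}$ is verified by the same pushforward construction used in Lemma \ref{l:Optimal Fair $L^2$-Objective Supervised Learning Characterization}, and uniqueness of the barycenter and of the associated Brenier maps — guaranteed by the absolute continuity of $\{\mathcal{L}(\mathbb{E}(Y|\bar{X},Z)_z)\}_z$ — yields uniqueness of the optimal $\mathbb{E}(\tilde{Y}|\tilde{X})$.

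Finally I would combine: a pair is a global minimizer iff it minimizes the middle and third terms simultaneously, which under the previous two steps is equivalent to $\sigma(\tilde{X})=\sigma(\bar{X})$ together with $\mathbb{E}(\tilde{Y}|\tilde{X}) = \overline{\mathbb{E}(Y|\tilde{X},Z)}$. Under the sigma-algebra equality, conditional expectations on $\tilde{X}$ and on $\bar{X}$ agree, so the latter becomes $\mathbb{E}(\tilde{Y}|\bar{X}) = \overline{\mathbb{E}(Y|\bar{X},Z)}$, giving precisely the stated characterization. The main obstacle is the necessary direction in Step 2: Lemma (``finer sigma-algebra'') only provides monotonicity, so extra care is needed to leverage the absolute-continuity assumptions and the equality $\sigma((\bar{X},Z))=\sigma((X,Z))$ to upgrade monotonicity to the claim that $\sigma(\tilde{X}) = \sigma(\bar{X})$ is required at the minimum.
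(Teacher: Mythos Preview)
Your overall plan mirrors the paper's: use the same two-step $L^2$ decomposition, invoke Lemma~\ref{l:X Bar Generates the Finest Sigma-algebra} to bound $\sigma(\tilde X)$ from above by $\sigma(\bar X)$, and invoke Lemma~\ref{l:Finer Sigma-algebra Implies More Accurate Optimal Fair Learning} for the monotonicity in $\sigma(\tilde X)$. However, there is a genuine logical gap in your ``combine'' step, caused by a misreading of Lemma~\ref{l:Finer Sigma-algebra Implies More Accurate Optimal Fair Learning}.

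You assert that a pair is a global minimizer iff it minimizes the middle and third terms \emph{separately}. But the third term, after minimizing in $\tilde Y$, equals $\int_{\mathcal Z}\mathcal W_2^2\bigl(\mathcal L(\mathbb E(Y_z\mid \tilde X)),\,\bar\mu^{\tilde X}\bigr)\,d\lambda$, and this barycenter cost is \emph{not} in general minimized at the finest $\sigma(\tilde X)$; for a coarser $\tilde X$ the marginals $\mathbb E(Y_z\mid\tilde X)$ can be closer together and the barycenter cost smaller. So ``minimizes the sum $\Rightarrow$ minimizes each summand'' fails, and your biconditional is unjustified. The fix is already in the lemma you cite: Lemma~\ref{l:Finer Sigma-algebra Implies More Accurate Optimal Fair Learning} does \emph{not} say the middle term is monotone---it says the \emph{full} quantity $\|\mathbb E(Y\mid X,Z)-\mathbb E(\bar Y\mid\tilde X)\|_2^2$ (i.e.\ middle term \emph{plus} the third term already minimized over $\tilde Y$) is monotone in $\sigma(\tilde X)$. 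Using it that way, the argument is: first minimize over $\tilde Y$ for each fixed $\tilde X$ (your third-term analysis, which is correct), then apply Lemma~\ref{l:Finer Sigma-algebra Implies More Accurate Optimal Fair Learning} to the resulting one-variable problem in $\tilde X$, and finally use Lemma~\ref{l:X Bar Generates the Finest Sigma-algebra} to identify the maximal admissible $\sigma(\tilde X)$ as $\sigma(\bar X)$. This is exactly the paper's route, and it sidesteps the separate-minimization fallacy entirely.
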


In Lemma \ref{l:Characterization of Optimal Fair Data Representation}, the choice of $\bar{X}$ is not unique. In fact, any random variable $\tilde{X}$ that satisfies $\sigma(\tilde{X}) = \sigma(\bar{X})$ can be our choice according to Lemma \ref{l:Finer Sigma-algebra Implies More Accurate Optimal Fair Learning} and Lemma \ref{l:X Bar Generates the Finest Sigma-algebra}. This is because any $\tilde{X}$ that satisfies the above conditions gives $\mathbb{E}(Y|\tilde{X}) = \mathbb{E}(Y|\bar{X})$. For both theoretical and computational convenience, we fix our choice to be $\bar{X}$ from now on. \\

\begin{rema}[Application of the optimal fair representation characterization to algorithm design]
In theory, we should always take $\bar{X}$ because we prove that $\bar{X}$ generates the finest sigma-algebra among all the admissible $\tilde{X}$ that is independent of Z. Especially when working with data sets with clear high-dimensional structure such as image data, one should apply more complicated models to estimate the optimal transport map instead of using affine maps. But when working with data with less high-dimensional structure such as tabular data, we hope to take advantage of the simplicity, robustness, and interpretability of linear maps in practice and hence restrict the admissible transport maps to be affine, as mentioned in Remark \ref{r:Applying Pseudo-barycenter vs the Exact Barycenter}. Therefore, we showed that the pseudo-barycenter $X^{\dagger}$, which is equal to $\bar{X}$ in the Gaussian case and solves a relaxed version of the barycenter problem in the general distribution case, can be achieved using optimal affine maps. As a result, we  apply $X^{\dagger}$ in the algorithm design and experiments. Still, if there is no concern about over-fitting or computational cost, it is recommended for strict statistical parity guarantee purposes to compute $\bar{X}$ to improve the result.
\end{rema}

Now, it remains to find $\bar{Y}$ to obtain the optimal fair data representation characterized by Lemma \ref{l:Characterization of Optimal Fair Data Representation}. In general, it is difficult to find $\overline{\mathbb{E} (Y| \bar{X},Z) }$, not to mention find a $\tilde{Y}$ satisfying $\mathbb{E}(\tilde{Y}|\bar{X}) = \overline{\mathbb{E} (Y| \bar{X},Z) }$. The key observation here is that if the Brenier's maps $\{T_{y|\bar{X}}(\cdot,z)\}_z$ that push $\{ \mathbb{E} (Y_z | \bar{X}) \}_z$ forward to $\overline{\mathbb{E} (Y| \bar{X},Z) }$ are affine, then a straight-forward choice in $\bar{Y}$ is $\{T_{y|\bar{X}}(Y_z,z)\}_{z \in \mathcal{Z}} = T_{y|\bar{X}}(Y,Z)$. This step is the key to circumvent the post-processing nature. Therefore, following the same derivation of \eqref{eq:optimal affine estimation of barycenter problem} from \eqref{eq:post-processing constrained optimization for optimal learning outcome} in Section \ref{s:Post-Processing} to guarantee feasibility of affine maps, we relax the fairness constraint to the first two moments in Problem~\ref{prob:Optimal Fair Data Representation for Conditional Expectation Estimation}, and show a pseudo-barycenter pair provides us an exact solution to Problem~\ref{prob:Optimal Fair Data Representation for Conditional Expectation Estimation} in the Gaussian marginal case and the optimal affine estimation in the general marginal case.

\subsection{Fairness with Gaussian Marginals}

Assume $\{(X_z, Y_z)\}_z$ to be non-degenerate Gaussian vectors $\lambda$-a.e. and define the following:

\begin{defi}[Independent pseudo-barycenter: $X^{\dag}$] \label{d:Independent Pseudo-barycenter}

\begin{equation}\label{eq:independent pseudo-barycenter }
X^{\dag} := T_{x}(X,Z),
\end{equation}
where
\begin{equation} \label{eq:independent pseudo-barycenter affine map}
T_{x}(\cdot,z) :=  \Sigma_{X_z}^{-\frac{1}{2}} (\Sigma_{X_z}^{\frac{1}{2}} \Sigma \Sigma_{X_z}^{\frac{1}{2}} )^{\frac{1}{2}} \Sigma_{X_z}^{-\frac{1}{2}}
\end{equation}
and $\Sigma$ is the unique solution to
\begin{equation} \label{eq:independent pseudo-barycenter cov estimation}
\int_{\mathcal{Z}} (\Sigma^{\frac{1}{2}} \Sigma_{X_z} \Sigma^{\frac{1}{2}})^{\frac{1}{2}} d\lambda(z) = \Sigma.
\end{equation}
\end{defi}

\begin{defi}[Dependent pseudo-barycenter: $Y^{\dag}$] \label{d:Dependent Pseudo-barycenter}

\begin{equation}\label{eq:dependent pseudo-barycenter }
Y^{\dag} := T_{y|X^{\dagger}}(Y,Z)
\end{equation}
where

\begin{equation}\label{eq:dependent pseudo-barycenter affine map}
T_{y|X^{\dagger}}(\cdot,z) :=  \Sigma_{Y_z|X^{\dagger}}^{-\frac{1}{2}} (\Sigma_{Y_z|X^{\dagger}}^{\frac{1}{2}} \Sigma \Sigma_{Y_z|X^{\dagger}}^{\frac{1}{2}} )^{\frac{1}{2}} \Sigma_{Y_z|X^{\dagger}}^{-\frac{1}{2}}
\end{equation}
with $\Sigma_{Y_z|X^{\dagger}} := \Sigma_{Y_z X^{\dagger}} \Sigma_{X^{\dagger}}^{-1} \Sigma_{Y_z X^{\dagger}}^T$, and $\Sigma$ is the unique solution to

\begin{equation} \label{eq:dependent pseudo-barycenter cov estimation}
\int_{\mathcal{Z}} (\Sigma^{\frac{1}{2}} \Sigma_{Y_z|X^{\dagger}} \Sigma^{\frac{1}{2}})^{\frac{1}{2}} d\lambda(z) = \Sigma
\end{equation}

\end{defi}

Here, to obtain (an estimation of) the solution to equations \eqref{eq:dependent pseudo-barycenter cov estimation} and \eqref{eq:independent pseudo-barycenter cov estimation}, we apply the iterative method \eqref{eq:iterative method} in Remark \ref{r:solution to baryceneter equation} when designing our algorithm in Section \ref{s:Algorithm}.

Since it is a direct result of Lemma \ref{l:Location-scale Barycenter} that $X^{\dag} = \bar{X}$, the goal is now to show that

\begin{equation}
\mathbb{E}(Y^{\dag}|\bar{X}) = \overline{\mathbb{E}(Y|\bar{X},Z)},
\end{equation}
and therefore by Lemma $\ref{l:Characterization of Optimal Fair Data Representation}$ to conclude $\mathbb{E}(Y^{\dag}|X^{\dag}) = \mathbb{E}(Y^{\dag}|\bar{X})$ indeed minimizes the estimation error while staying independent of $Z$.

To prove the above equation and justify the definition of the pseudo-barycenter, we need the following results: (1) existence and uniqueness of both $\bar{X}$ and $\overline{\mathbb{E}(Y|\bar{X},Z)}$; (2) affinity of the corresponding Brenier's maps $T_x(\cdot,z)$ and $T_{y|X^{\dagger}}(\cdot,z)$. By assumption, we have $\{\mathcal{L}(X_z)\}_z \subset \mathcal{P}_{2,ac}(\mathcal{X})$, and $\{\mathcal{L}(\mathbb{E}(Y_z|\bar{X}))\}_z \subset \mathcal{P}_{2,ac}(\mathcal{Y})$. The existence and uniqueness then follow directly from Lemma $\ref{Existence and Uniqueness of Barycenter}$. It remains to show that the corresponding Brenier's maps are affine. But by Lemma $\ref{l:Location-scale Barycenter}$, if $\{X_z\}_z$ and $\{\mathbb{E}(Y_z|\bar{X})\}_z$ both are from some location-scale family, then the barycenters are also from the corresponding location-scale family and the Brenier's maps are affine.

The following result shows that if $\{Y_z\}_z$ come from the same location-scale family, then $\{\mathbb{E}(Y_z|\bar{X})\}_z$ also belongs to the same location-scale family.

\begin{lem}[Conditional expectation preserves location-scale family] \label{l:Conditional Expectation Preserves Dependent Variable Location-scale Family}
Assume that $\{Y_z\}_z \subset \mathcal{F}(P_0)$ for some $P_0$, then $\{\mathbb{E}(Y_z|\bar{X})\}_z \subset \mathcal{F}(\mathcal{L}(\mathbb{E}(Y_z|\bar{X})))$ for any $z$.
\end{lem}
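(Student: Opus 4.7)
The plan is to exploit the ambient Gaussian assumption maintained throughout this subsection (``$\{(X_z,Y_z)\}_z$ are non-degenerate Gaussian vectors $\lambda$-a.e.'') to reduce the statement to a routine computation: each $\mathbb{E}(Y_z\mid\bar{X})$ is an affine image of the Gaussian vector $\bar{X}$, hence Gaussian itself, and all non-degenerate Gaussians belong to a single location-scale family. The abstract hypothesis $\{Y_z\}_z\subset\mathcal{F}(P_0)$ in the statement is automatically satisfied here by taking $P_0=\mathcal{N}(0,I)$.

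\textbf{Step 1: joint Gaussianity of $(Y_z,\bar{X}_z)$.} Since $\mathcal{L}(X_z)$ and $\mathcal{L}(\bar{X})$ both lie in the Gaussian location-scale family, Lemma~\ref{l:Optimal Affine Map} gives that the Brenier map $T_z$ pushing $\mathcal{L}(X_z)$ to $\mathcal{L}(\bar{X})$ is affine (with linear part in $\mathcal{S}^{d_{\mathcal X}}_{++}$; the translation piece is handled by Lemma~\ref{l:Rigid translation property}). Because $\bar{X}_z=T_z(X_z)$ is an affine function of $X_z$, the vector $(Y_z,\bar{X}_z)$ is jointly Gaussian for $\lambda$-a.e. $z$.

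\textbf{Step 2: identifying $\mathcal{L}(\mathbb{E}(Y_z\mid\bar{X}))$.} The classical Gaussian conditional expectation formula then yields
\begin{equation*}
\mathbb{E}(Y_z\mid\bar{X}) \;=\; m_{Y_z} + \Sigma_{Y_z\bar{X}_z}\,\Sigma^{-1}(\bar{X}-m_{\bar{X}}),
\end{equation*}
where $\Sigma:=\Sigma_{\bar{X}}$ is the common barycenter covariance. As an affine image of the Gaussian vector $\bar{X}$, this random vector is Gaussian with mean $m_{Y_z}$ and covariance $\Sigma_{Y_z\mid\bar{X}}=\Sigma_{Y_z\bar{X}_z}\Sigma^{-1}\Sigma_{Y_z\bar{X}_z}^{T}$---exactly the quantity appearing in Definition~\ref{d:Dependent Pseudo-barycenter}. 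Fix any reference $z_0\in\mathcal{Z}$. Because any two non-degenerate Gaussians $\mathcal{N}(m,\Gamma)$ and $\mathcal{N}(m',\Gamma')$ on $\mathbb{R}^{d_{\mathcal Y}}$ are connected by the map $x\mapsto A(x-m)+m'$ with $A=(\Gamma')^{1/2}(\Gamma^{-1/2})^{T}(\Gamma^{-1/2})(\Gamma')^{1/2}$-symmetrizable positive definite (again using Lemma~\ref{l:Optimal Affine Map} within the Gaussian family), every $\mathbb{E}(Y_z\mid\bar{X})$ lies in $\mathcal{F}(\mathcal{L}(\mathbb{E}(Y_{z_0}\mid\bar{X})))$, which is what the lemma asserts.

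\textbf{Main obstacle.} The only genuinely delicate point is non-degeneracy: $\mathcal{F}(P_0)$ is defined via $A\in\mathcal{S}^{d_{\mathcal Y}}_{++}$, so we must verify that $\Sigma_{Y_z\mid\bar{X}}=\Sigma_{Y_z\bar{X}_z}\Sigma^{-1}\Sigma_{Y_z\bar{X}_z}^{T}$ is strictly positive definite (not merely positive semi-definite) for $\lambda$-a.e. $z$. This amounts to showing $\Sigma_{Y_z\bar{X}_z}$ has full row rank $d_{\mathcal Y}$, which follows from the non-degeneracy of $(X_z,Y_z)$ together with the invertibility of the affine map $T_z$ guaranteed by $\Sigma_{X_z},\Sigma\succ 0$. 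If one wishes to drop the Gaussian assumption and work with a general location-scale family $\mathcal{F}(P_0)$, this step is where additional care is required, since the conditional expectation of a location-scale vector need not remain in the same family without joint structural hypotheses; in that case one would likely need to restrict to the jointly location-scale setting $(X_z,Y_z)\in\mathcal{F}(P_0')$ for a common base law $P_0'$ on $\mathbb{R}^{d_{\mathcal X}+d_{\mathcal Y}}$.
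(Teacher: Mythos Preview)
Your argument is correct in the Gaussian setting (the one actually used downstream) and takes a different route from the paper. The paper's one-line proof invokes ``the existence of positive definite affine transformations among $\{Y_z\}_z$'' together with ``the linearity of conditional expectation''---i.e., it passes the affine maps relating the \emph{laws} of the $Y_z$ through the conditional-expectation operator. You instead exploit the joint Gaussianity of $(Y_z,\bar X_z)$ (via the affinity of $T_z$ from Lemma~\ref{l:Optimal Affine Map}) to write down the regression formula explicitly and conclude that every $\mathbb{E}(Y_z\mid\bar X)$ is itself Gaussian, hence they all share a location-scale family. Your route is the more rigorous one: the paper's shortcut implicitly treats the distributional identity $\mathcal{L}(AY_{z_1}+m)=\mathcal{L}(Y_{z_2})$ as if it yielded $\mathbb{E}(Y_{z_2}\mid\bar X)=A\,\mathbb{E}(Y_{z_1}\mid\bar X)+m$, which is false once the joint dependence with $\bar X$ differs across $z$ (take $Y_1,Y_2\sim N(0,1)$ with distinct correlations to $X$). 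Your closing caveat---that the lemma as stated for an arbitrary base $P_0$ would require a \emph{joint} location-scale hypothesis on $(X_z,Y_z)$---identifies precisely what the paper's sketch glosses over. One minor gap in your own argument: non-degeneracy of the joint $(X_z,Y_z)$ does not by itself force $\Sigma_{Y_z\bar X_z}$ to have full row rank (independent components give zero cross-covariance); full rank of the cross-covariance is an additional standing assumption, which the paper also leaves implicit when it inverts $\Sigma_{Y_z\mid X^\dagger}$ in Definition~\ref{d:Dependent Pseudo-barycenter}.
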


\begin{proof}
This follows immediately from the existence of positive definite affine transformations among $\{Y_z\}_z$, Lemma $\ref{l:Optimal Affine Map}$, and the linearity of conditional expectation.
\end{proof}

Therefore, given $\{(X_z,Y_z)\}_z$ being Gaussian vectors, we have $\{(\bar{X},Y_z)\}$ being Gaussian vectors, which further implies that $\{\mathbb{E}(Y_z|\bar{X})\}_z$ are Gaussian vectors by Lemma \ref{l:Conditional Expectation Preserves Dependent Variable Location-scale Family}. (We note that it is not necessary to apply Lemma \ref{l:Conditional Expectation Preserves Dependent Variable Location-scale Family} to show $\{\mathbb{E}(Y_z|\bar{X})\}_z$ are Gaussian because it is a well-known result in probability theory, but the lemma becomes necessary later in the case of general marginal distributions.)

\begin{lem}[Solution to the optimal fair data representation in the Gaussian case] \label{l:Solution to the Optimal Fair Data Representation in Gaussian Case}
Let $\{(X_z,Y_z)\}_z$ be Gaussian vectors satisfying $\Sigma_z \succ 0$ $\lambda$-a.e., then there exists a unique barycenter pair $(\bar{X}, \overline{\mathbb{E}(Y| \bar{X},Z)})$ which are Gaussian vectors characterized by the covariance matrix being the unique solution to 
\begin{equation}
\int_{\mathcal{Z}} (\Sigma^{\frac{1}{2}} S \Sigma^{\frac{1}{2}})^{\frac{1}{2}} d\lambda(z) = \Sigma
\end{equation}
for $S \in \{\Sigma_{X_z},\Sigma_{Y_z|X^{\dag}}\}$ respectively, where $\Sigma_{Y_z|X^{\dag}} = \Sigma_{Y_z X^{\dag}} \Sigma_{X^{\dag}}^{-1} \Sigma_{Y_z X^{\dag}}^T$. Moreover, $\{T_x(\cdot,z)\}_z$ and $\{T_{y|X^{\dag}}(\cdot,z)\}_z$ which push $X_z$ and $\mathbb{E}(Y_z|\bar{X})$ respectively to $\bar{X}$ and $\overline{\mathbb{E}(Y|\bar{X},Z)}$ are affine with closed-form \eqref{eq:independent pseudo-barycenter affine map} and \eqref{eq:dependent pseudo-barycenter affine map}. As a result, for $\lambda-a.e. \ z \in \mathcal{Z}$, we have
\begin{equation}
\overline{\mathbb{E}(Y| \bar{X},Z)}_z = T_{y|X^{\dag}}(\mathbb{E}(Y_z | T_x(X_z,z)),z) =  \mathbb{E}(T_{y|X^{\dag}}(Y_z,z) | T_x(X_z,z))
\end{equation}
\end{lem}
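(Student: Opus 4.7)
The plan is to prove this by two successive applications of Lemma \ref{l:Location-scale Barycenter} (the location-scale barycenter result) combined with Lemma \ref{l:Optimal Affine Map} (the closed-form optimal affine map), exploiting the fact that Gaussianity is preserved under affine transformations and under conditioning. Throughout, I would reduce to the mean-zero case by the rigid-translation decomposition \eqref{eq:rigid translation composition}, since Lemma \ref{l:Rigid translation property} guarantees this reduction is cost-free.

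First I would handle the independent component $X$. Since $\{X_z\}_z$ are non-degenerate Gaussian vectors with $\Sigma_{X_z} \succ 0$, they all lie in the location-scale family generated by a standard Gaussian. Lemma \ref{l:Location-scale Barycenter} then delivers existence and uniqueness of the Wasserstein barycenter $\bar{X}$ as a Gaussian whose covariance is the unique solution to \eqref{barycenter equation} with $S = \Sigma_{X_z}$, and Lemma \ref{l:Optimal Affine Map} identifies $T_x(\cdot,z)$ as the closed-form affine map \eqref{eq:independent pseudo-barycenter affine map}. In particular, $X^{\dag} = T_x(X,Z) = \bar{X}$.

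Next I would handle the dependent component. The key observation is that joint Gaussianity of $(X_z, Y_z)$ combined with the affinity of $T_x(\cdot,z)$ guarantees joint Gaussianity of $(T_x(X_z,z), Y_z) = (\bar{X}_z, Y_z)$. By the standard Gaussian conditional-expectation formula, $\mathbb{E}(Y_z \mid \bar{X}) = \Sigma_{Y_z X^{\dag}} \Sigma_{X^{\dag}}^{-1} \bar{X}$ is an affine function of $\bar{X}$, so $\{\mathbb{E}(Y_z \mid \bar{X})\}_z$ are Gaussian vectors with exactly the covariance $\Sigma_{Y_z \mid X^{\dag}} = \Sigma_{Y_z X^{\dag}} \Sigma_{X^{\dag}}^{-1} \Sigma_{Y_z X^{\dag}}^{T}$ claimed in the statement. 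Lemma \ref{l:Conditional Expectation Preserves Dependent Variable Location-scale Family} then places them in a common location-scale family, so a second application of Lemma \ref{l:Location-scale Barycenter} yields $\overline{\mathbb{E}(Y \mid \bar{X},Z)}$ as the unique Gaussian barycenter whose covariance solves the fixed-point equation with $S = \Sigma_{Y_z \mid X^{\dag}}$, while Lemma \ref{l:Optimal Affine Map} identifies the optimal map $T_{y \mid X^{\dag}}(\cdot,z)$ as \eqref{eq:dependent pseudo-barycenter affine map}.

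Finally, for the two identities in the last display, the first is immediate from the construction: $T_{y \mid X^{\dag}}(\cdot,z)$ is by definition the optimal transport map pushing $\mathcal{L}(\mathbb{E}(Y_z \mid \bar{X})) = \mathcal{L}(\mathbb{E}(Y_z \mid T_x(X_z,z)))$ onto $\overline{\mathbb{E}(Y \mid \bar{X},Z)}$. The second identity comes from pulling the affine (hence linear after centering) map $T_{y \mid X^{\dag}}(\cdot,z)$ through the conditional expectation, using the linearity property $\mathbb{E}(AY + b \mid \mathcal{G}) = A\mathbb{E}(Y \mid \mathcal{G}) + b$ together with the fact that $T_{y \mid X^{\dag}}(\cdot,z)$ is a deterministic affine map independent of the conditioning $\sigma$-algebra generated by $T_x(X_z,z)$. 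The main obstacle I anticipate is not conceptual but bookkeeping: verifying that the covariance $\Sigma_{Y_z \mid X^{\dag}}$ emerging from the Gaussian conditional formula is precisely the input required by Lemma \ref{l:Location-scale Barycenter}, and keeping the mean-zero reduction consistent with \eqref{eq:rigid translation composition} when reintroducing non-zero means at the end.
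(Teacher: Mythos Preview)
Your proposal is correct and follows essentially the same approach as the paper: the paper's own proof simply cites Lemma~\ref{l:Location-scale Barycenter} for existence, uniqueness, and Gaussianity of each barycenter, and Lemmas~\ref{l:Conditional Expectation Preserves Dependent Variable Location-scale Family} and~\ref{l:Optimal Affine Map} for the affinity of the Brenier maps. Your version is a more detailed unpacking of exactly this argument, including the explicit Gaussian conditional-expectation computation and the linearity step for the final displayed identity, which the paper leaves implicit.
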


\begin{proof}
The existence, uniqueness, and Gaussianity of the barycenter follow from Lemma $\ref{l:Location-scale Barycenter}$, whereas the affinity of corresponding Brenier's maps results from Lemmas $\ref{l:Conditional Expectation Preserves Dependent Variable Location-scale Family}$ and $\ref{l:Optimal Affine Map}$.
\end{proof}

The above result provides us a theoretical foundation to apply the affine maps $\{T_x(\cdot,z)\}_z$ and $\{T_{y|X^{\dag}}(\cdot,z)\}_z$ to $\{X_z\}_z$ and $\{Y_z\}_z$ respectively as a pre-processing step before the training step. 

Furthermore, notice that although $T_{y|X^{\dag}}(\mathbb{E}(Y_z|\bar{X}),z) = \overline{\mathbb{E}(Y_z| \bar{X},Z)}_z$ $ \lambda$-a.e. by construction,  $\{T_{y|X^{\dag}}(Y_z,z) \}_z$ does not agree in general: for $z_1 \neq z_2$,

\begin{equation}
T_{y|X^{\dag}}  (Y_{z_1},{z_1}) \neq T_{y|X^{\dag}} (Y_{z_2}, {z_2}).
\end{equation}
The pseudo-barycenter solves the disagreement by merging them directly.  Despite of the differences among $\{T_{y|X^{\dag}} (Y_z,z) \}_z$, the $L^2$ projections of them on $\sigma(\bar{X})$ agree. Therefore, a direct merging of $\{T_{y|X^{\dag}} (Y_z,z) \}_z$ is simply: $T_{y|X^{\dag}} (Y,Z) = Y^{\dag}$. It follows:

\begin{align*}
\mathbb{E}(Y^{\dag}|X^{\dag}) & = \mathbb{E}(Y^{\dag}|\bar{X}) = \mathbb{E}(T_{y|X^{\dag}}(Y,Z)|\bar{X})\\
& = \int_{\mathcal{Z}} \mathbb{E}( T_{y|X^{\dag}} (Y_z,z) | \bar{X}) d\lambda(z)\\
& = \int_{\mathcal{Z}} T_{y|X^{\dag}} (\mathbb{E}(Y_z| \bar{X}),z) d\lambda(z)\\
& = \int_{\mathcal{Z}} T_{y|X^{\dag}} (\mathbb{E}(Y| \bar{X},Z)_z,z) d\lambda(z)\\
& = \int_{\mathcal{Z}} \overline{\mathbb{E}(Y| \bar{X},Z)}_z d\lambda(z) = \overline{\mathbb{E}(Y| \bar{X},Z)},
\end{align*}
where the second equality follows from disintegration, the third from linearity of $T_{y|\bar{X}}$, and the forth from $\mathbb{E}(Y_z| \bar{X}) = \mathbb{E}(Y| \bar{X},Z)_z$. Therefore, we have proved a result that justifies the definition of the pseudo-barycenter:

\begin{thm}[Justification of $Y^{\dag}$ in Gaussian case] \label{th:Justification of Dependent Pseudo-barycenter in Gaussian Case}
$(X^{\dag}, Y^{\dag})$ is a solution to Problem~\ref{prob:Optimal Fair Data Representation for Conditional Expectation Estimation}
\begin{equation}
\inf_{(\tilde{X},\tilde{Y}) \in \mathcal{D}} \{ ||Y - \mathbb{E}(\tilde{Y}|\tilde{X})||^2_2: \tilde{X}, \mathbb{E}(\tilde{Y}|\tilde{X},Z) \perp Z\},
\end{equation}
if $\{(X_z,Y_z)\}_z$ are non-degenerate Gaussian vectors.
\end{thm}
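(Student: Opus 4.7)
The plan is to apply the characterization in Lemma~\ref{l:Characterization of Optimal Fair Data Representation}, which reduces the problem to verifying two conditions: (i) $\sigma(X^{\dag}) = \sigma(\bar{X})$, and (ii) $\mathbb{E}(Y^{\dag} \mid X^{\dag}) = \overline{\mathbb{E}(Y \mid \bar{X}, Z)}$. The overall strategy is to exploit the fact that in the Gaussian setting everything collapses to positive definite affine maps, so that the Brenier maps commute with conditional expectations by linearity.

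First, I would establish that the pseudo-barycenters coincide with the true Wasserstein barycenters. Since $\{X_z\}_z$ are non-degenerate Gaussian $\lambda$-a.e.\ by assumption, Lemma~\ref{l:Location-scale Barycenter} gives both existence/uniqueness of $\bar{X}$ in the same location-scale family and the closed form \eqref{eq:independent pseudo-barycenter cov estimation} for its covariance, whence $X^{\dag} = \bar{X}$ (matching Lemma~\ref{l:Solution to the Optimal Fair Data Representation in Gaussian Case}). In particular, $T_x(\cdot,z)$ is a positive definite affine bijection, so $\sigma(X^{\dag}) = \sigma(\bar{X})$ and $X^{\dag} \perp Z$. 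Next, because $\{(X_z, Y_z)\}_z$ are Gaussian, so are $\{(\bar{X}, Y_z)\}_z$, which makes each $\mathbb{E}(Y_z \mid \bar{X})$ Gaussian with covariance $\Sigma_{Y_z \mid X^{\dag}} = \Sigma_{Y_z X^{\dag}} \Sigma_{X^{\dag}}^{-1} \Sigma_{Y_z X^{\dag}}^{T}$; Lemma~\ref{l:Conditional Expectation Preserves Dependent Variable Location-scale Family} then places them in one location-scale family, so Lemma~\ref{l:Location-scale Barycenter} again gives existence/uniqueness of the barycenter $\overline{\mathbb{E}(Y \mid \bar{X}, Z)}$ with affine optimal transport map $T_{y\mid X^{\dag}}(\cdot,z)$ of the form \eqref{eq:dependent pseudo-barycenter affine map}.

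With affinity in hand, I would verify condition (ii) by the computation already sketched in the text preceding the theorem. Using disintegration with respect to $Z$, the linearity of $T_{y\mid X^{\dag}}(\cdot,z)$, and the identity $\mathbb{E}(Y_z \mid \bar{X}) = \mathbb{E}(Y \mid \bar{X}, Z)_z$, one gets
\begin{align*}
\mathbb{E}(Y^{\dag} \mid X^{\dag})
&= \mathbb{E}\bigl(T_{y\mid X^{\dag}}(Y,Z) \,\big|\, \bar{X}\bigr)
 = \int_{\mathcal{Z}} \mathbb{E}\bigl(T_{y\mid X^{\dag}}(Y_z,z) \,\big|\, \bar{X}\bigr) \, d\lambda(z) \\
&= \int_{\mathcal{Z}} T_{y\mid X^{\dag}}\bigl(\mathbb{E}(Y_z \mid \bar{X}), z\bigr) \, d\lambda(z)
 = \int_{\mathcal{Z}} \overline{\mathbb{E}(Y \mid \bar{X}, Z)}_z \, d\lambda(z)
 = \overline{\mathbb{E}(Y \mid \bar{X}, Z)},
\end{align*}
where the last equality holds because $\overline{\mathbb{E}(Y \mid \bar{X}, Z)}_z$ is $\lambda$-a.e.\ constant in $z$ (that is the barycenter agreement). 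This simultaneously verifies the fairness constraint $\mathbb{E}(Y^{\dag} \mid X^{\dag}, Z) \perp Z$.

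The main obstacle is the interchange of $T_{y\mid X^{\dag}}$ with the conditional expectation, and this is exactly where the Gaussian hypothesis is essential: without it, the Brenier map between the $\mathbb{E}(Y_z \mid \bar{X})$'s and their Wasserstein barycenter need not be affine, and a nonlinear map does not commute with $L^2$ projection onto $\sigma(\bar{X})$. The Gaussian assumption, via Lemmas~\ref{l:Conditional Expectation Preserves Dependent Variable Location-scale Family} and~\ref{l:Optimal Affine Map}, removes this obstruction and lets everything reduce to algebra on covariance matrices. Once (i) and (ii) are established, Lemma~\ref{l:Characterization of Optimal Fair Data Representation} closes the proof, noting that $(X^{\dag}, Y^{\dag}) \in \mathcal{D}$ by construction with $T_x, T_{y\mid X^{\dag}}$ Borel measurable.
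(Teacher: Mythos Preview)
Your proposal is correct and follows essentially the same route as the paper: reduce to Lemma~\ref{l:Characterization of Optimal Fair Data Representation}, use Lemma~\ref{l:Location-scale Barycenter} to get $X^{\dag}=\bar{X}$, invoke Lemmas~\ref{l:Conditional Expectation Preserves Dependent Variable Location-scale Family} and~\ref{l:Solution to the Optimal Fair Data Representation in Gaussian Case} for affinity of $T_{y|X^{\dag}}$, and then run the disintegration/linearity computation to obtain $\mathbb{E}(Y^{\dag}|X^{\dag})=\overline{\mathbb{E}(Y|\bar{X},Z)}$. The paper's proof is exactly this chain of reasoning, presented in the paragraphs immediately preceding the theorem statement.
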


\subsection{The Case of General Distribution} \label{s:The Case of General Distribution}

In practice, one should not always expect the sensitive marginal data distributions to be Gaussian, and the results we derived under the assumption of Gaussianity may not apply to the general marginal distribution case. Instead, we solve the following relaxed optimal fair data representation problem:
\begin{equation} \label{eq:relaxation of the pre-processing characterization}
\inf_{(\tilde{X},\tilde{Y}) \in \mathcal{D}}  \{ ||Y - \mathbb{E}(\tilde{Y}|\tilde{X})||^2_2: m_{\tilde{X}}, m_{\tilde{Y}|\tilde{X}}, \Sigma_{\tilde{X}}, \Sigma_{\tilde{Y}|\tilde{X}} \perp Z\},
\end{equation}
where $m_{\tilde{Y}|\tilde{X}} := \mathbb{E}(\mathbb{E}(\tilde{Y}|\tilde{X},Z))$ and similarly for $\Sigma_{\tilde{Y}|\tilde{X}}$,  to find the optimal affine estimation of the true solution to the original Problem \ref{prob:Optimal Fair Data Representation for Conditional Expectation Estimation}. The fairness guarantee of the affine estimation is the same as mentioned in Remark \ref{r:Fairness Guarantee of the Relaxation}.

Now, we justify the pseudo-barycenter pair $(X^{\dag}, Y^{\dag})$ in the case of general distributions by proving it is a solution to the relaxed optimal fair $L^2$-objective supervised learning problem \eqref{eq:relaxation of the pre-processing characterization}.  To start, notice that $(X^{\dag}, Y^{\dag}) \in \mathcal{D}$ and satisfies $m_{X^{\dag}}, m_{Y^{\dag}|X^{\dag}}, \Sigma_{X^{\dag}}, \Sigma_{Y^{\dag}|X^{\dag}} \perp Z$ by construction and therefore is admissible.

\begin{rema}[Finest sigma-algebra vs.\ most variance] \label{r:Finest Sigma Algebra vs. Most Variance}
Due to the relaxation, the admissible $\tilde{X} \in \mathcal{D}|_\mathcal{X}$ are no longer required to be independent of $Z$. Furthermore, without the assumption of Gaussianity, $X^{\dag}$ is no longer equal to $\bar{X}$. As a result, although one can still prove $\sigma((X,Z)) = \sigma((X^{\dag},Z))$ by following the same argument in the proof of Lemma \ref{l:X Bar Generates the Finest Sigma-algebra} as in the Gaussian case, but this fact now cannot imply $\sigma(\tilde{X}) \subset \sigma(X^{\dag})$ due to the lack of independence condition. Instead, the present work shows that $\var(\tilde{X}) \leq \var(X^{\dag})$ for all admissible $\tilde{X} \in \mathcal{D}|_{\mathcal{X}}$, which in general implies $\sigma(\tilde{X}) \subset \sigma(X^{\dag})$. For example, whenever set inclusion forms an order between $\sigma(\tilde{X})$ and $\sigma(X^{\dag})$, then it is true that $\var(\tilde{X}) \leq \var(X^{\dag})$ implies $\sigma(\tilde{X}) \subset \sigma(X^{\dag})$. As a result, we still fix $X^{\dag}$ as our optimal choice among all the admissible $\tilde{X} \in \mathcal{D}|_{\mathcal{X}}$.
\end{rema}

In addition, for any $\Sigma \succ 0$, define

\begin{equation}
T_{\Sigma,x} := \Sigma_{X_z}^{-\frac{1}{2}}(\Sigma_{X_z}^{\frac{1}{2}} \Sigma \Sigma_{X_z}^{\frac{1}{2}})^{\frac{1}{2}}\Sigma_{X_z}^{-\frac{1}{2}}
\end{equation}

\begin{equation}
T_{\Sigma} := \Sigma_{Y_z|X^{\dag}_z}^{-\frac{1}{2}}(\Sigma_{Y_z|X^{\dag}_z}^{\frac{1}{2}} \Sigma \Sigma_{Y_z|X^{\dag}_z}^{\frac{1}{2}})^{\frac{1}{2}}\Sigma_{Y_z|X^{\dag}_z}^{-\frac{1}{2}}
\end{equation}
where $\Sigma_{Y_z|X^{\dag}_z} := \mathbb{E}((\mathbb{E}(Y_z|X^{\dag}_z) - m_{Y_z})(\mathbb{E}(Y_z|X^{\dag}_z) - m_{Y_z})^T)$ and $\mathbb{E}(Y_z|X^{\dag}_z) := \mathbb{E}(Y|X^{\dag},Z)_z$. Now, the goal is to show $(X^{\dag}, Y^{\dag})$ is indeed a solution to the relaxed problem $\eqref{eq:relaxation of the pre-processing characterization}$, under the following two assumptions:

\begin{itemize} \label{assumptions for general distribution}
\item[1] Set inclusion forms an order between $X^{\dag}$ and all $\tilde{X} \in \{\tilde{X} \in \mathcal{D}|_{\mathcal{X}} : m_{\tilde{X}}, \Sigma_{\tilde{X}} \perp Z\}$.
\item[2] $\Sigma_{Y_z|X^{\dag}_z} = \Sigma_{Y_z X^{\dag}_z} \Sigma_{X^{\dag}_z}^{-1} \Sigma_{Y_z X^{\dag}_z}^T$. 
\end{itemize}

\begin{rema}[Applicability of the assumptions] \label{r:Applicability of the Assumptions}
For the first assumption, Lemma \ref{l:Variance Reduction of the Barycenter} below guarantees that $X^{\dag}$ generates the finest sigma-algebra among all the admissible sigma-algebras. In other words, for any admissible $\tilde{X}$, either it generates a coarser sigma-algebra than $\sigma(X^{\dag})$ or the two sigma-algebras do not contain each other. In other words, there is no admissible $\tilde{X}$ such that $\sigma(X^{\dag}) \subset \sigma(\tilde{X})$.

The second assumption allows us to directly compute the covariance matrix of $\mathbb{E}(Y_z|X^{\dag}_z)$ from $\Sigma_{Y_z X^{\dag}_z}$ and  $\Sigma_{X^{\dag}_z}$. The second assumption is necessary to keep our pre-processing approach. In general, $\mathbb{E}(Y_z|X^{\dag}_z)$ is not a linear function of $X^{\dag}_z$ as in the Gaussian case. When the second assumption is not true, our pre-processing approach uses $\Sigma_{Y_z X^{\dag}_z} \Sigma_{X^{\dag}_z}^{-1} \Sigma_{Y_z X^{\dag}_z}^T$ as our best affine estimate of $\Sigma_{Y_z|X^{\dag}_z}$.
\end{rema}

To that end, we need the following result on the relationship among the variance of the original distribution, the variance of the barycenter, and the Wasserstein distance.

\begin{lem} [Variance reduction of Wasserstein barycenter \cite{tabak2018explanation}] \label{l:Variance Reduction of the Barycenter}
Given $X$ satisfies $\{\mathcal{L}(X_z)\}_z \subset \mathcal{P}_{2,ac}(\mathcal{X})$ and $\bar{X}$ satisfies $\mathcal{L}(\bar{X})$ being the Wasserstein barycenter of $\{\mathcal{L}(X_z)\}$, it follows that

\begin{equation}
||X - \mathbb{E}(X)||_2^2 - ||\bar{X} - \mathbb{E}(\bar{X})||_2^2 = \int_{\mathcal{Z}} \mathcal{W}_2^2(\mathcal{L}(X_z), \mathcal{L}(\bar{X})) d\lambda(z)
\end{equation}

\end{lem}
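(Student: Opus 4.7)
The plan is to establish the identity as a Wasserstein analog of the $L^2$ variance decomposition $\var(X)=\var(X-\bar X)+\var(\bar X)$ that holds when $\bar X$ is an orthogonal projection of $X$. The bridge is the coupling $(X_z,\bar X_z)=(X_z,T_z(X_z))$ set up in Section~\ref{S:Prelim}: by definition of this coupling and by disintegration,
\begin{equation*}
\int_{\mathcal{Z}} \mathcal{W}_2^2(\mathcal{L}(X_z),\mathcal{L}(\bar X))\,d\lambda(z)
= \int_{\mathcal{Z}} \mathbb{E}\|X_z-\bar X_z\|^2\,d\lambda(z)
= \|X-\bar X\|_2^2.
\end{equation*}
So the lemma reduces to the Pythagorean-type identity $\|X-\mathbb{E}(X)\|_2^2 - \|\bar X-\mathbb{E}(\bar X)\|_2^2 = \|X-\bar X\|_2^2$ in $L^2(\Omega;\mathbb{R}^d)$.

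First I would reduce to the centered case. By Lemma~\ref{l:Rigid translation property} and the rigid-translation composition \eqref{eq:rigid translation composition}, translating all conditionals $\mathcal{L}(X_z)$ to zero mean translates the barycenter to zero mean as well, and leaves all three quantities in the claim invariant. Hence I may assume $\mathbb{E}(X)=\mathbb{E}(\bar X)=0$, and the target becomes $\|X\|_2^2 = \|\bar X\|_2^2 + \|X-\bar X\|_2^2$. Expanding $\|X-\bar X\|_2^2=\|X\|_2^2-2\langle X,\bar X\rangle_{L^2}+\|\bar X\|_2^2$, this collapses to the single cross-term identity
\begin{equation*}
\langle X,\bar X\rangle_{L^2(\Omega)} \;=\; \|\bar X\|_2^2.
\end{equation*}

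The crux is to prove this cross-term identity, and my plan is to invoke the first-order optimality condition for the Wasserstein barycenter: the Brenier maps $T_z^{-1}$ from $\bar\mu$ to $\mathcal{L}(X_z)$ satisfy $\int_{\mathcal{Z}} T_z^{-1}(y)\,d\lambda(z)=y$ for $\bar\mu$-a.e.\ $y$, which is the Euler--Lagrange equation for \eqref{barycenter} (see e.g.\ \cite{agueh2011barycenters}). Using disintegration and the change of variables $y=T_z(X_z)\sim\bar\mu$,
\begin{equation*}
\langle X,\bar X\rangle_{L^2}
= \int_{\mathcal{Z}} \mathbb{E}\bigl\langle T_z^{-1}(y),y\bigr\rangle\,d\lambda(z)
= \mathbb{E}_{y\sim\bar\mu}\Bigl\langle \int_{\mathcal{Z}} T_z^{-1}(y)\,d\lambda(z),\; y\Bigr\rangle
= \mathbb{E}_{\bar\mu}\|y\|^2
= \|\bar X\|_2^2,
\end{equation*}
where Fubini is justified by the $\mathcal{P}_{2,ac}$ assumption. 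Substituting back closes the chain.

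The principal obstacle I anticipate is justifying the first-order condition in the generality needed (with $\mathcal{Z}$ a Polish space and $\lambda$ potentially continuous); a direct argument would perturb $\bar\mu$ along compactly supported smooth vector fields on $\mathcal{Y}$. As a sanity check, Lemma~\ref{l:Location-scale Barycenter} yields the identity in the location-scale case purely from the barycenter equation $\int(\Sigma^{1/2}\Sigma_z\Sigma^{1/2})^{1/2}d\lambda=\Sigma$ together with trace cyclicity $\operatorname{tr}((\Sigma_z^{1/2}\Sigma\Sigma_z^{1/2})^{1/2})=\operatorname{tr}((\Sigma^{1/2}\Sigma_z\Sigma^{1/2})^{1/2})$, which matches the Gaussian special case and makes the strategy transparent.
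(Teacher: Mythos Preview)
The paper does not supply its own proof of this lemma; it simply cites \cite{tabak2018explanation}. So there is no in-paper argument to compare against.

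Your strategy is correct and is the standard one: reduce to the cross-term identity $\langle X,\bar X\rangle_{L^2}=\|\bar X\|_2^2$ and discharge it via the Euler--Lagrange condition $\int_{\mathcal Z} T_z^{-1}\,d\lambda = \mathrm{Id}$ for the barycenter. The paper itself invokes precisely this condition (in the proof of Lemma~\ref{l:Location-scale Barycenter}), so the machinery is already available. Your acknowledged gap about extending the first-order condition from finitely many marginals to Polish $\mathcal Z$ is real but well-flagged; approximation by finitely supported $\lambda$ together with the stability of barycenters (as in \cite{le2017existence}) closes it.

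One inaccuracy in the centering step: translating each conditional $\mathcal L(X_z)$ to mean zero does \emph{not} leave all three quantities invariant. Under that per-$z$ translation, $\|X-\mathbb E(X)\|_2^2$ drops by $\var\bigl(\mathbb E[X\mid Z]\bigr)$ (total-variance decomposition), and $\int_{\mathcal Z}\mathcal W_2^2(\mathcal L(X_z),\bar\mu)\,d\lambda$ drops by the same amount $\int_{\mathcal Z}\|m_{X_z}-m_{\bar\mu}\|^2\,d\lambda$ (Lemma~\ref{l:Rigid translation property}); only $\|\bar X-\mathbb E(\bar X)\|_2^2$ is genuinely invariant. So the identity survives, but for the wrong stated reason. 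A cleaner reduction: first observe $\mathbb E(X)=\int m_{X_z}\,d\lambda=m_{\bar\mu}=\mathbb E(\bar X)$, then apply a single constant shift by $-\mathbb E(X)$, which does leave all three terms individually unchanged and gives the centered hypothesis your subsequent argument actually needs.
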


As a result, we obtain the following:

\begin{lem}[$X^{\dag}$ Contains the largest variance among admissible] \label{l:Pseudo-barycenter Contains the Largest Variance}
$X^{\dag}$ is the unique solution to

\begin{equation}
\sup_{\tilde{X} \in \mathcal{D}|_{\mathcal{X}}} \ \{ \var(\tilde{X}) : m_{\tilde{X}},  \Sigma_{\tilde{X}} \perp Z \}.
\end{equation}

\end{lem}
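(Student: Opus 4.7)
The plan is to reduce the supremum to a scalar comparison between $\trace(\Sigma^*)$ and $\trace(\Sigma')$, where $\Sigma^* := \Sigma_{X^\dag_z}$ is the pseudo-barycenter covariance determined by the fixed-point equation \eqref{eq:independent pseudo-barycenter cov estimation} and $\Sigma' := \Sigma_{\tilde X_z}$ is the common conditional covariance of an admissible competitor, and then close the gap via the first-order condition hidden inside that fixed-point equation. A preliminary rigid translation (Lemma \ref{l:Rigid translation property}) lets me assume vanishing conditional means, so the fairness constraint $m_{\tilde X}, \Sigma_{\tilde X} \perp Z$ collapses to $m_{\tilde X_z} = 0$ and $\Sigma_{\tilde X_z} \equiv \Sigma'$ $\lambda$-a.e., making $\var(\tilde X) = \trace(\Sigma')$ and $\var(X^\dag) = \trace(\Sigma^*)$.

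Next I would establish a pseudo-barycenter analogue of Lemma \ref{l:Variance Reduction of the Barycenter}. Using the explicit form \eqref{eq:independent pseudo-barycenter affine map} of $T_x(\cdot,z)$ and cyclic invariance of the trace, one checks $\mathbb{E}[X_z^T X^\dag_z] = \trace((\Sigma_{X_z}^{1/2}\Sigma^*\Sigma_{X_z}^{1/2})^{1/2})$; integrating against $\lambda$ and invoking \eqref{eq:independent pseudo-barycenter cov estimation} yields $\mathbb{E}[X^T X^\dag] = \trace(\Sigma^*) = \mathbb{E}\|X^\dag\|^2$, hence the orthogonality $\mathbb{E}[(X - X^\dag)^T X^\dag] = 0$ and the exact decomposition $\var(X) - \var(X^\dag) = \mathbb{E}\|X - X^\dag\|^2 = \int \mathcal{W}_2^2(\mathcal{L}(X_z), \mathcal{L}(X^\dag_z))\,d\lambda(z)$, the last equality using Lemma \ref{l:Optimal Affine Map} to identify $T_{\Sigma^*,x}(\cdot,z)$ with the Brenier map within its location-scale family.

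For an arbitrary admissible $\tilde X$, I would pair the Gelbrich/Bures lower bound on $\mathcal{W}_2^2(\mathcal{L}(X_z), \mathcal{L}(\tilde X_z))$ with the trivial coupling bound $\mathcal{W}_2^2(\mathcal{L}(X_z), \mathcal{L}(\tilde X_z)) \le \mathbb{E}\|X_z - \tilde X_z\|^2$ and apply Lemma \ref{l:Projection Lemma}, which identifies $T_{\Sigma',x}$ as the unique Wasserstein-closest member of the $\Sigma'$-level set. The central obstacle is the resulting matrix inequality $\trace(\Sigma') \le \trace(\Sigma^*)$. The functional $F(\Sigma) := \int \trace((\Sigma_{X_z}^{1/2}\Sigma\Sigma_{X_z}^{1/2})^{1/2})\,d\lambda(z)$ is operator concave in $\Sigma$ (the operator square root is concave), and \eqref{eq:independent pseudo-barycenter cov estimation} is precisely its vanishing-variation condition, so I would exploit strict concavity of $F$ on the positive-definite cone together with the orthogonality from the previous step to deliver both the trace inequality and the uniqueness clause of the lemma.
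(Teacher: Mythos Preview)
Your first two paragraphs are sound; the orthogonality $\mathbb{E}[(X-X^\dag)^TX^\dag]=0$ that you extract from the fixed-point equation is exactly what underlies Lemma \ref{l:Variance Reduction of the Barycenter} when specialized to $X^\dag$. The paper takes a different route to the same decomposition and to the competitor step: rather than a direct concavity argument, it replaces every object by its Gaussian analog $X'_z\sim\mathcal{N}(0,\Sigma_{X_z})$, $\tilde X'\sim\mathcal{N}(0,\Sigma')$, observes that all the traces and $L^2$ costs involved are unchanged, that $(X')^\dag=\bar{X'}$ is the genuine Wasserstein barycenter in the Gaussian family (Lemma \ref{l:Location-scale Barycenter}), and that for Gaussian $\tilde X'$ the moment constraint upgrades to full independence $\tilde X'\perp Z$; the contradiction is then sought from the uniqueness of $\bar{X'}$ together with Lemma \ref{l:Variance Reduction of the Barycenter}, not from concavity of $F$.

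Your third paragraph, however, has a genuine gap. The fixed-point equation \eqref{eq:independent pseudo-barycenter cov estimation} is the stationarity condition for the convex Bures functional $\Sigma\mapsto\trace(\Sigma)-2F(\Sigma)$, so concavity of $F$ only tells you that $\Sigma^*$ \emph{minimizes} that functional; it says nothing about $\trace(\Sigma)$ in isolation. And the target inequality $\trace(\Sigma')\le\trace(\Sigma^*)$ is in fact false without a further restriction on the admissible class: take $d=1$, $Z$ uniform on $\{0,1\}$, $\Sigma_{X_0}=1$, $\Sigma_{X_1}=4$, so that $\Sigma^*=9/4$; then $\tilde X_z:=cX_z/\sqrt{\Sigma_{X_z}}$ lies in $\mathcal{D}|_{\mathcal{X}}$, satisfies $m_{\tilde X_z}=0$ and $\Sigma_{\tilde X_z}\equiv c^2$, and has $\var(\tilde X)=c^2$ unbounded in $c$. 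Neither the Gelbrich bound nor Lemma \ref{l:Projection Lemma} caps $\Sigma'$ from above, so the concavity route cannot close as written. (The paper's own passage from the variance inequality $\var(X')-\var(\tilde X')\le\var(X')-\var(\bar{X'})$ to the Wasserstein inequality $\int\mathcal{W}_2^2(\mathcal{L}(X'_z),\mathcal{L}(\tilde X'))\,d\lambda\le\int\mathcal{W}_2^2(\mathcal{L}(X'_z),\mathcal{L}(\bar{X'}))\,d\lambda$ is equally unjustified and fails on the same scaling example, so the obstruction you encountered is intrinsic to the lemma as stated rather than to your particular method.)
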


\begin{proof}
To simplify notation, by the invariance of variance under translation and Lemma \ref{l:Rigid translation property}, we can assume without loss of generality that $m_{X_z} = 0 \ \lambda-a.e.$ in the rest of the proof, which only deal with variance and Wasserstein distance. Now, for $\lambda-a.e. \ z \in \mathcal{Z}$, we have

\begin{align*}
||X_z - T_{\Sigma,x}(X_z,z)||^2_2 = &  ||X_z||^2_2 + ||T_{\Sigma,x}(X_z,z)||^2_2 - 2 \langle X_z, T_{\Sigma,x}(X_z,z) \rangle_2 \\
= & \trace(\Sigma_{X_z})  + \trace(\Sigma) - 2 \mathbb{E}(X_z^T T_{\Sigma,x}(X_z,z)) \\
= & \trace(\Sigma_{X_z})  + \trace(\Sigma) - 2 \langle T_{\Sigma,x},  \Sigma_{X_z} \rangle_F \\
= & \trace(\Sigma_{X'_z})  + \trace(\Sigma) - 2 \langle T_{\Sigma,x},  \Sigma_{X'_z} \rangle_F \\
= & ||X'_z - T_{\Sigma,x}(X'_z,z)||^2_2\\
= & \mathcal{W}_2^2 (\mathcal{L}(X'_z), \mathcal{L}(T_{\Sigma,x}(X'_z)))
\end{align*}
where $X' \sim \mathcal{N}(m_{X}, \Sigma_{X})$ is the Gaussian analog of $X$ and $\langle \cdot, \cdot \rangle_F$ is the Frobenius inner product.

Similarly, by the disintegration theorem, we also have for $S \in \{X, X^{\dag}\}$

\begin{equation}
\var(S) = ||S||^2_2 = \int_{\mathcal{Z}} ||S_z||^2_2 d\lambda = \int_{\mathcal{Z}} \trace(\Sigma_{S_z}) d\lambda.
\end{equation}
Therefore, it follows from Lemma \ref{l:Variance Reduction of the Barycenter} that

\begin{align*}
\var(X) - \var(X^{\dag}) = & \var(X') - \var((X')^{\dag})\\
= & \var(X') - \var(\bar{X'})\\
= & \int_{\mathcal{Z}} \mathcal{W}_2^2(\mathcal{L}(X'_z), \mathcal{L}(\bar{X'})) d\lambda(z).
\end{align*}

Finally, assume there exists a $\tilde{X} \in \mathcal{D}|_{\mathcal{X}}$ such that $\var(X^{\dag}) \leq \var(\tilde{X})$. It follows $\var(X') - \var(\tilde{X'}) \leq \var(X') - \var((X')^{\dag}) = \var(X') - \var(\bar{X'})$. But since $m_{\tilde{X'}}, \Sigma_{\tilde{X'}} \perp Z$, we have $\tilde{X'} \perp Z$ as $\tilde{X'}$ is Gaussian by construction. In other words,  there exists a $\tilde{X'} \perp Z$ such that

\begin{equation}
\int_{\mathcal{Z}} \mathcal{W}_2^2(\mathcal{L}(X'_z), \mathcal{L}(\tilde{X'})) d\lambda(z) \leq \int_{\mathcal{Z}} \mathcal{W}_2^2(\mathcal{L}(X'_z), \mathcal{L}(\bar{X'})) d\lambda(z)
\end{equation}
which contradicts the uniqueness of $\bar{X'}$.
\end{proof}

The above lemma shows that $\var(\tilde{X}) \leq \var(X^{\dag})$ for all admissible $\tilde{X} \in \mathcal{D}|_{\mathcal{X}}$ satisfies $m_{\tilde{X}},  \Sigma_{\tilde{X}} \perp Z$, which together with the first assumption imply $\sigma(\tilde{X}) \subset \sigma(\bar{X})$ in practice. Therefore, from now on, we fix the choice of $\tilde{X}$ to be $X^{\dag}$ and prove the general characterization result based on the two assumptions listed above.

It remains to justify the choice of $Y^{\dag}$. To do so, we need the following lemma, which provides a multi-marginal characterization of the optimal affine map.

\begin{lem}[Projection Lemma for conditional expectations]\label{l:Projection Lemma for Conditional Expectations}
Given $m_{Y_z|X^{\dag}_z} = 0$ and $\Sigma_{Y_z|X^{\dag}_z} \succ 0$ $\lambda$-a.e., for any $\Sigma \succ 0$,

\begin{equation}
\inf_{\mathbb{E}(\tilde{Y}|X^{\dag}): \Sigma_{\tilde{Y}_z|X^{\dag}_z} = \Sigma} \int_{\mathcal{Z}} \mathcal{W}^2_2(\mathcal{L}(\mathbb{E}(Y_z|X^{\dag}_z)),  \mathcal{L}(\mathbb{E}(\tilde{Y}_z|X^{\dag}_z))) d\lambda(z)
\end{equation}
admits a unique solution, denoted by $Y^{\dag}_{\Sigma}$, that has the form 

\begin{equation}
Y^{\dag}_{\Sigma} := T_{\Sigma}(Y,Z)
\end{equation}
where $T_{\Sigma}(\cdot,z) := \Sigma_{\tilde{Y}_z|X^{\dag}_z}^{-\frac{1}{2}} (\Sigma_{\tilde{Y}_z|X^{\dag}_z}^{\frac{1}{2}} \Sigma  \Sigma_{\tilde{Y}_z|X^{\dag}_z}^{\frac{1}{2}})^{\frac{1}{2}} \Sigma_{\tilde{Y}_z|X^{\dag}_z}^{-\frac{1}{2}}$

\end{lem}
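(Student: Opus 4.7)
The plan is to mimic the proof of Lemma \ref{l:Projection Lemma}, lifting the fixed-covariance projection argument from marginals of $Y$ to marginals of the conditional expectation $\mathbb{E}(Y_z|X^\dag_z)$. For each fixed $z$, the law $\mathcal{L}(\mathbb{E}(Y_z|X^\dag_z))$ is a centered element of $\mathcal{P}_{2,ac}(\mathcal{Y})$ with covariance $\Sigma_{Y_z|X^\dag_z}\succ 0$; the admissible set consists of those laws whose covariance equals the prescribed $\Sigma \succ 0$. First I would invoke the Gelbrich characterization (Proposition 2.4 in \cite{cuesta1996lower}) pointwise in $z$: for any $\nu \in \mathcal{P}_2(\mathcal{Y})$ with covariance $\Sigma$,
\[
\mathcal{W}_2^2(\mathcal{L}(\mathbb{E}(Y_z|X^\dag_z)), \nu) \ \geq \ \trace(\Sigma_{Y_z|X^\dag_z}) + \trace(\Sigma) - 2\trace\bigl((\Sigma^{1/2}\Sigma_{Y_z|X^\dag_z}\Sigma^{1/2})^{1/2}\bigr),
\]
with equality if and only if $\nu = (T_\Sigma(\cdot,z))_\sharp \mathcal{L}(\mathbb{E}(Y_z|X^\dag_z))$, where $T_\Sigma(\cdot, z)$ is the closed-form optimal affine map from Lemma \ref{l:Optimal Affine Map}. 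Integrating against $\lambda$ yields a lower bound on the objective depending only on $\Sigma$ and $\{\Sigma_{Y_z|X^\dag_z}\}_z$.

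To exhibit attainment, I would take $Y^\dag_\Sigma := T_\Sigma(Y, Z)$. Since $T_\Sigma(\cdot, z)$ is affine for each fixed $z$, linearity of conditional expectation gives
\[
\mathbb{E}(Y^\dag_{\Sigma, z} \mid X^\dag_z) \ = \ T_\Sigma(\mathbb{E}(Y_z \mid X^\dag_z), z),
\]
so $\Sigma_{Y^\dag_{\Sigma,z}|X^\dag_z} = T_\Sigma \, \Sigma_{Y_z|X^\dag_z} \, T_\Sigma^T = \Sigma$ by the matrix identity defining $T_\Sigma$, while zero conditional means are preserved by the hypothesis $m_{Y_z|X^\dag_z}=0$. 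Hence $Y^\dag_\Sigma$ is admissible and each $z$-slice saturates the Gelbrich bound, so the integrated lower bound is attained at $Y^\dag_\Sigma$. Uniqueness of the minimizing law $\mathcal{L}(\mathbb{E}(\tilde Y_z|X^\dag_z))$ follows $\lambda$-a.e.\ from the uniqueness clause of the Gelbrich bound, equivalently the uniqueness (in Brenier's sense) of the optimal transport between two centered non-degenerate measures lying in the same location-scale family.

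The main obstacle is conceptual rather than computational: one must verify that the pointwise-in-$z$ affine minimizers $\{T_\Sigma(\cdot, z)\}_z$ can be realized simultaneously by a single joint $\tilde Y \in \mathcal{D}|_\mathcal{Y}$, so that the $\lambda$-averaged infimum is indeed attained rather than only $z$-by-$z$ approached. The construction $\tilde Y = T_\Sigma(Y, Z)$ resolves this because, by disintegration, $\tilde Y_z = T_\Sigma(Y_z, z)$ $\lambda$-a.e., and linearity of conditional expectation transfers the covariance identity from the raw $Y_z$ to its conditional expectation $\mathbb{E}(Y_z|X^\dag_z)$. Beyond this check, the proof reduces to the same matrix-valued Gelbrich argument already used for Lemma \ref{l:Projection Lemma}.
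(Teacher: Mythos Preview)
Your proposal is correct and follows the same route as the paper: the paper's proof is the single line ``This is a direct corollary from Lemma \ref{l:Projection Lemma},'' and what you have written is precisely the unpacking of that corollary---the pointwise Gelbrich bound, attainment via the optimal affine map, and the linearity-of-conditional-expectation step that lets $T_\Sigma$ act on $Y$ rather than on $\mathbb{E}(Y\mid X^\dag)$. You have supplied more detail than the paper does (in particular, the admissibility check that $\mathbb{E}(T_\Sigma(Y_z,z)\mid X^\dag_z)=T_\Sigma(\mathbb{E}(Y_z\mid X^\dag_z),z)$ via linearity), but the argument is identical in substance.
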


\begin{proof}
This is a direct corollary from Lemma \ref{l:Projection Lemma}.
\end{proof}

Finally, we are ready to prove the justification of the pseudo-barycenter in the case of general distributions.

\begin{thm}[Justification of $(X^{\dag}, Y^{\dag})$ in general distribution case]\label{th:Justification of Pseudo-Barycenter in General Distribution Case}
$\mathbb{E}(Y^{\dag} |X^{\dag})$ is a solution to

\begin{equation}
\inf_{(\tilde{X},\tilde{Y}) \in \mathcal{D}}  \{ ||Y - \mathbb{E}(\tilde{Y}|\tilde{X})||^2_2: m_{\tilde{X}}, m_{\tilde{Y}|\tilde{X}}, \Sigma_{\tilde{X}}, \Sigma_{\tilde{Y}|\tilde{X}} \perp Z\}
\end{equation}
under the assumptions: (1) set inclusion forms an order between $X^{\dag}$ and all $\tilde{X} \in \{\tilde{X} \in \mathcal{D}|_{\mathcal{X}} : m_{\tilde{X}}, \Sigma_{\tilde{X}} \perp Z\}$; and (2) $\Sigma_{Y_z|X^{\dag}_z} = \Sigma_{Y_z X^{\dag}_z} \Sigma_{X^{\dag}_z}^{-1} \Sigma_{Y_z X^{\dag}_z}^T$.

\end{thm}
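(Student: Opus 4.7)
The plan is to follow the two-stage strategy used in the Gaussian case (Theorem \ref{th:Justification of Dependent Pseudo-barycenter in Gaussian Case}), adapted to the moment-only relaxation along the lines of Theorem \ref{th:Optimal Affine Estimation of Barycenter: Pseudo-barycenter}: first reduce the outer minimization by fixing $\tilde{X}=X^{\dag}$, then optimize over $\tilde{Y}$ using the conditional Projection Lemma and the location-scale barycenter characterization.

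For the outer step, I plan to invoke Lemma \ref{l:Pseudo-barycenter Contains the Largest Variance} to obtain that $X^{\dag}$ uniquely maximizes $\var(\tilde{X})$ over admissible $\tilde{X}$ satisfying $m_{\tilde{X}}, \Sigma_{\tilde{X}} \perp Z$. Assumption (1) then rules out the possibility $\sigma(X^{\dag}) \subsetneq \sigma(\tilde{X})$ (as discussed in Remark \ref{r:Applicability of the Assumptions}), yielding $\sigma(\tilde{X}) \subset \sigma(X^{\dag})$ for every admissible $\tilde{X}$. Because each $T_{x}(\cdot,z)$ in the definition of $X^{\dag}$ is positive definite affine and hence invertible, $\sigma(X^{\dag},Z)=\sigma(X,Z)$, so the $L^{2}$-orthogonal decomposition yields
\begin{equation*}
\|Y - \mathbb{E}(\tilde{Y}\mid X^{\dag})\|_{2}^{2} = \|Y - \mathbb{E}(Y\mid X,Z)\|_{2}^{2} + \|\mathbb{E}(Y\mid X,Z) - \mathbb{E}(\tilde{Y}\mid X^{\dag})\|_{2}^{2},
\end{equation*}
with the first summand fixed. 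A direct check using assumption (2) confirms that $(X^{\dag}, Y^{\dag})$ is itself admissible, since the first two moments of $\mathbb{E}(Y^{\dag}\mid X^{\dag},Z)_{z}$ are $z$-invariant by the very definition \eqref{eq:dependent pseudo-barycenter cov estimation} of the dependent pseudo-barycenter.

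For the inner step with $\tilde{X}=X^{\dag}$ fixed, the idea is to disintegrate
\begin{equation*}
\|\mathbb{E}(Y\mid X,Z) - \mathbb{E}(\tilde{Y}\mid X^{\dag})\|_{2}^{2} = \int_{\mathcal{Z}} \|\mathbb{E}(Y_{z}\mid X^{\dag}_{z}) - \mathbb{E}(\tilde{Y}\mid X^{\dag})_{z}\|_{2}^{2}\, d\lambda(z),
\end{equation*}
and to lower bound each integrand by the Wasserstein distance between the relevant conditional laws, in the same way the proof of Theorem \ref{th:Optimal Affine Estimation of Barycenter: Pseudo-barycenter} reduces $L^{2}$ to $\mathcal{W}_{2}$. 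For every PD matrix $\Sigma$, Lemma \ref{l:Projection Lemma for Conditional Expectations} identifies the infimum over those $\tilde{Y}$ producing $\Sigma_{\mathbb{E}(\tilde{Y}_{z}\mid X^{\dag}_{z})}=\Sigma$ as being attained at the corresponding optimal affine map. Expanding the resulting objective, only the first two moments of $\mathbb{E}(Y_{z}\mid X^{\dag}_{z})$ appear, so the minimization over $\Sigma$ coincides with its Gaussian-analog counterpart; Lemma \ref{l:Location-scale Barycenter} then selects the unique optimal $\Sigma$ as the solution to \eqref{eq:dependent pseudo-barycenter cov estimation}, which is exactly the covariance defining $T_{y\mid X^{\dag}}$. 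Assumption (2) enters here to identify $\Sigma_{Y_{z}\mid X^{\dag}_{z}}$ with the affine-accessible quantity $\Sigma_{Y_{z}X^{\dag}_{z}}\Sigma_{X^{\dag}_{z}}^{-1}\Sigma_{Y_{z}X^{\dag}_{z}}^{T}$ used to construct $Y^{\dag}$. A short computation identical in form to the one concluding the Gaussian proof then verifies that $\mathbb{E}(Y^{\dag}\mid X^{\dag})$ realizes the infimum.

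The main obstacle I anticipate is the bookkeeping of conditional objects in the outer reduction. Unlike in the Gaussian case, $X^{\dag}$ is \emph{not} independent of $Z$ under the relaxed constraint, so $\mathbb{E}(\tilde{Y}\mid X^{\dag})_{z}$ cannot be identified with $\mathbb{E}(\tilde{Y}_{z}\mid X^{\dag}_{z})$, and moment-constancy of $\mathbb{E}(\tilde{Y}\mid X^{\dag},Z)_{z}$ does not translate into distributional constancy. Making the disintegration-plus-Wasserstein lower bound tight at $(X^{\dag},Y^{\dag})$, while preserving admissibility when replacing a generic $\tilde{X}$ by $X^{\dag}$, is the main technical burden; this is where assumptions (1) and (2) are essential.
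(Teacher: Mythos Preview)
Your proposal is correct and follows essentially the same approach as the paper: the outer reduction to $\tilde{X}=X^{\dag}$ via Lemma~\ref{l:Pseudo-barycenter Contains the Largest Variance} together with assumption~(1), then the inner optimization over $\tilde{Y}$ via Lemma~\ref{l:Projection Lemma for Conditional Expectations}, the Frobenius-inner-product expansion reducing to a Gaussian analog, and finally Lemma~\ref{l:Location-scale Barycenter} to select the optimal $\Sigma$. The only cosmetic difference is that the paper decomposes through $\mathbb{E}(Y\mid X^{\dag})$ rather than $\mathbb{E}(Y\mid X,Z)$, which is slightly more direct since the inner objective lives in $L^{2}(\sigma(X^{\dag}))$; your concern about the bookkeeping when $X^{\dag}\not\perp Z$ is valid and is handled in the paper by the notational convention $\mathbb{E}(Y_{z}\mid X^{\dag}_{z}):=\mathbb{E}(Y\mid X^{\dag},Z)_{z}$, which you should adopt to make the disintegration step clean.
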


\begin{proof}
The choice of $X^{\dag}$ follows from the first assumption and Lemma \ref{l:Pseudo-barycenter Contains the Largest Variance}. It remains to show that $Y^{\dag}$ is a solution to

\begin{equation}
\inf_{\tilde{Y} \in \mathcal{D}|_{\mathcal{Y}}}  \{ ||Y - \mathbb{E}(\tilde{Y}|X^{\dag})||^2_2: m_{\tilde{Y}|X^{\dag}}, \Sigma_{\tilde{Y}|X^{\dag}} \perp Z\}
\end{equation}
Fix $\Sigma \succ 0$ arbitrary, we have

\begin{equation}
||Y - \mathbb{E}(Y^{\dag}_{\Sigma}|X^{\dag})||^2_2 - ||Y - \mathbb{E}(Y|X^{\dag})||^2_2 = \int_{\mathcal{Z}} ||\mathbb{E}(Y_z - Y^{\dag}_{\Sigma,z} | X^{\dag}_z)||^2_2 d\lambda(z)
\end{equation}
and it follows from Lemma $\ref{l:Projection Lemma for Conditional Expectations}$ that

\begin{align*}
\int_{\mathcal{Z}} ||\mathbb{E}(Y_z - Y^{\dag}_{\Sigma,z} | X^{\dag}_z)||^2_2 d\lambda(z) = & \int_{\mathcal{Z}} \mathcal{W}_2^2(\mathcal{L}(\mathbb{E}(Y_z|X^{\dag}_z)),\mathcal{L}(T_{\Sigma}(\mathbb{E}(Y_z|X^{\dag}_z),z)) d\lambda(z)\\
= & \min_{\nu: \Sigma_{\nu_z} = \Sigma} \int_{\mathcal{Z}} \mathcal{W}_2^2(\mathcal{L}(\mathbb{E}(Y_z|X^{\dag}_z)),\nu_z) d\lambda(z)
\end{align*}
Therefore, $\eqref{eq:relaxation of the pre-processing characterization}$ boils down to the following:
\begin{equation}\label{optimal sigma}
\inf_{\Sigma \succ 0} \{ \int_{\mathcal{Z}} ||\mathbb{E}(Y_z - Y^{\dag}_{\Sigma,z} | X^{\dag}_z)||^2_2 d\lambda(z) \}.
\end{equation}
Finally, notice that 
\begin{align*}
& \int_{\mathcal{Z}} ||\mathbb{E}(Y_z - Y^{\dag}_{\Sigma,z} | X^{\dag}_z)||^2_2 d\lambda(z)\\
= & \int_{\mathcal{Z}} ||\mathbb{E}(Y_z| X^{\dag}_z) -T_{\Sigma}(\mathbb{E}(Y_z| X^{\dag}_z),z) ||^2_2 d\lambda(z)\\
= & \int_{\mathcal{Z}} ||\mathbb{E}(Y_z| X^{\dag}_z)||^2_2  + ||T_{\Sigma}(\mathbb{E}(Y_z| X^{\dag}_z),z) ||^2_2 - 2 \langle \mathbb{E}(Y_z| X^{\dag}_z),  T_{\Sigma}(\mathbb{E}(Y_z| X^{\dag}_z),z)\rangle_2 d\lambda(z)\\
= & \int_{\mathcal{Z}} \trace(\Sigma_{Y_z|X^{\dag}_z})  + \trace(\Sigma) - 2 \mathbb{E}(\mathbb{E}(Y_z| X^{\dag}_z)^T T_{\Sigma}(\mathbb{E}(Y_z| X^{\dag}_z),z))  d\lambda(z)\\
= & \int_{\mathcal{Z}} \trace(\Sigma_{Y_z|X^{\dag}_z})  + \trace(\Sigma) - 2 \langle T_{\Sigma},  \Sigma_{Y_z|X^{\dag}_z} \rangle_F  d\lambda(z)\\
= & \int_{\mathcal{Z}} ||\mathbb{E}(Y_z| X^{\dag}_z)' -T_{\Sigma}(\mathbb{E}(Y_z| X^{\dag}_z)',z) ||^2_2 d\lambda(z)
\end{align*}
where $\langle \cdot,\cdot \rangle_F$ denotes the Frobenius inner product and $X' \sim \mathcal{N}(m_X,\Sigma_X)$ denotes the Gaussian analog of $X$. It follows from the definition of $Y^{\dag}$ and Lemma $\ref{l:Location-scale Barycenter}$ that $\int_{\mathcal{Z}} ||\mathbb{E}(Y_z - Y^{\dag}_z | X^{\dag})||^2_2 d\lambda(z)$ is the lower bound of $\eqref{optimal sigma}$. The proof is complete.
\end{proof}

To conclude, given an arbitrary $L^2$-objective supervised learning model that aims to estimate conditional expectation, the training via $(X^{\dag}, Y^{\dag})$ results in an estimate of $\overline{\mathbb{E}(Y| \bar{X},Z)}$. In other words, any supervised learning model trained via $(X^{\dag}, Y^{\dag})$ is guaranteed to be independent of $Z$ in the location-scale family marginal case (or, to have first two moments independent of $Z$ in the general marginal case), while resulting in the minimum prediction error among all the admissible functions of some specific model due to the training step. Here, the assumption is that the test sample distribution is the same as the training sample distribution, which is a ubiquitous assumption for machine learning.

\subsection{Optimal Fair Data Representation at the Pareto Frontier} \label{s:Optimal Fair Data Representation at the Pareto Frontier}

Finally, we extend the pseudo-barycenter pair, which is the solution to the optimal fair data representation, to the fair data representation at the Pareto frontier using McCann interpolation via a similar approach as we derived the post-processing Pareto frontier in Section \ref{s:Pareto Frontier}. But notice a direct application of Theorem \ref{th:Geodesics Characterization of the Pareto Frontier} does not work here because there is no direct interpolation between $E(Y|X,Z)$ and $\overline{\mathbb{E}(Y|\bar{X},Z)}$ due to the change of the underlying sigma-algebra. Therefore, we apply a diagonal argument, Remark \ref{r:Diagonal Estimate of the Post-processing Pareto Frontier}, to estimate the interpolation between $E(Y|X,Z)$ and $\overline{\mathbb{E}(Y|\bar{X},Z)}$ and thus the fair data representation at the Pareto frontier.

To start, we derive the following post-processing optimal trade-off result directly from Theorem \ref{th:Geodesics Characterization of the Pareto Frontier} for a fixed choice of $\tilde{X} \in \{ \tilde{X} \in \mathcal{D}|_{\mathcal{X}}: \tilde{X} \perp Z\}$. For any $f: \mathcal{X} \times \mathcal{Z} \rightarrow \mathcal{Y}$, define $L_{y|\tilde{X}}$, $D_{y|\tilde{X}}$, and $V_{y|\tilde{X}}$ as follows:
\begin{equation}
L_{y|\tilde{X}}(f(\tilde{X},Z)) :=  (\int_{\mathcal{Z}} ||\mathbb{E}(Y_z|\tilde{X}) - f(\tilde{X},Z)_z||_2^2 d\lambda(z))^{\frac{1}{2}}
\end{equation}
\begin{equation} 
D_{y|\tilde{X}}(f(\tilde{X},Z)):= (\int_{\mathcal{Z}^2} \mathcal{W}_2^2(f(\tilde{X},Z)_{z_1} ,f(\tilde{X},Z)_{z_2}) d\lambda({z_1})d\lambda({z_2}))^{\frac{1}{2}}.
\end{equation}
To simplify notation, for any $T': \mathcal{Y} \times \mathcal{Z} \rightarrow \mathcal{Y}$, we also define the following:
\begin{equation}
L_{y|\tilde{X}}(T') :=  (\int_{\mathcal{Z}} ||\mathbb{E}(Y_z|\tilde{X}) - T'_z(\mathbb{E}(Y_z|\tilde{X}) )||_2^2 d\lambda(z))^{\frac{1}{2}}
\end{equation}
\begin{equation} 
D_{y|\tilde{X}}(T'):= (\int_{\mathcal{Z}^2} \mathcal{W}_2^2((T'_{z_1})_{\sharp}\mathcal{L}(\mathbb{E}(Y_{z_1}|\tilde{X})),(T'_{z_2})_{\sharp}\mathcal{L}(\mathbb{E}(Y_{z_2}|\tilde{X}) d\lambda({z_1})d\lambda({z_2}))^{\frac{1}{2}}.
\end{equation}
Also, let $T$ denote the optimal transport map from $\{\mathbb{E}(Y_z|\tilde{X})\}_z$ to the barycenter $\overline{\mathbb{E}(Y|\tilde{X},Z)}$, let $T(t), t \in [0,1]$ be the McCann interpolation, and define
\begin{align}
V_{y|\tilde{X}} := L_{y|\tilde{X}}(T) & =( \int_{\mathcal{Z}} ||\mathbb{E}(Y_z|\tilde{X}) - T_z(\mathbb{E}(Y_z|\tilde{X}) )||_2^2 d\lambda(z))^{\frac{1}{2}}\\
& = ( \int_{\mathcal{Z}} ||\mathbb{E}(Y_z|\tilde{X}) - \overline{\mathbb{E}(Y|\tilde{X},Z)}||_2^2 d\lambda(z) )^{\frac{1}{2}}.
\end{align} 
Then the result below follows directly similar to the proof of Theorem~\ref{th:Geodesics Characterization of the Pareto Frontier}.

\begin{coro}[Pareto frontier for conditional expectation on fixed sigma-algebra] \label{corr:pre pareto characterization}
Given $L_{y|\tilde{X}}$, $D_{y|\tilde{X}}$, and $V_{y|\tilde{X}}$ defined above, we have

\begin{equation}
V_{y|\tilde{X}} \leq L_{y|\tilde{X}}(f(\tilde{X},Z)) + \frac{1}{\sqrt{2}}D_{y|\tilde{X}}(f(\tilde{X},Z))
\end{equation}
where equality holds if and only if $f(\tilde{X},z) = T(t)(\mathbb{E}(Y_z|\tilde{X}),z)$ $\lambda$-a.e. for $t \in [0,1]$ as
\begin{equation}
L_{y|\tilde{X}}(T(t)) = tL_{y|\tilde{X}}(T(0)) = tV_{y|\tilde{X}},
\end{equation}
\begin{equation}
\frac{1}{\sqrt{2}}D_{y|\tilde{X}}(T(t)) = \frac{1}{\sqrt{2}}(1-t)D_{y|\tilde{X}}(T(0)) = (1-t)V_{y|\tilde{X}}.
\end{equation}

\end{coro}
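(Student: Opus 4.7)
The approach is to observe that Corollary~\ref{corr:pre pareto characterization} has the same structural form as Theorem~\ref{th:Geodesics Characterization of the Pareto Frontier}, but with the role of $\mathbb{E}(Y|X,Z)$ replaced by $\mathbb{E}(Y|\tilde{X},Z)$ and the underlying sigma-algebra narrowed from $\sigma((X,Z))$ to $\sigma((\tilde{X},Z))$. The plan is therefore to reduce the claim to Theorem~\ref{th:Geodesics Characterization of the Pareto Frontier} by identifying the appropriate sensitive marginals. The key identification is that $\tilde{X} \perp Z$ implies $\mathbb{E}(Y_z|\tilde{X}) = \mathbb{E}(Y|\tilde{X},Z)_z$ for $\lambda$-a.e.\ $z \in \mathcal{Z}$, so the conditionals appearing in $L_{y|\tilde{X}}$, $D_{y|\tilde{X}}$, and $V_{y|\tilde{X}}$ are precisely the sensitive marginals of the $L^2$ projection onto $\sigma((\tilde{X},Z))$.

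With this identification in hand, I would first establish the lower bound $V_{y|\tilde{X}} \leq L_{y|\tilde{X}}(f(\tilde{X},Z)) + \frac{1}{\sqrt{2}} D_{y|\tilde{X}}(f(\tilde{X},Z))$ by mimicking the triangle-inequality argument from Theorem~\ref{th:Geodesics Characterization of the Pareto Frontier}. Concretely, for any admissible $f$ one writes, using the definition of $\mathcal{W}_2$ and disintegration,
\begin{equation}
\mathcal{W}_2\big(\mathcal{L}(\mathbb{E}(Y_z|\tilde{X})), \overline{\mathbb{E}(Y|\tilde{X},Z)}\big) \leq \|\mathbb{E}(Y_z|\tilde{X}) - f(\tilde{X},Z)_z\|_2 + \mathcal{W}_2\big(\mathcal{L}(f(\tilde{X},Z)_z), \overline{\mathbb{E}(Y|\tilde{X},Z)}\big),
\end{equation}
integrating in $z$ against $\lambda$, applying Cauchy--Schwarz, and finally using the barycenter characterization $\int_{\mathcal{Z}^2} \mathcal{W}_2^2(\nu_{z_1},\nu_{z_2}) d\lambda(z_1)d\lambda(z_2) \geq 2 \int_{\mathcal{Z}} \mathcal{W}_2^2(\nu_z, \bar{\nu}) d\lambda(z)$ (which accounts for the $\frac{1}{\sqrt{2}}$ factor in front of $D_{y|\tilde{X}}$).

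For the equality case, I would invoke McCann interpolation (Lemma~\ref{l:Mccann Interpolation} together with Remark~\ref{r:Linear Interpolation Formula for Geodesic Path}) along the Brenier map $T(\cdot,z)$ from $\mathcal{L}(\mathbb{E}(Y_z|\tilde{X}))$ to $\overline{\mathbb{E}(Y|\tilde{X},Z)}$. Constant-speed traversal gives $\mathcal{W}_2(\mathcal{L}(\mathbb{E}(Y_z|\tilde{X})), T(t)_{\sharp}\mathcal{L}(\mathbb{E}(Y_z|\tilde{X}))) = t \, \mathcal{W}_2(\mathcal{L}(\mathbb{E}(Y_z|\tilde{X})), \overline{\mathbb{E}(Y|\tilde{X},Z)})$, and combining this with the displacement identity
\begin{equation}
\mathcal{W}_2\big(T(s)_\sharp \mathcal{L}(\mathbb{E}(Y_{z_1}|\tilde{X})), T(s)_\sharp \mathcal{L}(\mathbb{E}(Y_{z_2}|\tilde{X}))\big) = (1-s) \, \mathcal{W}_2\big(\mathcal{L}(\mathbb{E}(Y_{z_1}|\tilde{X})), \mathcal{L}(\mathbb{E}(Y_{z_2}|\tilde{X}))\big)
\end{equation}
yields the claimed linear scalings $L_{y|\tilde{X}}(T(t)) = t V_{y|\tilde{X}}$ and $\frac{1}{\sqrt{2}} D_{y|\tilde{X}}(T(t)) = (1-t) V_{y|\tilde{X}}$. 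The equality characterization then follows from the fact that $T(t)(\mathbb{E}(Y_z|\tilde{X}),z)$ saturates both the triangle and Cauchy--Schwarz inequalities used in the lower bound, together with uniqueness of the Brenier map under $\mathcal{L}(\mathbb{E}(Y_z|\tilde{X})) \in \mathcal{P}_{2,ac}(\mathcal{Y})$.

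The main obstacle---which is really only a bookkeeping matter rather than a substantive one---is verifying that the reduction is legitimate: one must check that the admissible class of $f: \mathcal{X} \times \mathcal{Z} \to \mathcal{Y}$ in the corollary is rich enough for the argument and that no additional measurability issue arises from working with $\sigma(\tilde{X})$ in place of $\sigma((X,Z))$. Since $\tilde{X} \perp Z$ is assumed and $\mathbb{E}(Y_z|\tilde{X})$ coincides with $\mathbb{E}(Y|\tilde{X},Z)_z$, the disintegration argument proceeds verbatim, so the proof indeed reduces to a direct application of Theorem~\ref{th:Geodesics Characterization of the Pareto Frontier}.
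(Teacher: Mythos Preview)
Your proposal is correct and matches the paper's approach exactly: the paper simply states that the result ``follows directly similar to the proof of Theorem~\ref{th:Geodesics Characterization of the Pareto Frontier},'' and you have correctly identified the reduction, including the key point that $\tilde{X}\perp Z$ gives $\mathbb{E}(Y_z|\tilde{X})=\mathbb{E}(Y|\tilde{X},Z)_z$ so that $L_{y|\tilde{X}}$, $D_{y|\tilde{X}}$, $V_{y|\tilde{X}}$ are exactly the quantities $L$, $D$, $V$ of Theorem~\ref{th:Geodesics Characterization of the Pareto Frontier} for the marginals $\mu_z=\mathcal{L}(\mathbb{E}(Y|\tilde{X},Z)_z)$.

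One small caution: the pointwise ``displacement identity'' you wrote for pairwise distances along two different geodesics to the barycenter is stronger than what is needed and is not what the paper's proof of Theorem~\ref{th:Geodesics Characterization of the Pareto Frontier} uses; there the computation of $D(T(t))$ goes through first showing $\overline{T_z(t)_\sharp\mu_z}=\bar{\mu}$ and then applying the single-geodesic constant-speed property $\mathcal{W}_2(T_z(t)_\sharp\mu_z,\bar{\mu})=(1-t)\mathcal{W}_2(\mu_z,\bar{\mu})$ together with the barycenter identity $\int_{\mathcal{Z}^2}\mathcal{W}_2^2(\nu_{z_1},\nu_{z_2})\,d\lambda^2=2\int_{\mathcal{Z}}\mathcal{W}_2^2(\nu_z,\bar{\nu})\,d\lambda$. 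Routing your argument through that step closes the gap without changing anything else.
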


The above result shows that by fixing $\tilde{X} \in \{ \tilde{X} \in \mathcal{D}|_{\mathcal{X}}: \tilde{X} \perp Z\}$, the McCann interpolation between $Id$ and $T_{y|\tilde{X}}$ yields the Pareto frontier from $\mathbb{E}(Y|\tilde{X},Z)$ to $\overline{\mathbb{E}(Y|\tilde{X},Z)}$, which is a weak version of the true frontier from $\mathbb{E}(Y|X,Z)$ to $\overline{\mathbb{E}(Y|\tilde{X},Z)}$. The only difficulty remaining is to coarsen the underlying sigma-algebra from $\sigma(X,Z)$ to $\sigma(\bar{X})$. But by Remark \ref{r:Finest Sigma Algebra vs. Most Variance}, we know that one can coarsen the sigma-algebra by reducing the variance. Therefore, we apply a diagonal argument to estimate the McCann interpolation between $(X,Y)$ and $(\bar{X},\bar{Y})$.

\begin{rema}[Diagonal estimate of the post-processing Pareto frontier] \label{r:Diagonal Estimate of the Post-processing Pareto Frontier}
The key observation is that the optimal affine transport map that pushes $(X,Y)$ forward to $(X^{\dag},Y^{\dag})$ is the pair $(T_x,T_{y|\bar{X}})$. Therefore, McCann interpolation between $\text{Id}$ and $T_x$ can optimally reduce variance and thereby coarsen $\sigma((X,Z))$ to $\sigma(X^{\dag})$, whereas the interpolation betwen $\text{Id}$ and $T_{y|\bar{X}}$ forms an estimation of the geodesic path between $Y$ and $Y^{\dag}$. Therefore, the present work matches the two interpolations diagonally $$(T_x(t), T_{y|\bar{X}}(t)) := ((1-t)Id_x + tT_x, (1-t)Id_y + tT_{y|\bar{X}}),$$ to estimate the true optimal fair data representation at the Pareto frontier.
\end{rema}

Finally, since $X^{\dag}$ and $\mathbb{E}(Y^{\dag}|X^{\dag})$ are the estimation of $\bar{X}$ and $\overline{\mathbb{E}(Y|\bar{X},Z)}$, respectively, as shown in the last section, it follows from Corollary \ref{corr:pre pareto characterization} and Remark \ref{r:Diagonal Estimate of the Post-processing Pareto Frontier} that 
\begin{equation}
\mathbb{E}(T_{y|\bar{X}}(t)(Y,Z)|T_x(t)(X,Z)), t \in [0,1]
\end{equation}
provides a pre-processing estimate of the Pareto frontier from $\mathbb{E}(Y|X,Z)$ to $\overline{\mathbb{E}(Y|\bar{X},Z)}$ that is characterized by Theorem \ref{th:Geodesics Characterization of the Pareto Frontier}.

\section{Algorithm Design}\label{s:Algorithm}

In this section, we propose two algorithms based on the theoretical results above. Algorithm~\ref{a:dependent} is designed for the fair learning outcome in the post-processing approach and for the dependent variable in fair data representation, whereas Algorithm~\ref{a:independent} is designed for the independent variable in fair data representation.
\begin{itemize}
\setlength{\parsep}{-0.2ex}
\setlength{\itemsep}{-0.2ex}
\item[1.] For practitioners who want to generate fair learning outcomes along the Pareto frontier, Algorithm \ref{a:dependent} takes the learning outcomes marginals $\{f(X,Z)_z\}_z$ as input and outputs the learning outcomes at (the optimal affine estimation of) the post-processing estimation of the Pareto frontier: $\{f(X,Z)(t)\}_{t \in [0,1]}$, which is the Wasserstein geodesic paths from the original learning outcome, $f(X,Z)(0)$, to the estimate of the optimal fair learning outcome, $f(X,Z)(1)$. Here, $f(X,Z)(1)$ is the best estimate of the optima fair learning outcome based on the provided learning outcome $\{f(X,Z)_z\}_z$.
\item[2.] For practitioners who want to generate a fair data representation, Algorithm \ref{a:independent} and Algorithm \ref{a:dependent} take in respectively the marginal independent and dependent data: $\{X_z\}_z$ and $\{Y_z\}_z$, then outputs respectively the independent and dependent data representations along the Wasserstein geodesics from the marginals to their pseudo-barycenter: $\{(X^{\dag}(t),Y^{\dag}(t))\}_{t \in [0,1]}$.  So that any conditional expectation estimation supervised learning model trained via $\{(X^{\dag}(t),Y^{\dag}(t))\}_{t \in [0,1]}$ results in (an diagonal affine estimation of) the learning outcome at the Pareto frontier.
\end{itemize}

\begin{algorithm}
\SetAlgoLined
\caption{Pseudo-Barycenter Geodesics for Independent Variable}
\label{a:independent}

{\bf Input:} marginal data sets $\{X_z\}_z$, stop criterion $\epsilon$;\\

{\bf Step 1:} Find the optimal barycenter covariance\;

Initialization: $\delta = \infty$, $\Sigma = rand$ or $Id$

 \While{$\delta > \epsilon$}{
  $\Sigma_{new} = \frac{1}{|X|} \sum_z |X_z| (\Sigma^{\frac{1}{2}} \Sigma_{X_z} \Sigma^{\frac{1}{2}})^{\frac{1}{2}}$; \hfill \tcp{\eqref{eq:iterative method}}\
  $\delta = ||\Sigma - \Sigma_{new}||_F$\;
  $\Sigma = \Sigma_{new}$\;
 }
 
{\bf Step 2:} Find the optimal affine transport maps\;
$T_z = \Sigma_{X_z}^{-\frac{1}{2}} (\Sigma_{X_z}^{\frac{1}{2}} \Sigma \Sigma_{X_z}^{\frac{1}{2}})^{\frac{1}{2}} \Sigma_{X_z}^{-\frac{1}{2}}$; \hfill \tcp{\eqref{eq:independent pseudo-barycenter affine map}}
{\bf Step 3:} Find the geodesic path to independent pseudo-barycenter\;
$X^{\dag}_z(t) = T_z(t) (X_z - m_{X_z}) + m_X$; \hfill \tcp{\eqref{eq:independent pseudo-barycenter }}
where $T_z(t) := (1-t)Id + tT_z$, $t \in [0,1]$; \hfill \tcp{\eqref{eq:mccann interpolation}}

{\bf Step 4 (optional):} For binary rows $X_{i \in I}$, reshape $(X^{\dag}(t))_i$ to binary by randomized rounding for all $i \in I$\;
For all $X_{i}$ binary: $p(t) = \frac{(X^{\dag}_z(t))_i}{\max((X^{\dag}_z(t))_i) - \min((X^{\dag}_z(t))_i)}$, $(X^{\dag}_z(t))_i \sim$ Bernoulli$(p(t))$\;
{\bf Step 5 (optional):} If sensitive information needs to be attached, merge the marginals back with mitigating $Z$\;
$X_z^{\dag}(t) = (X_z(t),z(t))$ where $z(t) = (1-t)(z - m_Z) + m_Z$, $t \in [0,1]$

{\bf Output:} $\{\{X_z^{\dag}(t)\}_{z \in \mathcal{Z}}\}_{t \in [0,1]}$

\end{algorithm}

\begin{algorithm}
\SetAlgoLined
\caption{Dependent (or Post-processing) Pseudo-Barycenter Geodesics}
\label{a:dependent}

{\bf Input:} marginal data sets $\{Y_z\}_z$ (post-processing: $\{f(X,Z)_z\}_z$), stop criterion $\epsilon$; \\

{\bf Step 1:} Find the optimal barycenter covariance\;

Initialization: $\delta = \infty$, $\Sigma = rand$ or $Id$

 \While{$\delta > \epsilon$}{
  $\Sigma_{new} = \frac{1}{|Y|} \sum_z |Y_z| (\Sigma^{\frac{1}{2}} \Sigma_{Y_z| X^{\dag}_z} \Sigma^{\frac{1}{2}})^{\frac{1}{2}}$ \hfill \tcp{\eqref{eq:iterative method}}
  (post-processing: $\Sigma_{new} = \frac{1}{|Y|} \sum_z |f(X,Z)_z| (\Sigma^{\frac{1}{2}} \Sigma_{f(X,Z)_z} \Sigma^{\frac{1}{2}})^{\frac{1}{2}}$)\;
  $\delta = ||\Sigma - \Sigma_{new}||_F$\;
  $\Sigma = \Sigma_{new}$\;
 }
 
{\bf Step 2:} Find the optimal affine transport maps\;
$T_z = \Sigma_{Y_z|X^{\dag}_z}^{-\frac{1}{2}} (\Sigma_{Y_z|X^{\dag}_z}^{\frac{1}{2}} \Sigma \Sigma_{Y_z|X^{\dag}_z}^{\frac{1}{2}})^{\frac{1}{2}} \Sigma_{Y_z|X^{\dag}_z}^{-\frac{1}{2}}$ \hfill \tcp{\eqref{eq:dependent pseudo-barycenter affine map}}
(post-processing: $T_z = \Sigma_{f(X,Z)_z}^{-\frac{1}{2}} (\Sigma_{f(X,Z)_z}^{\frac{1}{2}} \Sigma \Sigma_{f(X,Z)_z}^{\frac{1}{2}})^{\frac{1}{2}} \Sigma_{f(X,Z)_z}^{-\frac{1}{2}}$); \hfill \tcp{\eqref{eq:post-processing maps}}\
{\bf Step 3:} Find the geodesic path to dependent pseudo-barycenter\;
$Y^{\dag}_z(t) = T_z(t) (Y_z - m_{Y_z}) + m_Y$ \hfill \tcp{\eqref{eq:dependent pseudo-barycenter }}
where $T_z(t) := (1-t)Id + tT_z, t \in [0,1]$ \hfill \tcp{\eqref{eq:mccann interpolation}}
(post-processing: $f(X,Z)_z(t) = T_z(t) (f(X,Z)_z - m_{f(X,Z)_z}) + m_{f(X,Z)}$); \hfill \tcp{\eqref{eq:post-processing pseudo-barycenter}}
{\bf Step 4 (optional):} For binary rows $Y_{i \in I}$ (post-processing: $(f(X,Z))_{i \in I}$), reshape $(Y^{\dag}(t))_i$ (post-processing: $(f(X,Z)(t))_{i \in I}$) to binary by randomized rounding for all $i \in I$\;
For all $Y_{i}$ binary: $p(t) = \frac{(Y^{\dag}_z(t))_i}{\max((Y^{\dag}_z(t))_i) - \min((Y^{\dag}_z(t))_i)}$, $(Y^{\dag}_z(t))_i \sim$ Bernoulli$(p(t))$\;

{\bf Output:} $\{\{Y^{\dag}_z(t)\}_{z \in \mathcal{Z}}\}_{t \in [0,1]}$ (post-processing: $\{\{f(X,Z)_z(t)\}_{z \in \mathcal{Z}}\}_{t \in [0,1]}$)

\end{algorithm}
The choice of the Frobenius norm  in Step 1 is due to computational efficiency. Any matrix norm would work.

\begin{rema}[Solution to alternative fair data representation constraint] \label{r:Solution to Alternative Fair Data Representation Constraint}
In Section \ref{s:Post-processing and Pre-processing Approach}, the present work shows two alternative fair data representation constraints: (1) $(\tilde{X},\tilde{Y}) \perp Z$ and (2) $\tilde{X} \perp Z$, which offer different trade-offs between fairness protection and utility. If a practitioner applies the alternative constraint, the proposed algorithms can be applied to generate (the optimal affine estimation of) corresponding fair data representation as the following:
\begin{itemize}
\item[1] For $(\tilde{X},\tilde{Y}) \perp Z$, one applies Algorithm \ref{a:independent} to both $\{(X_z, Y_z)\}_z$. This alternative is especially useful when practitioners or data publishers do not know which features would be chosen as independent or dependent.
\item[2] For $\tilde{X} \perp Z$, one applies Algorithm \ref{a:independent} to $\{X_z\}_z$ and leaves $\{Y_z\}$ untouched.
\end{itemize}
\end{rema}



\section{Empirical Study: Fair Supervised Learning}\label{s:Numerics}

In this section, we present numerical experiments with the proposed Algorithms~\ref{a:independent} and~\ref{a:dependent} from Section~\ref{s:Algorithm}. The proposed fair data representation method is bench-marked against two baselines:

\begin{itemize}
\setlength{\parsep}{-0.2ex}
\setlength{\itemsep}{-0.2ex}
\item[1.] the prediction model trained via the original data (denoted by ``supervised learning name" in the experiment result figure below): supervised learning models trained via data including the sensitive variable provide an estimation of statistical disparity resulting from both disparate treatment and impact.
\item[2.] the prediction model trained via data excluding the sensitive variable (denoted by ``supervised learning name + Excluding Z"): supervised learning models trained via data excluding the sensitive variable provide an estimation of statistical disparity resulting from only disparate impact.
\end{itemize}

\subsection{Benchmark Data and Comparison Methods}

For comparison, we implement the following known methods for different types of supervised learning tests:

\begin{itemize}
\setlength{\parsep}{-0.2ex}
\setlength{\itemsep}{-0.2ex}
\item[1.] For classification test, the present work compares the current state-of-the-art pre-processing methods \cite{calmon2017optimized, zemel2013learning} (``supervised learning name + Calmon or Zemel", the later is also known as ``Learning Fair Representation") with the proposed fair data representation methods (``supervised learning name + pre-proc. Pareto frontier Est. or Pseudo-barycenter").
\item[2.] For uni-variate regression test, we compare the post-processing Wasserstein barycenter based fair regression \cite{chzhen2020fair} (``supervised learning name + Chzhen") with the proposed post-processing pseudo-barycenter methods (``supervised learning name + post-proc. Pareto frontier Est. or Pseudo-barycenter") and the fair data representation methods.
\item[3.] For multi-variate supervised learning test, we compare the post-processing pseudo-barycenter methods with the fair data representation methods.
\end{itemize}

The reasons for this choice are as follows: (1) the known attempts via the pre-processing approach are only available for fair classification; (2) the post-processing Wasserstein barycenter based methods on fair classification are analogous to the one on fair regression, which is shown to outperform other in-processing or post-processing methods in reducing discrimination while preserving accuracy; (3) there exists no practical attempt along the Wasserstein characterization approach to multi-dimensional supervised learning due to the computational complexity of finding the barycenter and the optimal transport maps. 

We adopt the following metrics of accuracy and discrimination that are frequently used in fair machine learning experiments on various data sets: (1) For fair classification, the prediction accuracy, and statistical disparity are quantified respectively by AUC (area under the Receiver Operator Characteristic curve) and

\begin{defi}[Classification discrimination] \label{classification disparity}
$$Discrimination = \max_{z,z' \in \mathcal{Z}}{\Big|\frac{\mathbb{P}(\hat{Y}_z = 1)}{\mathbb{P}(\hat{Y}_{z'} = 1)} - 1\Big|}$$
\end{defi}
as defined in \cite{calmon2017optimized}. (2) For univariate supervised learning, the prediction error and statistical disparity are quantified respectively by MSE (mean squared error, equivalent to the squared $L^2$ norm on sample probability space) and KS (Kolmogorov-Smirnov) distance as in \cite{chzhen2020fair} for indirect comparison purpose. So that readers can compare the proposed methods indirectly with other methods that are tested in \cite{calmon2017optimized,  chzhen2020fair, zemel2013learning} and their references. (3) For univariate and multivariate supervised learning, the prediction error and statistical disparity are quantified respectively by $L^2$ and $\mathcal{W}_2$ (Wasserstein) distances, which are the quantification the current work adopts to prove the Pareto frontier in the above sections.

In addition, we perform tests on four benchmark data sets: CRIME, LSAC, Adult, COMPAS, which are also frequently used in fair learning experiments. A brief summary is given below. For all the test results, we apply 5-fold cross-validation with $50\%$ training and $50\%$ testing split, except for $90\%$ training and $10\%$ testing split in the linear regression test on LSAC due to the high computational cost of the post-processing Wasserstein barycenter method \cite{chzhen2020fair}. Therefore, interested readers can also compare the pseudo-barycenter test results indirectly to other methods tested in \cite{calmon2017optimized,chzhen2020fair}.\\

\begin{table}[htbp]
    \centering
    \begin{tabularx}{\textwidth} { sbttt }
 \hline
 Data set & Tests & Data size & dim($X$) & dim($Y$)\\
 \hline
UCI Adult  & logit regression, random forest  & 162805  & 16 & 1\\
 \hline
COMPAS  & logit regression, random forest  & 26390 & 7 & 1\\
 \hline
LSAC  & linear regression, ANN & 20454  & 9 & 1\\
 \hline
CRIME  & linear regression, ANN & 1994 & 97 & 1\\
 \hline
 CRIME  & linear regression, ANN & 1994 & 87 & 11\\
 \hline
\end{tabularx}
  
\end{table}

\begin{itemize}
\setlength{\parsep}{-0.2ex}
\setlength{\itemsep}{-0.2ex}
\item Communities and Crime Data Set (CRIME) contains the social, economic, law executive, and judicial data of communities in the United States with 1994 examples \cite{redmond2002data}. The task of univariate learning is to predict the number of crimes per $10^5$ population using the rest of the information on the data set. Here, race is the sensitive information and, for (indirect) comparison purposes, we made race a binary categorical variable of whether the percentage of the African American population (racepctblack) is greater than $30\%$.

In multivariate supervised learning on CRIME, we keep the same sensitive variable. But the learning task is to predict the following vector that represents the local housing and rental market information: (low quartile occupied home value, median home value, high quartile home value, low quartile rent, median rent,  high quartile rent, median gross rent, number of immigrants, median number of bedrooms, number of vacant households, number of crimes).

\item LSAC National Longitudinal Bar Passage Study data set (LSAC) contains social, economic, and personal data of law school students with 20454 examples \cite{wightman1998lsac}. The goal of univariate models is to predict the students' GPA using other information on the data set. Here, race is the sensitive variable and, for (indirect) comparison purposes, we make it a binary variable on whether the student is non-white. 

\item UCI Adult Data Set (Adult) contains the 1994 Census data with 162805 examples \cite{asuncion2007uci}. The goal is to predict the binary categorization (whether gross annual income is greater than 50k) using age, education years, and gender, where gender is the sensitive information.

\item Correctional Offender Management Profiling for Alternative Sanctions (COMPAS) is a benchmark set of data from Broward County, Florida for algorithmic bias studies \cite{angwin2022machine}. Following \cite{calmon2017optimized}, the goal here is to predict whether an individual would commit any violent crime while race is the sensitive binary variable (African-American and Caucasian).

\end{itemize}

\subsection{Numerical Result}

In this subsection, we summarize the experimental results\footnote{The code for the results of our experiments is available online at: \url{github.com/xushizhou/fair_data_representation}}.

The classification test result is summarized in Figure $\ref{classification comparison}$ below.  Here, the vertical and horizontal axes are AUC and Discrimination defined in Definition $\ref{classification disparity}$. That is, the more upper-left, the better the result. The first row of Figure $\ref{classification comparison}$ shows the results of logistic regression (left) and random forest (right) on Adult whereas the second shows the corresponding results on COMPAS.

\begin{figure}[H]
\centering
\includegraphics[width=\textwidth]{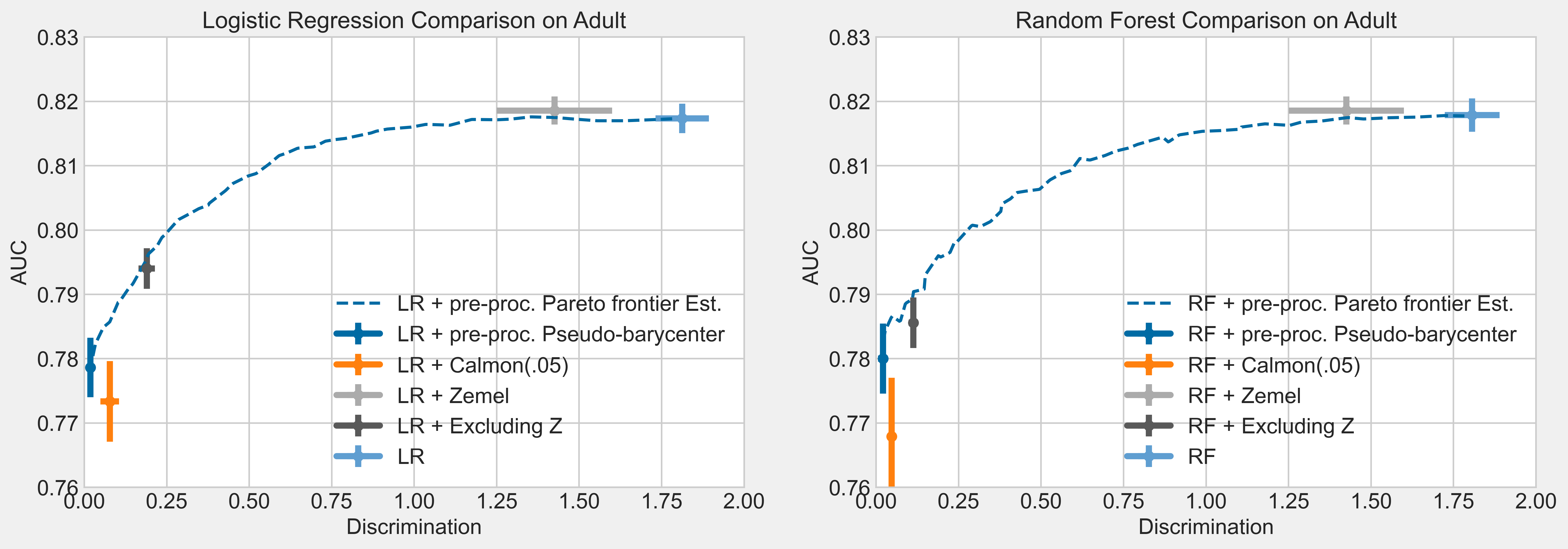}\hfill
\includegraphics[width=\textwidth]{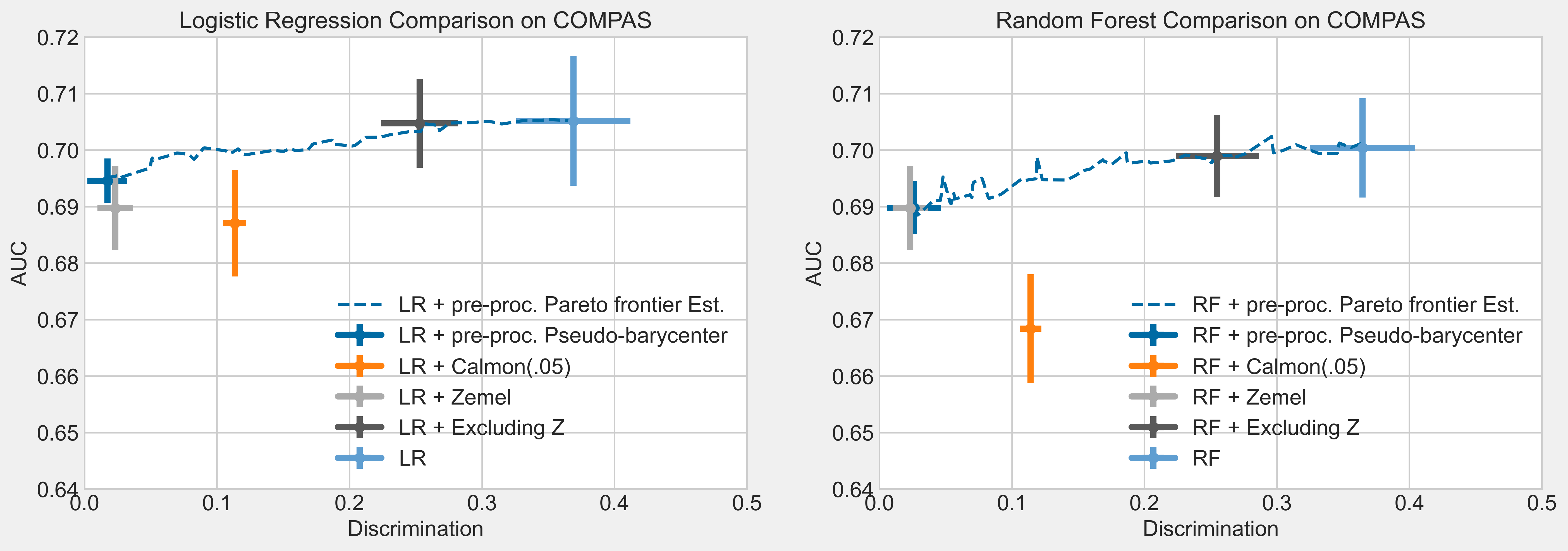}
\caption{As shown in the classification test above, the proposed fair data representation method (+ Pre-proc. Pareto frontier Est. or Pseudo-barycenter) outperforms the other methods (+ Zemel or + Calmon) in estimating the optimal fair learning outcome. It reduces the Discrimination metric to nearly zero while keeping the relatively high level of AUC with both logistic regression (LR) and random forest (RF) on both Adult and COMPAS. Furthermore, fair data representation method offers flexibility in choosing the desired trade-off while other methods only estimate a random point near the Pareto frontier.}
\label{classification comparison}
\end{figure}

We note that there exists a large disparate impact in the learning outcome on COMPAS due to the relatively small difference between the ``Discrimination" of learning outcome on the original data (LR and RF) and the outcome on the data excluding $Z$ (LR and RF + Excluding $Z$). Therefore, a further reduction of statistical disparity is needed. In contrast, the relatively large difference in the Adult data set implies a small disparate impact. That is, a simple exclusion of the sensitive variable $Z$ results in a significant improvement in fairness.

For further reduction of statistical disparity, it is clear from the experiment results on both COMPAS and Adult that the estimation via the Wasserstein geodesics to Pseudo-barycenter (LR and RF + Pseudo-barycenter) consistently outperforms LR and RF + Calmon by obtaining lower Discrimination with higher AUC.

In addition, although ``LR and RF + Zemel" achieves a point near the Pareto frontier estimated by the proposed Pseudo-barycenter methods, the point estimation is rather random. Hence, ``+ Zemel" is not consistent in estimating the optimal fair learning outcome (the end point of the Pareto curve). Practitioners cannot know which point on the Pareto frontier is estimated by ``+ Zemel". In comparison, the pseudo-barycenter methods are consistent in estimating the optimal fair learning outcome. In addition, they providef the entire Pareto frontier, and hence offer practitioners the flexibility to choose the desired trade-off. Moreover, the proposed method works for any model that aims to estimate conditional expectation, including classification and regression, while ``+ Zemel" only works for classification.

The univariate regression test result on the LSAC and the one on CRIME are shown respectively in Figure \ref{univariate comparison LSAC} and \ref{univariate comparison CRIME} below. Here, the vertical and horizontal axes in the first rows are MSE and KS distance. The corresponding axes in the second row are the $L^2$-quantified test error and the $\mathcal{W}_2$ distance that quantifies the remaining statistical disparity among sensitive groups. Therefore, the more lower-left, the better is the result in both rows. The two supervised learning methods we use are linear regression and artificial neural networks (ANN with 4 linearly stacked layers where each of the first three layers has 32 units all with ReLu activation while the last has 1 unit with linear activation).

\begin{figure}[H]
\centering
\includegraphics[width=\textwidth]{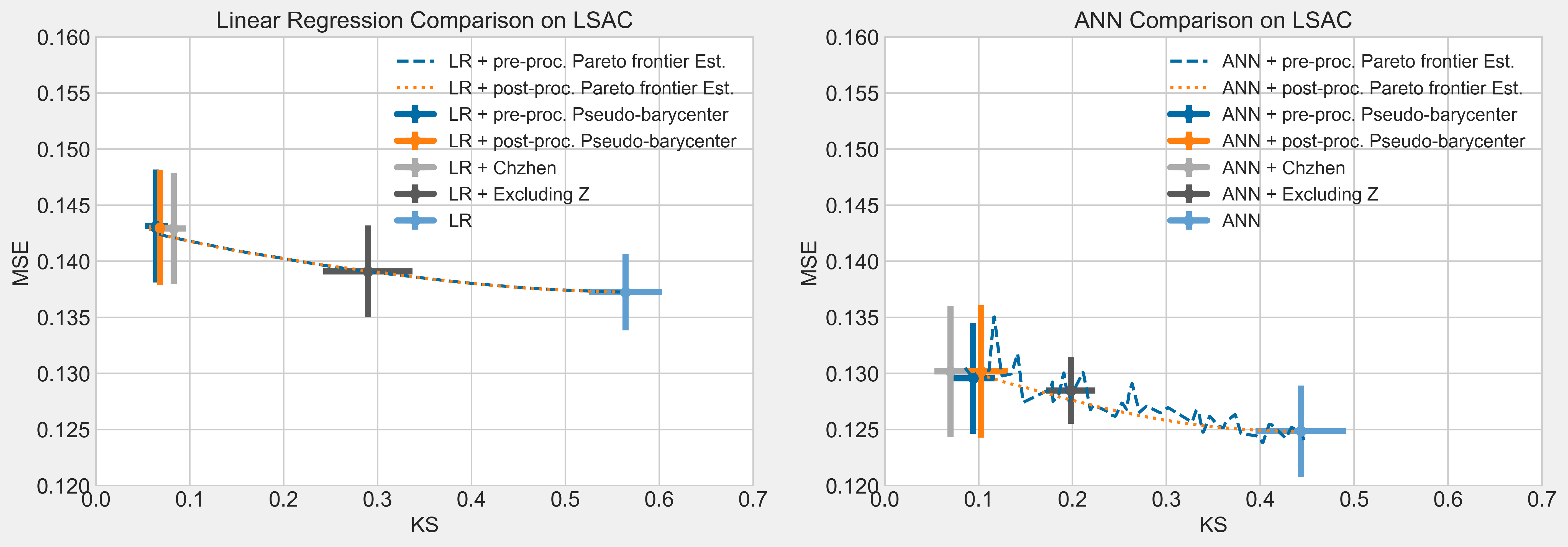}\hfill
\includegraphics[width=\textwidth]{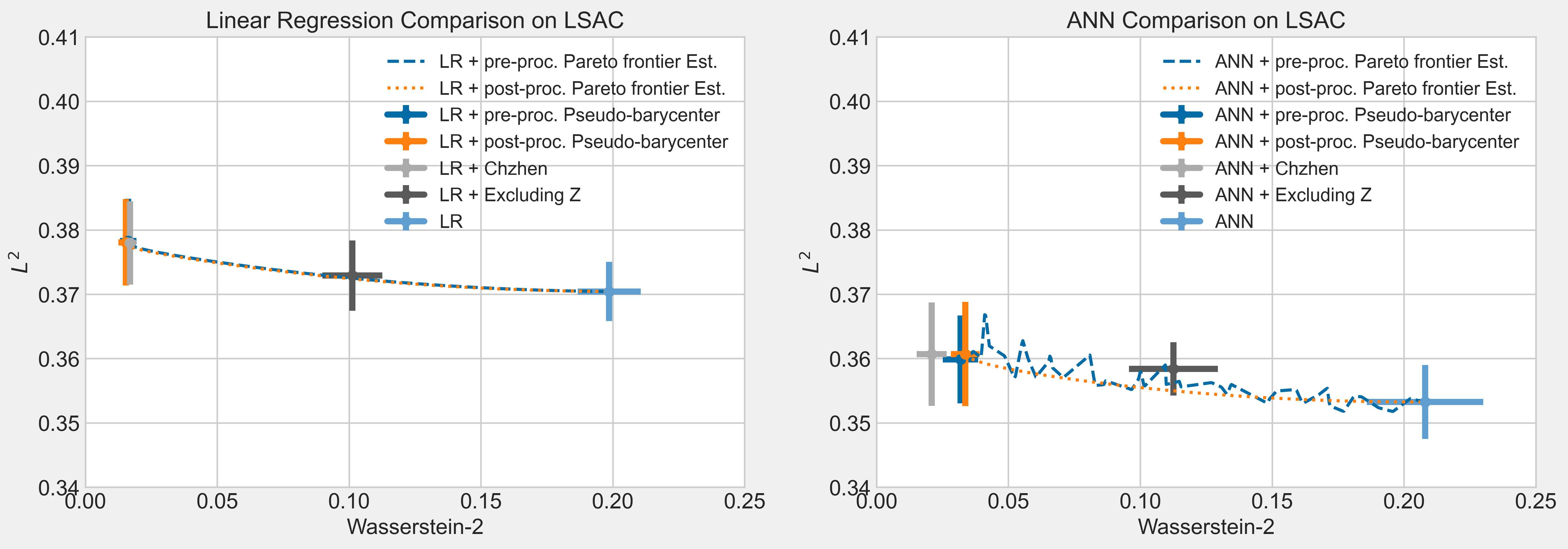}\hfill
\caption{As shown in the univariate regression test on LSAC above, the proposed fair data representation method (+ pre-proc. Pareto frontier Est. or Pseudo-barycenter) and the post-processing pseudo-barycenter geodesics method (+ post-proc. Pareto frontier Est. or Pseudo-barycenter) achieved similar performance as the exact barycenter method (+ Chzhen). The proposed methods outperformed ``+ Chzhen" with linear regression and were exceeded with the artificial neural network, both by a narrow margin. But the performance of the proposed methods is achieved at $0.0128 \%$ of the time costs ``+ Chzhen" (see Figure \ref{time table} below). In addition, the proposed methods offer the flexibility of choosing the desired (optimal) trade-off between utility loss (MSE or $L^2$-loss) and statistical disparity (KS or $\mathcal{W}_2$ distance), whereas ``+ Chzhen" only estimate the end point of the Pareto curve.}
\label{univariate comparison LSAC}
\end{figure}

In the regression tests, post-processing Pareto frontier estimation via ANN is smooth while the pre-processing estimation is not. Here, the smoothness is due to the McCann interpolation between the identity matrix and the optimal transport map in the post-processing approach. The non-smoothness is due to the randomness in training the neural network. When testing fair data representations via ANN, one has to train the neural network for the data representation at every time $t \in [50]$. Hence, the randomness in ANN training results in the non-smoothness in the Pareto frontier estimation via fair data representations.

On the LSAC data set, the proposed methods (+ pre-proc. Pseudo-barycenter and + post-proc. Pseudo-barycenter) obtains a similar performance as the post-processing exact Wasserstein barycenter method (+ Chzhen): the proposed methods outperformed the exact method in the linear regression test and were outperformed by the exact method in the non-linear artificial neural network tests, which is consistent with our theoretical results. But the performance of the proposed methods is achieved at $0.81$ seconds on average, whereas the average time cost of ``+ Chzhen" is $6365.98$ seconds (see Figure \ref{time table} below). In addition, we gained the flexibility in choosing the desired trade-off, computational efficiency, model selection, parameter tuning, and composition.

\begin{figure}[H]
\centering
\includegraphics[width=\textwidth]{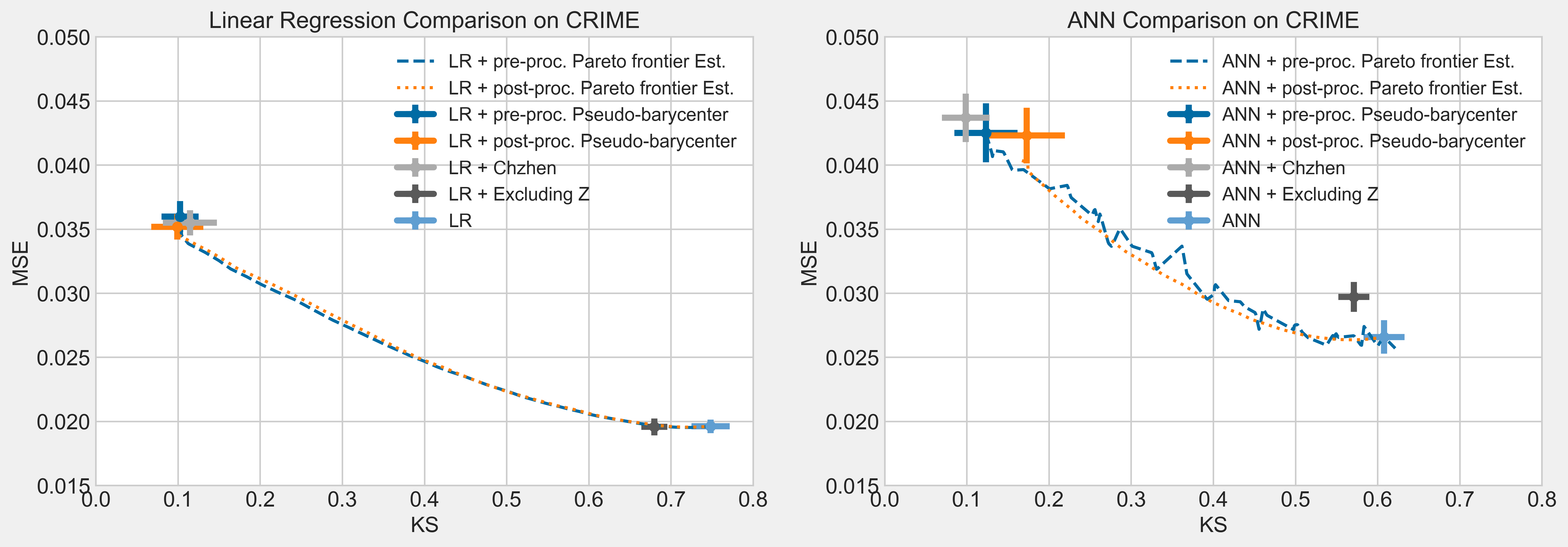}\hfill
\includegraphics[width=\textwidth]{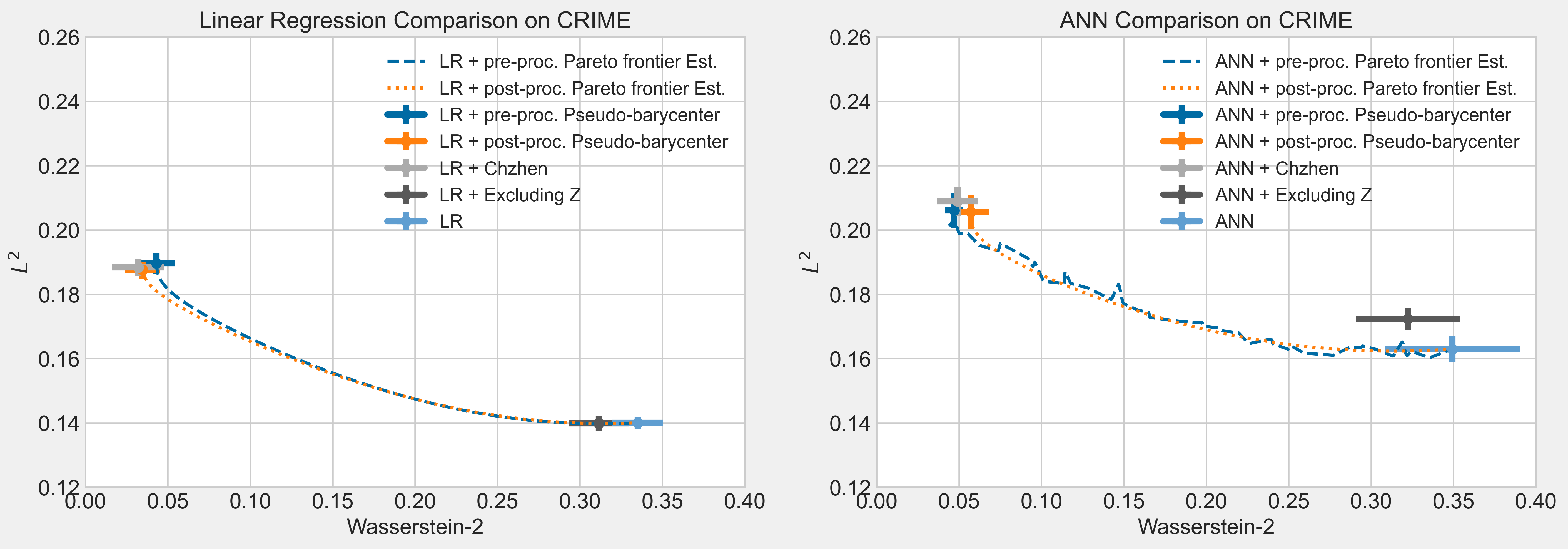}\hfill
\caption{As shown above, the fair data representation method ( + pre-proc. Pareto frontier Est. or Pseudo-barycenter) achieved the same, if not better, performance as the exact barycenter method (+ Chzhen) in estimating the optimal learning outcome. In addition, the fair data representations method offers flexibility in choosing a desired (optimal) trade-off between utility and fairness.}
\label{univariate comparison CRIME}
\end{figure}

For CRIME data, the small difference between the KS of learning outcome on the original data (LR and ANN) and the one on the data excluding the sensitive variable (LR and ANN + Excluding $Z$) implies a significant disparate impact. This observation and the multi-dimensional test below agree with the following statement in \cite{christian2020alignment}: ``Simply removing the `protected attribute' is insufficient. As long as the model takes in features that are correlated with, say, gender or race, avoiding explicitly mentioning it will do little good."

In Figure \ref{univariate comparison CRIME}, it is clear that the fair data representation methods (+ pre-proc. Pareto frontier Est. or Pseudo-barycenter) achieved the same, if not better, performance as the comparison method (+ Chzhen): the proposed method was outperformed by ``+ Chzhen" with linear regression and outperformed ``+ Chzhen" with artificial neural network, both by a narrow margin. But the performance of the fair data representation method is achieved at $4.735 \%$ of the time costs ``+ Chzhen." In addition, the fair data representation method provides (an estimation of) the entire Pareto frontier and works for multivariate supervised learning (see Figure \ref{multivariate comparison} below), whereas ``+ Chzhen" only estimates the end point of the Pareto frontier and only works in the univariate learning.

\begin{rema}
One possible explanation for the proposed method to outperform the exact post-processing Wasserstein barycenter method (``+ Chzhen") is the following: Although \cite{chzhen2020fair} is designed specifically for univariate learning and the KS distance by matching the sensitive marginal cumulative distribution functions, such matching on training data can lead to over-fitting. Therefore, the resulting optimal transport map fits the training data too well to be optimal for the test data.
\end{rema}

Next, we show the multivariate supervised learning on CRIME data to provide a high-dimensional baseline, to which later proposed machine learning fairness methods on high-dimensional data can compare. The vertical and horizontal axes are the $L^2$ test error and the $\mathcal{W}_2$ distance among sensitive groups. Hence, the more lower-left, the better the result.

\begin{figure}[H]
\centering
\includegraphics[width=\textwidth]{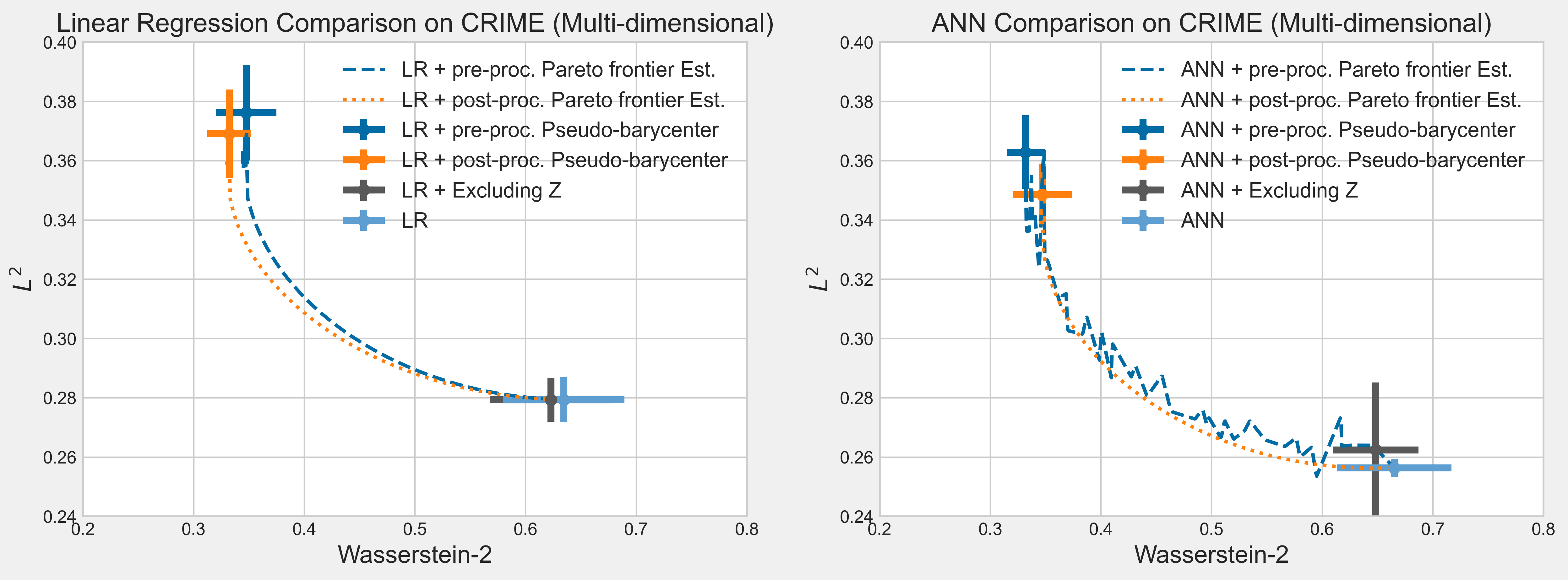}
\caption{As shown above, the fair data representation method (+ pre-proc. Pareto frontier Est. or Pseudo-barycenter) achieves similar performance to the post-processing pseudo-barycenter method (+ post-proc. Pareto frontier Est. or Pseudo-barycenter).}
\label{multivariate comparison}
\end{figure}

Due to the relatively high dimensionality of $X$ (87-dimensional) and $Y$ (11-dimensional), the probabilistic dependence and correlation between the learning outcome and the sensitive variable $Z$ becomes more difficult to remove. It is clear that (LR or ANN + Excluding Z) now removes almost none of the statistical disparity compared to the learning outcome on the original data.

To show the difference in practical computational cost among the comparison methods, we include the following processing time table, where the unit of time is second, and the simulations were run on a 2019 Macbook pro with Intel i9 processor.

\begin{figure}[H]
\centering
\includegraphics[width=\textwidth]{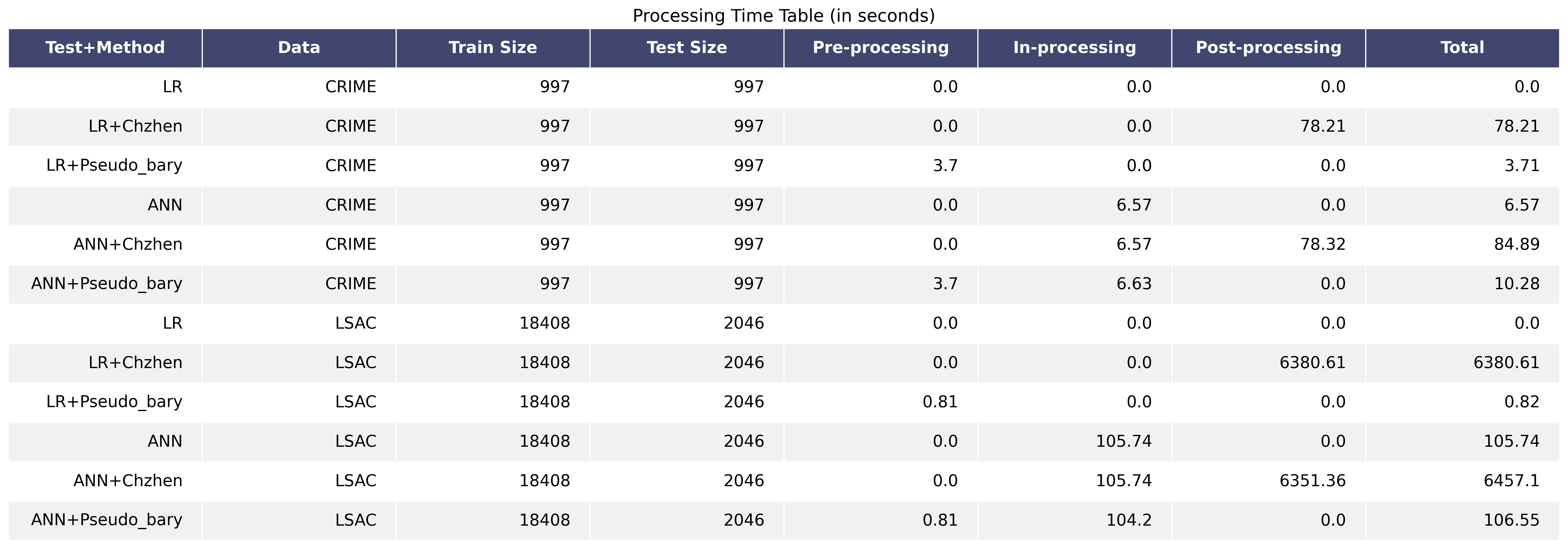}
\caption{As shown in the table above, the computational cost of the pseudo-barycenter method is significantly lower than the cost of the known post-processing methods: on average 7836 times faster on LSAC and 21 times faster on CRIME in a single train-test cycle for a single supervised learning model. Furthermore, in model selection or composition, the pre-processing time is a fixed one-time cost while the post-processing time is additive. (See point 4 below for a more detailed explanation)}
\label{time table}
\end{figure}

Now, we show the major advantages of the proposed method compared to the post-processing ones, such as \cite{chzhen2020fair, jiang2020wasserstein, gouic2020projection}:

\begin{enumerate}
\setlength{\parsep}{-0.2ex}
\setlength{\itemsep}{-0.2ex}
\item Flexibility in Trade-off: The pre-processing method provides an estimation for the entire Pareto frontier and thereby allows practitioners to balance between prediction error and disparity. In contrast, the known post-processing method merely estimates the starting (left) point of the frontier.

\item Sensitive data privacy protection: The geodesics to the pseudo-barycenter allow practitioners to suppress the sensitive information remaining in the data to the desired level. That is, given the resulting suppressed data, anyone who has leaked data from the training or decision stage can merely extract the level of sensitive information up to the pre-determined remaining level. For example, if one chooses to suppress as much sensitive information as possible by setting $t = 1$, then it follows from the construction of dependent and independent pseudobarycenter, it is guaranteed that any unsupervised learning method that uses only the first two moments of the sample data distribution, such as the K-means and PCA, would be unable to extract any information about $Z$ from $X^{\dag}$ or $f_{Y^{\dagger}}(X^{\dag})$.

\item Computational efficiency in high-dimensional learning: As summarized in Figure $\ref{time table}$, the computation of the pseudo-barycenter estimation of the optimal fair learning outcome is significantly faster than the computation of the exact barycenter via the post-processing matching cdf approach, especially on the LSAC data which has a larger sample size.

\item Flexibility in model selection, modification, and composition: in practice, one needs to repeat the training process multiple times to compare different supervised learning algorithms or parameters. The proposed fair data representation method has a fixed pre-processing time while the processing time of post-processing methods is additive. For example, if a practitioner needs to compare linear regression and ANN on LSAC as shown in Figure $\ref{time table}$ and repeat the training process $N$ times for parameter tuning or validation purpose, the total processing time for pseudo-barycenter method is $0.81 + N(0.0025 + 104.2)$ while the processing time for the post-processing method is $N(0.003 + 6380.61 + 105.738 + 6351.36)$.

\end{enumerate}

\section*{Acknowledgement}

The authors want to thank the referees, whose profound and detailed feedback greatly enhanced the quality and clarity of this paper.
The authors acknowledge support from NSF DMS-2027248, NSF DMS-2208356, NSF CCF-1934568, NIH R01HL16351, and DE‐SC0023490.


\appendix
\section{Appendix: Proof of Results in Section 2} \label{A:Section 2 Appendix}

\subsection{Proof of Lemma \ref{l:Rigid translation property}}

\begin{proof}
\begin{align*}
\mathcal{W}_2^2(\mu,\nu) & = \int ||x - y||^2 d\gamma^*(x,y)\\
& = \int ||((x - m_{\mu}) - (y - m_{\nu})) + (m_{\mu} - m_{\nu})||^2 d\gamma^*(x,y)\\
& =  \int ||(x - m_{\mu}) - (y - m_{\nu})||^2 d\gamma^*(x,y) +  ||m_{\mu} - m_{\nu}||^2\\
& \geq \mathcal{W}_2^2(\mu',\nu') + ||m_{\mu} - m_{\nu}||^2\\
& = \int ||x - y||^2 d(\gamma')^*(x,y) +  ||m_{\mu} - m_{\nu}||^2\\
& = \int ||(x + m_{\mu}) - (y + m_{\nu})||^2 d(\gamma')^*(x,y)\\
& \geq \mathcal{W}_2^2(\mu,\nu)
\end{align*}
where $\gamma^*$ and $(\gamma')^*$ denote the optimal transport plan for $(\mu,\nu)$ and $(\mu',\nu')$, respectively. The first inequality results from the fact that $\gamma'(x,y) := \gamma^*(x - m_{\mu}, y - m_{\nu}) \in \prod(\mu',\nu')$, the second inequality from $\gamma(x,y) := (\gamma')^*(x + m_{\mu}, y + m_{\nu}) \in \prod(\mu,\nu)$, and the equalities from direct expansion.
\end{proof}

\subsection{Proof of Lemma \ref{l:Location-scale Barycenter}}

\begin{proof}
Existence and uniqueness follow directly from Theorem~\ref{Existence and Uniqueness of Barycenter}. For the equivalent multi-marginal coupling problem, there exists an optimal solution $\gamma^* = \mathcal{L}(\{X_z\}_z)$. It follows from Remark \ref{equivalence between couple and bary} that $\bar{X} = T(\{X_z\}_z)$ where $\mathcal{L}(\bar{X})$ is the Wasserstein barycenter. Therefore, the Gaussianity of barycenter results from linearity of $T$ in the finite $|\mathcal{Z}|$ case, and the fact that the set of Gaussian distribution is closed in $(\mathcal{P}_{2,ac}, \mathcal{W}_2)$ when $|\mathcal{Z}|$ is infinite. The characterization equation is proved in the case of finite $|\mathcal{Z}|$ in \cite{agueh2011barycenters}. For infinite $|\mathcal{Z}|$, the equation still holds due to the continuity of the covariance function on $(\mathcal{P}_{2,ac}, \mathcal{W}_2)$. The sufficiency and necessity of the equation follows from the following characterization of the barycenter via Brenier's maps $\{T_{\bar{X}X_z}\}_z$ derived in \cite{agueh2011barycenters}:
\begin{equation}
\int_{\mathcal{Z}} T_{\bar{X}X_z} d\lambda(z) = Id.
\end{equation}
It follows from the explicit form of $\{T_{\bar{X}X_z}\}_z$ in Lemma $\ref{l:Optimal Affine Map}$ that 
\begin{align*}
& \int_{\mathcal{Z}} T_{\bar{X}X_z} d\lambda(z) = \int_{\mathcal{Z}} \Sigma_{\bar{X}}^{-\frac{1}{2}} (\Sigma_{\bar{X}}^{\frac{1}{2}} \Sigma_{X_z} \Sigma_{\bar{X}}^{\frac{1}{2}} )^{\frac{1}{2}} \Sigma_{\bar{X}}^{-\frac{1}{2}} d\lambda(z) = Id\\
\iff &  \Sigma_{\bar{X}}^{\frac{1}{2}} \Sigma_{\bar{X}}^{-\frac{1}{2}} \int_{\mathcal{Z}} (\Sigma_{\bar{X}}^{\frac{1}{2}} \Sigma_{X_z} \Sigma_{\bar{X}}^{\frac{1}{2}} )^{\frac{1}{2}} d\lambda(z) \Sigma_{\bar{X}}^{-\frac{1}{2}} \Sigma_{\bar{X}}^{\frac{1}{2}} = \Sigma_{\bar{X}}^{\frac{1}{2}} Id \Sigma_{\bar{X}}^{\frac{1}{2}}\\
\iff & \int_{\mathcal{Z}} (\Sigma_{\bar{X}}^{\frac{1}{2}} \Sigma_{X_z} \Sigma_{\bar{X}}^{\frac{1}{2}} )^{\frac{1}{2}} d\lambda(z) = \Sigma_{\bar{X}}.
\end{align*}
\end{proof}


\section{Appendix: Proof of Results in Section 4} \label{A:Section 4 Appendix}

\subsection{Proof of Lemma \ref{l:Mccann Interpolation}}

\begin{proof}
First, it follows from the triangle inequality that $$\mathcal{W}_2(\mu_0,\mu_1) \leq \mathcal{W}_2(\mu_0,\mu_s) + \mathcal{W}_2(\mu_s,\mu_t) + \mathcal{W}_2(\mu_t,\mu_1)$$ for any $s,t \in [0,1]$. 
On the other hand, it follows from the definition of $\mu_t$ that for $s,t \in [0,1]$
\begin{align*}
\mathcal{W}_2^2(\mu_s,\mu_t) & \leq \int_{(\mathbb{R}^d)^2} ||x-y||^2 d(\pi_s)_{\sharp}\gamma(x) \otimes d(\pi_t)_{\sharp}\gamma(y)\\
& =  \int_{(\mathbb{R}^d)^2} ||\pi_s(x,y)-\pi_t(x,y)||^2 d\gamma(x,y) \\
& =  \int_{(\mathbb{R}^d)^2} ||(1-s)x + sy - (1-t)x - ty||^2 d\gamma(x,y)\\
& = \int_{(\mathbb{R}^d)^2} ||(t-s)x - (t-s)y||^2 d\gamma(x,y)\\
& = |t-s|^2\int_{(\mathbb{R}^d)^2} ||x - y||^2 d\gamma(x,y) = |t-s|^2\mathcal{W}_2^2(\mu_0,\mu_1),
\end{align*}
where the first equation results from definition of $\mathcal{W}_2$. 
Given the above two facts,  we complete the proof by contradiction. Assume $\exists s,t \in [0,1]$ such that $\mathcal{W}_2(\mu_s,\mu_t) < |t-s|\mathcal{W}_2(\mu_0,\mu_1)$, then 
\begin{align*}
\mathcal{W}_2(\mu_0,\mu_1) & \leq \mathcal{W}_2(\mu_0,\mu_s) + \mathcal{W}_2(\mu_s,\mu_t) + \mathcal{W}_2(\mu_t,\mu_1)\\
& < |s|\mathcal{W}_2(\mu_0,\mu_1) + |t-s|\mathcal{W}_2(\mu_0,\mu_1) + |1-t|\mathcal{W}_2(\mu_t,\mu_1)\\
& = \mathcal{W}_2(\mu_0,\mu_1).
\end{align*}
\end{proof}

\subsection{Proof of Theorem \ref{th:Geodesics Characterization of the Pareto Frontier}}

\begin{proof}
First, we derive the inequality from the triangle inequality and the optimality of $\{T(\cdot,z)\}_z$: Let $f: \mathcal{X} \times \mathcal{Z} \rightarrow \mathcal{Y}$ be an arbitrary measurable function. It follows that
\begin{align*}
V & \leq (\int_{\mathcal{Z}} ||\mathbb{E}(Y|X,Z)_z - \overline{f(X,Z)}_z ||_2^2 d\lambda(z))^{\frac{1}{2}} \\
& \leq L(f(X,Z)) + (\int_{\mathcal{Z}} ||f(X,Z)_z - \overline{f(X,Z)}_z ||_2^2 d\lambda(z))^{\frac{1}{2}}\\
& \leq L(f(X,Z)) + (\int_{\mathcal{Z}} \mathcal{W}_2^2( \mathcal{L}(f(X,Z)_z) ,\overline{\mathcal{L}(f(X,Z)_z)}) d\lambda(z))^{\frac{1}{2}}\\
& = L(f(X,Z)) + (\frac{1}{2} \int_{\mathcal{Z}^2} \mathcal{W}_2^2 (\mathcal{L}(f(X,Z)_{z_1}),\mathcal{L}(f(X,Z)_{z_2})) d\lambda(z_1)d\lambda(z_2))^{\frac{1}{2}}\\
& = L(f(X,Z)) + \frac{1}{\sqrt{2}} D(f(X,Z)).
\end{align*}
Here, the penultimate equation results from the fact that, for any $\{\nu_z\}_z \subset \mathcal{P}_{2,ac}(\mathbb{R}^d)$,
\begin{equation}
\int_{\mathcal{Z}^2} \mathcal{W}_2^2(\nu_{z_1},\nu_{z_2}) d\lambda(z_1)d\lambda(z_2) = 2 \int_{\mathcal{Z}} \mathcal{W}_2^2(\nu_{z},\bar{\nu}) d\lambda(z),
\end{equation}
where $\bar{\nu}$ is the Wasserstein barycenter of $\{\nu_z\}_z$.  Now, we show that the lower bound is achieved if and only if $f(X,Z) = T(t)(\mathbb{E}(Y|X,Z),Z), t \in [0,1]$. Let $ t \in [0,1]$, $T_z := T(\cdot,z)$, and $\mu_z :=  \mathcal{L}(\mathbb{E}(Y|X,Z)_z)$. It follows from Lemma \ref{l:Mccann Interpolation} and Remark \ref{r:Linear Interpolation Formula for Geodesic Path} that:

\begin{align*}
V & = (\int_{\mathcal{Z}} \mathcal{W}_2^2(\mu_z,\bar{\mu}) d\lambda(z))^{\frac{1}{2}}\\
& \leq (\int_{\mathcal{Z}} \mathcal{W}_2^2(\mu_z,T_z(t)_{\sharp}\mu_z) d\lambda(z))^{\frac{1}{2}} + (\int_{\mathcal{Z}} \mathcal{W}_2^2(T_z(t)_{\sharp}\mu_z,\bar{\mu}) d\lambda(z))^{\frac{1}{2}}\\
& = (t^2\int_{\mathcal{Z}} \mathcal{W}_2^2(\mu_z,\bar{\mu}) d\lambda(z))^{\frac{1}{2}} + ((1-t)^2\int_{\mathcal{Z}} \mathcal{W}_2^2(\mu_z,\bar{\mu}) d\lambda(z))^{\frac{1}{2}}\\
& = tV + (1-t)V = V.
\end{align*}
Therefore, the second inequality is an equality where the first term is $L(T(t))$:

\begin{align*}
L(T(t)) & = (\int_{\mathcal{Z}} ||\mathbb{E}(Y|X,Z)_z - T_z(t)(\mathbb{E}(Y|X,Z)_z )||_2^2 d\lambda(z))^{\frac{1}{2}}\\
& = (\int_{\mathcal{Z}} \mathcal{W}_2^2(\mu_z,T_z(t)_{\sharp}\mu_z) d\lambda(z))^{\frac{1}{2}}\\
& = t(\int_{\mathcal{Z}} \mathcal{W}_2^2(\mu_z,\bar{\mu}) d\lambda(z))^{\frac{1}{2}} =  tV.
\end{align*}

For the second term, we claim that it equals $\frac{1}{\sqrt{2}} D(T(t))$. To see this, we need to first show $\overline{T_z(t)_{\sharp} \mu_z} = \bar{\mu}$. Indeed, if not, then $\int_{\mathcal{Z}} \mathcal{W}_2^2(T_z(t)_{\sharp} \mu_z,\overline{T_z(t)_{\sharp} \mu_z}) d\lambda(z)$ is strictly less than $\int_{\mathcal{Z}} \mathcal{W}_2^2(T_z(t)_{\sharp} \mu_z,\bar{\mu}) d\lambda(z)$ by the definition and uniqueness of $\overline{T_z(t)_{\sharp} \mu_z}$. It follows that

\begin{align*}
& (\int_{\mathcal{Z}} \mathcal{W}_2^2(\mu_z,\overline{T_z(t)_{\sharp} \mu_z}) d\lambda(z))^{\frac{1}{2}}\\
\leq & (\int_{\mathcal{Z}} \mathcal{W}_2^2(\mu_z,T_z(t)_{\sharp} \mu_z) d\lambda(z))^{\frac{1}{2}} + (\int_{\mathcal{Z}} \mathcal{W}_2^2(T_z(t)_{\sharp} \mu_z,\overline{T_z(t)_{\sharp} \mu_z}) d\lambda(z))^{\frac{1}{2}}\\
< & L(T(t)) + (\int_{\mathcal{Z}} \mathcal{W}_2^2(T_z(t)_{\sharp} \mu_z,\bar{\mu}) d\lambda(z))^{\frac{1}{2}}\\
= & (\int_{\mathcal{Z}} \mathcal{W}_2^2(\mu_z,\bar{\mu}) d\lambda(z))^{\frac{1}{2}},
\end{align*}
which contradicts the definition and uniqueness of $\bar{\mu}$. Therefore,

\begin{align*}
D(T(t)) & =( \int_{\mathcal{Z}^2} \mathcal{W}_2^2(T_{z_1}(t)_{\sharp}\mu_{z_1},T_{z_2}(t)_{\sharp}\mu_{z_2}) d\lambda(z_1)d\lambda(z_2))^{\frac{1}{2}}\\
& = (2\int_{\mathcal{Z}} \mathcal{W}_2^2(T_z(t)_{\sharp}\mu_z,\overline{T_z(t)_{\sharp}\mu_z}) d\lambda(z))^{\frac{1}{2}}\\
& = \sqrt{2}(\int_{\mathcal{Z}} \mathcal{W}_2^2(T_z(t)_{\sharp}\mu_z,\bar{\mu}) d\lambda(z))^{\frac{1}{2}}\\
& = \sqrt{2}((1-t)^2\int_{\mathcal{Z}} \mathcal{W}_2^2(\mu_z,\bar{\mu}) d\lambda(z))^{\frac{1}{2}}\\
& = \sqrt{2}(1-t)V.
\end{align*}
That completes the proof.
\end{proof}

\if 0
\subsection{Proof of Proposition \ref{prop: Constant Marginal Price of Fairness}}

\edit{\begin{proof}
Since $\mathcal{F} = L^2(\mathcal{X} \times \mathcal{Z},\mathcal{Y})$, we have $f^*(X,Z) = \mathbb{E}(Y|X,Z)$ and $\forall f \in \mathcal{F}, ||Y - f(X,Z)||_2 = ||Y - \mathbb{E}(Y|X,Z)||_2 + ||\mathbb{E}(Y|X,Z) - f(X,Z)||_2$. That implies that the denominator in the definition of price of fairness \eqref{eq:price of fairness definition} equals $||Y - \mathbb{E}(Y|X,Z)||_2$ whereas the numerator is equal to
\begin{equation}
||Y - \mathbb{E}(Y|X,Z)||_2 + \inf_{f \in \mathcal{F}} \{||f(X,Z) - \mathbb{E}(Y|X,Z)||_2 : D(f(X,Z)) \leq \alpha D(\mathbb{E}(Y|X,Z))\}
\end{equation}
Now, since $\mu_z \in \mathcal{P}_{ac}(\mathcal{M}_{\mathcal{Y}}) \ \lambda-a.e.$, there exists a measurable map $T'$, for example the Brenier's map, such that $T'(\mathbb{E}(Y|X_z), z) = f(X_z,z), \lambda$-a.e.. Therefore,
\begin{align*}
& \inf_{f \in \mathcal{F}} \{||f(X,Z) - \mathbb{E}(Y|X,Z)||_2 : D(f(X,Z)) \leq \alpha D(\mathbb{E}(Y|X,Z))\}\\
\geq & \inf_{T} \{||T(\mathbb{E}(Y|X,Z)) - \mathbb{E}(Y|X,Z)||_2 : D(T(\mathbb{E}(Y|X,Z))) \leq \alpha D(\mathbb{E}(Y|X,Z))\} \\
= & || (\alpha Id + (1-\alpha) T) \mathbb{E}(Y|X,Z)||_2 \\
= & ||\alpha \mathbb{E}(Y|X,Z) + (1-\alpha) \overline{\mathbb{E}(Y|X,Z)} )||_2\\
\geq & \inf_{f \in \mathcal{F}} \{||f(X,Z) - \mathbb{E}(Y|X,Z)||_2 : D(f(X,Z)) \leq \alpha D(\mathbb{E}(Y|X,Z))\}
\end{align*}
where the first equality follows from Theorem $\ref{th:Geodesics Characterization of the Pareto Frontier}$, the second equality from the definition of optimal transport map $T$, and the last inequality from the fact that $\mathbb{E}(Y|X,Z), \overline{\mathbb{E}(Y|X,Z)} \in L^2(\mathcal{X} \times \mathcal{Z},\mathcal{Y})$.
Finally, since $L(T') = ||T'(\mathbb{E}(Y|X,Z)) - \mathbb{E}(Y|X,Z)||_2$, it follows from the definitions of $L,D,V$ and Theorem $\ref{th:Geodesics Characterization of the Pareto Frontier}$ that
\begin{align*}
& \inf_{T} \{||T(\mathbb{E}(Y|X,Z)) - \mathbb{E}(Y|X,Z)||_2 : D(T(\mathbb{E}(Y|X,Z))) \leq \alpha D(\mathbb{E}(Y|X,Z))\}\\
= & \inf_{T} \{L(T) : D(T) \leq \alpha D(T(0))\}\\
= & L(T(1-\alpha))\\
= & (1-\alpha)L(T(0)) = (1-\alpha)V
\end{align*}
The proof is complete.
\end{proof}}
\fi


\section{Appendix: Proof of Results in Section 5} \label{A:Section 5 Appendix}

\subsection{Proof of $\tilde{X} \perp Z$ implies $\mathbb{E}(Y_z|\tilde{X}) = \mathbb{E}(Y|\tilde{X},Z)_z$}

\begin{proof}
Let $\tilde{X} \perp Z$ and assume for contradiction that $\mathbb{E}(Y_z|\tilde{X}) \neq \mathbb{E}(Y|\tilde{X},Z)_z$. Then, we have
\begin{align*}
||Y - \mathbb{E}(Y|\tilde{X},Z)||_2^2
& = \int_{\mathcal{Z}} ||Y_z - f^*(\tilde{X},Z)_z||_2^2 d\lambda\\
& = \int_{\mathcal{Z}} ||Y_z - f^*(\tilde{X},z)||_2^2 d\lambda\\
& > \int_{\mathcal{Z}} ||Y_z - \mathbb{E}(Y_z|\tilde{X})||_2^2 d\lambda\\
& = \int_{\mathcal{Z}} ||Y_z - \tilde{f}_z(\tilde{X})||_2^2 d\lambda\\
\end{align*}
where the first line follows from disintegration and the fact that there exists a measurable function $f^*: \mathcal{X} \times \mathcal{Z} \rightarrow \mathcal{Y}$ such that $f^*(\tilde{X},Z) = \mathbb{E}(Y|\tilde{X},Z)$, the second from $\tilde{X} \perp Z$, the third line follows from orthogonal projection property of conditional expectation and the assumption, and the forth from the fact that there exists a measurable function $\tilde{f}_z: \mathcal{X}  \rightarrow \mathcal{Y}$ such that $\tilde{f}_z(\tilde{X}) = \mathbb{E}(Y_z|\tilde{X})$.
Now, define $\tilde{f}: \mathcal{X} \times \mathcal{Z} \rightarrow \mathcal{Y}$ by $\tilde{f}(\cdot,z) := \tilde{f}_z$ for $\lambda$-a.e. $z \in \mathcal{Z}$. It follows that
\begin{align*}
||Y - \mathbb{E}(Y|\tilde{X},Z)||_2^2
& > \int_{\mathcal{Z}} ||Y_z - \tilde{f}_z(\tilde{X})||_2^2 d\lambda\\
& = \int_{\mathcal{Z}} ||Y_z - \tilde{f}(\tilde{X},z)||_2^2 d\lambda\\
& = ||Y - \tilde{f}(\tilde{X},Z)||_2^2\\
& = ||Y - \mathbb{E}(Y|\tilde{X},Z)||_2^2 + ||\mathbb{E}(Y|\tilde{X},Z) - \tilde{f}(\tilde{X},Z)||_2^2.
\end{align*}
That implies $ ||\mathbb{E}(Y|\tilde{X},Z) - \tilde{f}(\tilde{X},Z)||_2^2 < 0$, a contradiction. This completes the proof.
\end{proof}

\subsection{Proof of Lemma \ref{l:Finer Sigma-algebra Implies More Accurate Optimal Fair Learning}}

\begin{proof}
Let $\tilde{X}, \tilde{X}' \in \{ \tilde{X} \in \mathcal{D}_{\mathcal{X}}: \tilde{X} \perp Z\}$ satisfy $\sigma(\tilde{X}') \subset \sigma(\tilde{X})$. We have
\begin{align*}
& ||\mathbb{E}(Y|X,Z) - \mathbb{E}(\bar{Y}|\tilde{X},Z)||_2^2 - ||\mathbb{E}(Y|X,Z) - \mathbb{E}(\bar{Y}'|\tilde{X}',Z)||_2^2\\
= & ||\mathbb{E}(Y|X,Z) - \overline{\mathbb{E}(Y|\tilde{X},Z)}||_2^2 - ||\mathbb{E}(Y|X,Z) - \overline{\mathbb{E}(Y|\tilde{X}',Z)}||_2^2
\end{align*}
Notice that
\begin{equation*}
||\mathbb{E}(Y|X,Z) - \overline{\mathbb{E}(Y|\tilde{X},Z)}||_2^2 = ||\mathbb{E}(Y|X,Z) - \mathbb{E}(Y|\tilde{X},Z)||_2^2 + \int_{\mathcal{Z}} \mathcal{W}_2^2(\mu_z, \bar{\mu}) d\lambda
\end{equation*}
where $\mu_z := \mathcal{L}(\mathbb{E}(Y|\tilde{X},Z)_z)$ and $\bar{\mu} := \overline{\mathcal{L}(\mathbb{E}(Y|\tilde{X},Z))}$. Also, we define $\mu'_z$ and $\bar{\mu}'$ analogously to have
\begin{align*}
& ||\mathbb{E}(Y|X,Z) - \overline{\mathbb{E}(Y|\tilde{X}',Z)}||_2^2\\
= & ||\mathbb{E}(Y|X,Z) - \mathbb{E}(Y|\tilde{X}',Z)||_2^2 + \int_{\mathcal{Z}} \mathcal{W}_2^2(\mu'_z, \bar{\mu}') d\lambda\\
= & ||\mathbb{E}(Y|X,Z) - \mathbb{E}(Y|\tilde{X},Z)||_2^2 + ||\mathbb{E}(Y|\tilde{X},Z) - \mathbb{E}(Y|\tilde{X}',Z)||_2^2 + \int_{\mathcal{Z}} \mathcal{W}_2^2(\mu'_z, \bar{\mu}') d\lambda.
\end{align*}
Combining the above, we have
\begin{align*}
& ||\mathbb{E}(Y|X,Z) - \mathbb{E}(\bar{Y}|\tilde{X},Z)||_2^2 - ||\mathbb{E}(Y|X,Z) - \mathbb{E}(\bar{Y}'|\tilde{X}',Z)||_2^2\\
= & \int_{\mathcal{Z}} \mathcal{W}_2^2(\mu_z, \bar{\mu})d\lambda - \int_{\mathcal{Z}} \mathcal{W}_2^2(\mu'_z, \bar{\mu}')d\lambda - ||\mathbb{E}(Y|\tilde{X},Z) - \mathbb{E}(Y|\tilde{X}',Z)||_2^2.
\end{align*}
It remains to show that $\int_{\mathcal{Z}} \mathcal{W}_2^2(\mu_z, \bar{\mu})d\lambda < \int_{\mathcal{Z}} \mathcal{W}_2^2(\mu'_z, \bar{\mu}')d\lambda + ||\mathbb{E}(Y|\tilde{X},Z) - \mathbb{E}(Y|\tilde{X}',Z)||_2^2$. Indeed, assume for contradiction that $\int_{\mathcal{Z}} \mathcal{W}_2^2(\mu'_z, \bar{\mu}')d\lambda + ||\mathbb{E}(Y|\tilde{X},Z) - \mathbb{E}(Y|\tilde{X}',Z)||_2^2 \leq \int_{\mathcal{Z}} \mathcal{W}_2^2(\mu_z, \bar{\mu})d\lambda$, then we have
\begin{align*}
\int_{\mathcal{Z}} \mathcal{W}_2^2(\mu_z, \bar{\mu}')d\lambda & \leq ||\mathbb{E}(Y|\tilde{X},Z) - \mathbb{E}(Y|\tilde{X}',Z)||_2^2 + \int_{\mathcal{Z}} \mathcal{W}_2^2(\mu'_z, \bar{\mu}')d\lambda\\
& \leq \int_{\mathcal{Z}} \mathcal{W}_2^2(\mu_z, \bar{\mu})d\lambda.
\end{align*}
This contradicts the optimality and uniqueness of $\bar{\mu}$ by Lemma \ref{l:Optimal Fair $L^2$-Objective Supervised Learning Characterization}. Therefore, we prove by contradiction that $\int_{\mathcal{Z}} \mathcal{W}_2^2(\mu_z, \bar{\mu})d\lambda < \int_{\mathcal{Z}} \mathcal{W}_2^2(\mu'_z, \bar{\mu}')d\lambda + ||\mathbb{E}(Y|\tilde{X},Z) - \mathbb{E}(Y|\tilde{X}',Z)||_2^2$ and, hence, $$||\mathbb{E}(Y|X,Z) - \mathbb{E}(\bar{Y}|\tilde{X},Z)||_2^2 - ||\mathbb{E}(Y|X,Z) - \mathbb{E}(\bar{Y}'|\tilde{X}',Z)||_2^2 < 0.$$ That completes the proof.
\end{proof}

\subsection{Proof of Lemma \ref{l:X Bar Generates the Finest Sigma-algebra}}

\begin{proof}
We first prove $\sigma((\bar{X},Z)) = \sigma((X,Z))$. Since $\mathcal{L}(X_z) \subset \mathcal{P}_{2,ac}$, it follows from Lemma~\ref{l:Optimal Fair $L^2$-Objective Supervised Learning Characterization} that there exists a measurable map $T: \mathcal{X} \times \mathcal{Z} \rightarrow \mathcal{X}$ such that $T(X_z,z) = \bar{X}_z$ $\lambda$-a.e., where $\bar{X}$ denotes the Wasserstein barycenter of $\{X_z\}_z$. Define $T \otimes Id|_{\mathcal{Z}} : \mathcal{X} \times \mathcal{Z} \rightarrow \mathcal{X} \times \mathcal{Z}$, we have $T \otimes Id|_{\mathcal{Z}}$ is $\mathcal{X} \times \mathcal{Z}/\mathcal{X} \times \mathcal{Z}$-measurable and satisfies $T \otimes Id|_{\mathcal{Z}}((X,Z)) = (\bar{X},Z)$. That implies $\sigma((\bar{X},Z)) \subset \sigma((X,Z))$. Furthermore, since $\mathcal{L}(\bar{X}) \in \mathcal{P}_{2,ac}$, it follows from Brenier's theorem \cite{brenier1991polar} that there exists $T^{-1}(\cdot,z)$ such that $T^{-1}(\bar{X}_z,z) = X_z$. Therefore, we have $(T \otimes Id|_{\mathcal{Z}})^{-1} = T^{-1} \otimes Id|_{\mathcal{Z}}$ is $\mathcal{X} \times \mathcal{Z}/\mathcal{X} \times \mathcal{Z}$-measurable and satisfies $(T \otimes Id|_{\mathcal{Z}})^{-1}((\bar{X},Z)) = (X,Z)$.  That implies $\sigma((X,Z))  \subset \sigma((\bar{X},Z))$. That completes the proof of $\sigma((\bar{X},Z)) = \sigma((X,Z))$.
Now, we show $\sigma(\tilde{X}) \subset \sigma(\bar{X})$. From the construction of $\tilde{X}$, we have $\sigma((\tilde{X},Z)) \subset \sigma((\bar{X},Z)) = \sigma((X,Z))$.  But $\tilde{X} \perp Z$ implies that, for any $B_X \in \mathcal{B}_{\mathcal{X}}$, we can construct $B_X \times \mathcal{Z} \in \mathcal{B}_{\mathcal{X}} \otimes \mathcal{B}_{\mathcal{Z}}$. In addition, due to $\sigma((\tilde{X},Z)) \subset \sigma((\bar{X},Z))$, there exists $B_{XZ}' \in \mathcal{B}_{\mathcal{X}} \otimes \mathcal{B}_{\mathcal{Z}}$ such that $(\bar{X},Z)^{-1}(B_{XZ}') = (X,Z)^{-1}(B_X \times \mathcal{Z})$. Lastly, $\bar{X} \perp Z$ also implies that there exists $B_X' \in \mathcal{B}_{\mathcal{X}}$ satisfying $B_{XZ}' = B_X' \times \mathcal{Z}$. It follows that
\begin{equation}
\tilde{X}^{-1}(B_X) = (\tilde{X},Z)^{-1}(B_X \times \mathcal{Z}) = (X,Z)^{-1}(B_X' \times \mathcal{Z}) = X^{-1}(B_X')
\end{equation}
Since our choice of $B_X \in \mathcal{B}_{\mathcal{X}}$ is arbitrary, it follows that $\sigma(\tilde{X}) \subset \sigma(\bar{X})$. Finally, since our choice of $\tilde{X} \in \{\tilde{X} \in \mathcal{D}|_{\mathcal{X}} : \tilde{X} \perp Z \}$ is arbitrary, we are done.
\end{proof}

\if 0


\edit{\section{Appendix: Proof of Results in Section 6} \label{A:Section 6 Appendix}}

\subsection{\edit{Proof of Lemma \ref{l:Characterization of the Optimal Equalized Odds Data Representation}}}

\begin{proof} 
\edit{(Admissibility) Here, the proof of measurablility in fact helps construct the representation and thereby design an algorithm. To start, by the assumption that $Y$ is measurable with respect to $(X,Z)$, we have there exists a measurable function $$f' : \mathcal{X} \times \mathcal{Z} \rightarrow \mathcal{Y}$$ such that $f'(X,Z) = Y$. As a result, we now obtain the data set $$(X,Y,Z) = (X,f(X,Z),Z).$$ By construction, the map $$T_1 := (Id|_{\mathcal{X}}, f', Id|_{\mathcal{Z}}) : \mathcal{X} \times \mathcal{Z} \rightarrow \mathcal{X} \times \mathcal{Y} \times \mathcal{Z}$$ is a measurable map. Thereafter, we apply the disintegration theorem to obtain the conditional data set $\{(X_y,Z_y)\}_{y \in \mathcal{Y}}$ for each value of $y \in \mathcal{Y}$. Also, apply the disintegration theorem again to obtain the conditional marginals $\{(X_y)_z\}_{z \in \mathcal{Z}}$. Now, for each value of $y \in \mathcal{Y}$, we find the Wasserstein barycenter $\bar{X}_y$ via the optimal transport maps $T_y(\cdot,\cdot): \mathcal{X} \times \mathcal{Z} \rightarrow \mathcal{X}$. By construction, the map $$T_2 := \{T_y(\cdot,\cdot)\}_y : \mathcal{X} \times \mathcal{Y} \times \mathcal{Z} \rightarrow \mathcal{X} \times \mathcal{Y}$$ is also measurable. Since $T_1$ and $T_2$ are both measurable,  $T_2 \circ T_1$ is measurable. Hence, $$\bar{X} = Id|_{\mathcal{X}}(\{\bar{X}_y\}_y) =  Id|_{\mathcal{X}} \circ T_2 \circ T_1(X,Z)$$ is measurable with respect to $(X,Z)$. Also, since $\bar{X}_y \perp Z$ for $\mathbb{P} \circ Y^{-1}$-a.e. $y \in \mathcal{Y}$, $\bar{X}$ satisfies $\bar{X} \perp Z | Y$ and therefore belongs to the admissible set: $\bar{X} \in \mathcal{D}_{EO}$.}

\edit{(Optimality) Now, we prove optimality. To start, it follows from Lemma \ref{l:X Bar Generates the Finest Sigma-algebra} and the construction of $\sigma(\bar{X}_y)$ that $\sigma(\bar{X}_y) \supset \sigma(\tilde{X}_y)$ for all $\tilde{X}_y \in \mathcal{D}_y$. Here, the admissible set is defined as $$\mathcal{D}_y := \{\tilde{X}_y: \tilde{X}_y = T_y(X_y,Z_y) \perp Z_y \}$$ and $T_y(\cdot,\cdot): \mathcal{X} \times \mathcal{Z} \rightarrow \mathcal{X}$ are Borel measurable maps. Since $\sigma(\tilde{X}_y) \subset \sigma(\bar{X}_y) \implies \var(\tilde{X}_y) \leq \var(\bar{X}_y)$ for $\mathbb{P} \circ Y^{-1}$-a.e. $y \in \mathcal{Y}$, we have \begin{align*}
\var (\tilde{X}) & = \int_{\mathcal{Y}} \var(\{\tilde{X}_y\}_y) d\mathbb{P} \circ Y^{-1}\\
& \leq \int_{\mathcal{Y}} \var(\{\bar{X}_y\}_y) d\mathbb{P} \circ Y^{-1}\\
& = \var(\bar{X})
\end{align*}
for all $\tilde{X} = Id|_{\mathcal{X}}(\{\tilde{X}_y\}_y)$ with $\tilde{X}_y \in \mathcal{D}_y$. That is, $\var (\tilde{X}) \leq  \var(\bar{X})$ for all $\tilde{X} \in Id_{\mathcal{X}}(\{\mathcal{D}_y\}_y)$. Finally, since for all $\tilde{X} \in \mathcal{D}_{EO}$, we have $(\tilde{X},Y) = \{\tilde{X}_y\}_y \in \{\mathcal{D}_y\}_y
$. That implies $\mathcal{D}_{EO} \subset Id|_{\mathcal{X}}(\{\mathcal{D}\}_y)$ and hence $\var(\tilde{X}) \leq \var(\bar{X})$ for all $\tilde{X} \in \mathcal{D}_{EO}$. If set inclusion forms an order between $\{\bar{X}_y\}_y$ and $\tilde{X}$ for all $\tilde{X} \in \mathcal{D}_{EO}$, it follows from Remark \ref{r:Finest Sigma Algebra vs. Most Variance} that $$\sigma(\tilde{X}) \subset \sigma(\bar{X})$$ for all $\tilde{X} \in \mathcal{D}_{EO}$, which further implies that $$||Y - \mathbb{E}(Y|\bar{X})||_2^2 \leq ||Y - \mathbb{E}(Y|\tilde{X})||_2^2$$ That proves the optimality of $\bar{X}$. We are done.}
\end{proof}
\fi


\vskip 0.2in
\bibliography{References_Fairness_Barycenter}

\begin{thebibliography}{45}
\providecommand{\natexlab}[1]{#1}
\providecommand{\url}[1]{\texttt{#1}}
\expandafter\ifx\csname urlstyle\endcsname\relax
  \providecommand{\doi}[1]{doi: #1}\else
  \providecommand{\doi}{doi: \begingroup \urlstyle{rm}\Url}\fi

\bibitem[Adamson(2011)]{adamson2011ricci}
B.~L. Adamson.
\newblock Ricci v. {D}e{S}tefano: Procedural {A}ctivism (?).
\newblock \emph{National Black Law Journal (University of California, Los
  Angeles)}, 24:\penalty0 11--01, 2011.

\bibitem[Agueh and Carlier(2011)]{agueh2011barycenters}
M.~Agueh and G.~Carlier.
\newblock Barycenters in the {W}asserstein space.
\newblock \emph{SIAM Journal on Mathematical Analysis}, 43\penalty0
  (2):\penalty0 904--924, 2011.

\bibitem[Altschuler and Boix-Adsera(2022)]{altschuler2022wasserstein}
J.~M. Altschuler and E.~Boix-Adsera.
\newblock Wasserstein barycenters are {NP}-hard to compute.
\newblock \emph{SIAM Journal on Mathematics of Data Science}, 4\penalty0
  (1):\penalty0 179--203, 2022.

\bibitem[{\'A}lvarez-Esteban et~al.(2016){\'A}lvarez-Esteban, Del~Barrio,
  Cuesta-Albertos, and Matr{\'a}n]{alvarez2016fixed}
P.~C. {\'A}lvarez-Esteban, E.~Del~Barrio, J.~Cuesta-Albertos, and
  C.~Matr{\'a}n.
\newblock A fixed-point approach to barycenters in {W}asserstein space.
\newblock \emph{Journal of Mathematical Analysis and Applications},
  441\penalty0 (2):\penalty0 744--762, 2016.

\bibitem[Angwin et~al.(2022)Angwin, Larson, Mattu, and
  Kirchner]{angwin2022machine}
J.~Angwin, J.~Larson, S.~Mattu, and L.~Kirchner.
\newblock Machine {B}ias.
\newblock In \emph{Ethics of data and analytics}, pages 254--264. Auerbach
  Publications, 2022.

\bibitem[Aristotle et~al.(1984)]{aristotle1984complete}
J.~B. Aristotle et~al.
\newblock \emph{The complete works of Aristotle}, volume~2.
\newblock Princeton University Press Princeton, 1984.

\bibitem[Asuncion and Newman(2007)]{asuncion2007uci}
A.~Asuncion and D.~Newman.
\newblock {UCI} machine learning repository, 2007.

\bibitem[Berk et~al.(2017)Berk, Heidari, Jabbari, Joseph, Kearns, Morgenstern,
  Neel, and Roth]{berk2017convex}
R.~Berk, H.~Heidari, S.~Jabbari, M.~Joseph, M.~Kearns, J.~Morgenstern, S.~Neel,
  and A.~Roth.
\newblock A convex framework for fair regression.
\newblock \emph{arXiv preprint arXiv:1706.02409}, 2017.

\bibitem[Bhatia(2009)]{bhatia2009positive}
R.~Bhatia.
\newblock \emph{Positive Definite Matrices}.
\newblock Princeton University Press, 2009.

\bibitem[Blumrosen(1972)]{blumrosen1972strangers}
A.~W. Blumrosen.
\newblock Strangers in paradise: Griggs v. {D}uke {P}ower {C}o. and the concept
  of employment discrimination.
\newblock \emph{Mich. L. Rev.}, 71:\penalty0 59, 1972.

\bibitem[Brenier(1991)]{brenier1991polar}
Y.~Brenier.
\newblock Polar factorization and monotone rearrangement of vector-valued
  functions.
\newblock \emph{Communications on pure and applied mathematics}, 44\penalty0
  (4):\penalty0 375--417, 1991.

\bibitem[Calders and {\v{Z}}liobait{\.e}(2013)]{calders2013unbiased}
T.~Calders and I.~{\v{Z}}liobait{\.e}.
\newblock Why unbiased computational processes can lead to discriminative
  decision procedures.
\newblock In \emph{Discrimination and Privacy in the Information Society: Data
  mining and profiling in large databases}, pages 43--57. Springer, 2013.

\bibitem[Calmon et~al.(2017)Calmon, Wei, Vinzamuri, Natesan~Ramamurthy, and
  Varshney]{calmon2017optimized}
F.~Calmon, D.~Wei, B.~Vinzamuri, K.~Natesan~Ramamurthy, and K.~R. Varshney.
\newblock Optimized pre-processing for discrimination prevention.
\newblock \emph{Advances in neural information processing systems}, 30, 2017.

\bibitem[Cao and Yang(2015)]{cao2015towards}
Y.~Cao and J.~Yang.
\newblock Towards making systems forget with machine unlearning.
\newblock In \emph{2015 IEEE symposium on security and privacy}, pages
  463--480. IEEE, 2015.

\bibitem[Carlier and Ekeland(2010)]{carlier2010matching}
G.~Carlier and I.~Ekeland.
\newblock Matching for teams.
\newblock \emph{Economic theory}, 42:\penalty0 397--418, 2010.

\bibitem[Chouldechova and Roth(2018)]{chouldechova2018frontiers}
A.~Chouldechova and A.~Roth.
\newblock The frontiers of fairness in machine learning.
\newblock \emph{arXiv preprint arXiv:1810.08810}, 2018.

\bibitem[Christian(2020)]{christian2020alignment}
B.~Christian.
\newblock \emph{The alignment problem: Machine learning and human values}.
\newblock WW Norton \& Company, 2020.

\bibitem[Chzhen et~al.(2020)Chzhen, Denis, Hebiri, Oneto, and
  Pontil]{chzhen2020fair}
E.~Chzhen, C.~Denis, M.~Hebiri, L.~Oneto, and M.~Pontil.
\newblock Fair regression with {W}asserstein barycenters.
\newblock \emph{Advances in Neural Information Processing Systems},
  33:\penalty0 7321--7331, 2020.

\bibitem[Cooper et~al.(1997)Cooper, Hutchinson, et~al.]{cooper1997plato}
J.~M. Cooper, D.~S. Hutchinson, et~al.
\newblock \emph{Plato: complete works}.
\newblock Hackett Publishing, 1997.

\bibitem[Cuesta-Albertos et~al.(1996)Cuesta-Albertos, Matr{\'a}n-Bea, and
  Tuero-Diaz]{cuesta1996lower}
J.~A. Cuesta-Albertos, C.~Matr{\'a}n-Bea, and A.~Tuero-Diaz.
\newblock On lower bounds for the l2-{W}asserstein metric in a {H}ilbert space.
\newblock \emph{Journal of Theoretical Probability}, 9\penalty0 (2):\penalty0
  263--283, 1996.

\bibitem[Dwork et~al.(2012)Dwork, Hardt, Pitassi, Reingold, and
  Zemel]{dwork2012fairness}
C.~Dwork, M.~Hardt, T.~Pitassi, O.~Reingold, and R.~Zemel.
\newblock Fairness through awareness.
\newblock In \emph{Proceedings of the 3rd innovations in theoretical computer
  science conference}, pages 214--226, 2012.

\bibitem[Ekeland(2010)]{ekeland2010existence}
I.~Ekeland.
\newblock Existence, uniqueness and efficiency of equilibrium in hedonic
  markets with multidimensional types.
\newblock \emph{Economic Theory}, 42:\penalty0 275--315, 2010.

\bibitem[Feldman et~al.(2015)Feldman, Friedler, Moeller, Scheidegger, and
  Venkatasubramanian]{feldman2015certifying}
M.~Feldman, S.~A. Friedler, J.~Moeller, C.~Scheidegger, and
  S.~Venkatasubramanian.
\newblock Certifying and removing disparate impact.
\newblock In \emph{proceedings of the 21th ACM SIGKDD international conference
  on knowledge discovery and data mining}, pages 259--268, 2015.

\bibitem[Gouic et~al.(2020)Gouic, Loubes, and Rigollet]{gouic2020projection}
T.~L. Gouic, J.-M. Loubes, and P.~Rigollet.
\newblock Projection to fairness in statistical learning.
\newblock \emph{arXiv preprint arXiv:2005.11720}, 2020.

\bibitem[Hajian and Domingo-Ferrer(2012)]{hajian2012methodology}
S.~Hajian and J.~Domingo-Ferrer.
\newblock A methodology for direct and indirect discrimination prevention in
  data mining.
\newblock \emph{IEEE transactions on knowledge and data engineering},
  25\penalty0 (7):\penalty0 1445--1459, 2012.

\bibitem[Hardt et~al.(2016)Hardt, Price, and Srebro]{hardt2016equality}
M.~Hardt, E.~Price, and N.~Srebro.
\newblock Equality of opportunity in supervised learning.
\newblock \emph{Advances in neural information processing systems}, 29, 2016.

\bibitem[Hu and Chen(2018)]{hu2018short}
L.~Hu and Y.~Chen.
\newblock A short-term intervention for long-term fairness in the labor market.
\newblock In \emph{Proceedings of the 2018 World Wide Web Conference}, pages
  1389--1398, 2018.

\bibitem[Jiang et~al.(2020)Jiang, Pacchiano, Stepleton, Jiang, and
  Chiappa]{jiang2020wasserstein}
R.~Jiang, A.~Pacchiano, T.~Stepleton, H.~Jiang, and S.~Chiappa.
\newblock {W}asserstein fair classification.
\newblock In \emph{Uncertainty in artificial intelligence}, pages 862--872.
  PMLR, 2020.

\bibitem[Kamiran and Calders(2012)]{kamiran2012data}
F.~Kamiran and T.~Calders.
\newblock Data preprocessing techniques for classification without
  discrimination.
\newblock \emph{Knowledge and information systems}, 33\penalty0 (1):\penalty0
  1--33, 2012.

\bibitem[Kim and Pass(2017)]{kim2017wasserstein}
Y.-H. Kim and B.~Pass.
\newblock {W}asserstein barycenters over {R}iemannian manifolds.
\newblock \emph{Advances in Mathematics}, 307:\penalty0 640--683, 2017.

\bibitem[Le~Gouic and Loubes(2017)]{le2017existence}
T.~Le~Gouic and J.-M. Loubes.
\newblock Existence and consistency of {W}asserstein barycenters.
\newblock \emph{Probability Theory and Related Fields}, 168:\penalty0 901--917,
  2017.

\bibitem[McCann(1997)]{mccann1997convexity}
R.~J. McCann.
\newblock A convexity principle for interacting gases.
\newblock \emph{Advances in mathematics}, 128\penalty0 (1):\penalty0 153--179,
  1997.

\bibitem[of~the President(2014)]{executive2014big}
E.~O. of~the President.
\newblock Big data: Seizing opportunities, preserving values.
\newblock \emph{President PACT report}, 2014.

\bibitem[Pass(2013)]{pass2013optimal}
B.~Pass.
\newblock Optimal transportation with infinitely many marginals.
\newblock \emph{Journal of Functional Analysis}, 264\penalty0 (4):\penalty0
  947--963, 2013.

\bibitem[Redmond and Baveja(2002)]{redmond2002data}
M.~Redmond and A.~Baveja.
\newblock A data-driven software tool for enabling cooperative information
  sharing among police departments.
\newblock \emph{European Journal of Operational Research}, 141\penalty0
  (3):\penalty0 660--678, 2002.

\bibitem[Santambrogio(2015)]{santambrogio2015optimal}
F.~Santambrogio.
\newblock Optimal transport for applied mathematicians.
\newblock \emph{Birk{\"a}user, NY}, 55\penalty0 (58-63):\penalty0 94, 2015.

\bibitem[Silvia et~al.(2020)Silvia, Ray, Tom, Aldo, Heinrich, and
  John]{silvia2020general}
C.~Silvia, J.~Ray, S.~Tom, P.~Aldo, J.~Heinrich, and A.~John.
\newblock A general approach to fairness with optimal transport.
\newblock In \emph{Proceedings of the AAAI Conference on Artificial
  Intelligence}, volume 34(04), pages 3633--3640, 2020.

\bibitem[Sweeney(2013)]{sweeney2013discrimination}
L.~Sweeney.
\newblock Discrimination in online ad delivery: Google ads, black names and
  white names, racial discrimination, and click advertising.
\newblock \emph{Queue}, 11\penalty0 (3):\penalty0 10--29, 2013.

\bibitem[Tabak and Trigila(2018)]{tabak2018explanation}
E.~G. Tabak and G.~Trigila.
\newblock Explanation of variability and removal of confounding factors from
  data through optimal transport.
\newblock \emph{Communications on Pure and Applied Mathematics}, 71\penalty0
  (1):\penalty0 163--199, 2018.

\bibitem[Villani(2021)]{villani2021topics}
C.~Villani.
\newblock \emph{Topics in optimal transportation}, volume~58.
\newblock American Mathematical Soc., 2021.

\bibitem[Villani et~al.(2009)]{villani2009optimal}
C.~Villani et~al.
\newblock \emph{Optimal transport: old and new}, volume 338.
\newblock Springer, 2009.

\bibitem[Wightman(1998)]{wightman1998lsac}
L.~F. Wightman.
\newblock {LSAC} {N}ational {L}ongitudinal {B}ar {P}assage {S}tudy. {LSAC}
  {R}esearch {R}eport {S}eries.
\newblock 1998.

\bibitem[Zafar et~al.(2017)Zafar, Valera, Gomez~Rodriguez, and
  Gummadi]{zafar2017fairness}
M.~B. Zafar, I.~Valera, M.~Gomez~Rodriguez, and K.~P. Gummadi.
\newblock Fairness beyond disparate treatment \& disparate impact: Learning
  classification without disparate mistreatment.
\newblock In \emph{Proceedings of the 26th international conference on world
  wide web}, pages 1171--1180, 2017.

\bibitem[Zemel et~al.(2013)Zemel, Wu, Swersky, Pitassi, and
  Dwork]{zemel2013learning}
R.~Zemel, Y.~Wu, K.~Swersky, T.~Pitassi, and C.~Dwork.
\newblock Learning fair representations.
\newblock In \emph{International conference on machine learning}, pages
  325--333. PMLR, 2013.

\bibitem[Zhou et~al.(2021)Zhou, Zhang, Nair, Singhal, Chen, and
  Sudjianto]{zhou2021bias}
N.~Zhou, Z.~Zhang, V.~N. Nair, H.~Singhal, J.~Chen, and A.~Sudjianto.
\newblock Bias, {F}airness, and {A}ccountability with {AI} and {ML}
  {A}lgorithms.
\newblock \emph{arXiv preprint arXiv:2105.06558}, 2021.

\end{thebibliography}

\end{document}